\documentclass{article}

\PassOptionsToPackage{numbers, sort&compress}{natbib}
\usepackage[preprint]{neurips_2025}

\usepackage[utf8]{inputenc} 
\usepackage[T1]{fontenc}  
\usepackage{hyperref}   
\usepackage{url}       
\usepackage{booktabs}      
\usepackage{amsfonts}     
\usepackage{nicefrac}  
\usepackage{microtype}      
\usepackage{xcolor}        
\usepackage{amsmath}
\usepackage[capitalize,nameinlink]{cleveref}
\usepackage{amssymb}
\usepackage{amsthm}
\usepackage{algorithm}
\usepackage{algpseudocode}
\usepackage{tikz}
\usetikzlibrary{positioning, automata}
\usepackage{subcaption}
\usetikzlibrary{arrows}
\usepackage{enumerate}
\usepackage{wrapfig}

\hypersetup{
    colorlinks=false,
    linkcolor=blue,
    filecolor=magenta,      
    urlcolor=cyan,
}

\title{Sequential Strategic Classification with Multi-Stage Selective Classifiers}

\author{%
  \textbf{Ziyuan Huang}\thanks{Equal contribution} \quad 
  \textbf{Lina Alkarmi}\footnotemark[1] \quad 
  Mingyan Liu \\
  Department of Electrical and Computer Engineering\\
  University of Michigan\\
  Ann Arbor, MI 48105 \\
  \texttt{\{ziyuanh, lalkarmi, mingyan\}@umich.edu}
}

\renewcommand{\vec}[1]{\boldsymbol{#1}}

\newcommand{\iprod}[2]{\langle #1, #2\rangle}

\DeclareMathOperator*{\argmax}{arg\,max}

\newtheorem{theorem}{Theorem}[section]
\newtheorem{proposition}{Proposition}[section]

\newtheorem{lemma}{Lemma}[section]

\newtheorem{definition}{Definition}[section]
\newtheorem{assumption}{Assumption}[section]
\newtheorem*{rep@theorem}{\rep@title}
\newcommand{\newreptheorem}[2]{%
\newenvironment{rep#1}[1]{%
 \def\rep@title{#2 \ref{##1}}%
 \begin{rep@theorem}}%
 {\end{rep@theorem}}}

\crefname{assumption}{Assumption}{assumptions}
\crefname{condition}{Condition}{conditions}
\crefname{definition}{Definition}{definition}

\begin{document}

\newreptheorem{theorem}{Theorem}
\newreptheorem{proposition}{Proposition}
\newreptheorem{corollary}{Corollary}
\newreptheorem{lemma}{Lemma}
\newreptheorem{condition}{Condition}
\newreptheorem{assumption}{Assumption}
\newreptheorem{definition}{Definition}

\maketitle

\begin{abstract}
Strategic classification studies the problem where self-interested individuals or agents manipulate their response to obtain favorable decision outcomes made by classifiers, typically turning to dishonest actions when they are less costly than genuine efforts. Prior works have demonstrated a fundamental inability to get out of this conundrum by only focusing on the design of a classifier. 
{We note that prior work also heavily focuses on either one-shot settings or repeated interaction with the same classifier. Real-world decision making is often {\em multi-stage}, involving a sequence of potentially different classifiers as an agent progresses.} This paper introduces a sequential, {multi-stage} model of strategic classification, by capturing how agents adapt their behavior---through {\em improvement} actions (enhancing both observable features and true attributes) and {\em gaming} actions (enhancing only observable features)---over multiple levels of classification with increasing difficulty as well as reward. For each level, we adopt a {\em selective classifier} that can abstain from making a prediction at low confidence. Consequently, a positive (resp. negative) outcome leads to promotion (resp. demotion) of the agent to the next higher (resp. lower) level, while abstention keeps the agent at the same level. We fully characterize the agent's optimal instantaneous action under selective classifiers and compare the long-term properties and utility of the agent repeatedly following an optimal myopic policy of either {\em no-improvement} (never choose the improvement action) or {\em no-gaming} (never choose the gaming action). We further examine design principles over the sequence of classifiers that yield higher long-term utility for the latter policy, thereby effectively incentivizing genuine effort in the long run.
\end{abstract}

\section{Introduction}
\label{intro}

Whenever information is available about a decision process (e.g., what type of input it uses), individuals subject to the decision can respond strategically to maximize their chance of receiving a favorable decision outcome through both honest and dishonest means. For example, a student may focus all their effort on a particular chapter if they know that chapter carries a disproportionate weight on an exam, or a job-seeker may artificially alter their resume to meet a hiring criterion.  The same problem similarly exists in algorithmic decision making, fueled by advances in machine learning and becoming prevalent in a multitude of domains such as loan approval, admissions, and hiring \cite{hardt_strategic_2015,haghtalab_maximizing_2020}. 
This has given rise to the growing field of strategic classification that studies utility-maximizing agents who respond strategically to a (decision-making) classifier \cite{hardt_strategic_2015,miller_strategic_2020}. Such an agent is often characterized by an {\em observable feature}, a true underlying {\em unobservable attribute} which is the agent's private information, and the option to take actions to {\em improve} both their observable feature and true attribute \cite{kleinberg_how_2019,jin_incentive_2022,milli_social_2019,ahmadi_classification_2022}, or {\em game}, which only improves their observable feature \cite{milli_social_2019,jin_collaboration_2023,miller_strategic_2020}. As a one-shot decision, the agent takes the least costly action to get their feature to cross a decision boundary \cite{hardt_strategic_2015,jin_incentive_2022,chen2020strategic,braverman_role_2020,jin_collaboration_2023}, resulting in the fundamental challenge that incentivizing improvement effort in the case of higher improvement costs is impossible \cite{milli_social_2019,jin_network_2021} without external instruments such as recourse  \cite{chen2020strategic} and reward/subsidy \cite{jin_incentive_2022,jin_collaboration_2023}.  

We observe that in practice decision making often involves multiple stages as an agent progresses. For example, a student may take two midterm exams and a final exam, and how much they prepare for the midterms may well impact their posture heading into the final. {The coupled nature of these stages} introduces a type of {\em inter-temporal} incentive that does not exist in one-shot settings.
Specifically, we introduce a novel sequential model of {{\em multi-stage}} strategic classification that captures how agents behave as they move through a sequence of classifiers with increasing difficulty as well as reward. Each level is modeled as a {\em selective classifier} ~\cite{geifman_selective_2017,lee_fair_2021,ortner_learning_2016}, that can abstain from making a prediction when confidence is low. A positive (resp. negative) decision leads to promotion (resp. demotion) of the agent to the next higher (resp. lower) level, while abstention keeps the agent at the same level. 
The agent's attribute evolves over time as a result of not only the decisions they make but also a depreciation effect that reflects the degradation of skills or knowledge without sustained effort, a common feature in the dynamic decision-making literature~\cite{dohmen_reinforcement_2023,noauthor_voluntary_nodate,alston_dynamics_1998}.  While there have been prior studies on long-term properties restricted to {either improvement or gaming action \cite{xie_automating_2024,zrnic_who_2022}, our work explicitly models both.} 
{Our model also differs fundamentally from \cite{perdomo_performative_2021, zrnic_who_2022}, where agent-classifier interactions lead both to jointly evolve, whereas we are interested in designing a sequence of classifiers to induce the agent to advance to the highest level.} 
A more comprehensive literature review is provided in Appendix \ref{app:related-work}. 

We fully characterize the agent's optimal instantaneous action and examine the utility of a myopic policy where the agent either never takes the improvement action (NI) or never takes the gaming action (NG) but otherwise best responds instantaneously. We further examine designs over the sequence of classifiers that yields higher long-term utility for the latter, thereby incentivizes genuine effort, even when gaming incurs a lower unit cost. 
{Note that even with attribute depreciation, gaming may still be preferred if it's cheaper. Indeed, there exist decision rules under which gaming is the dominant policy, resulting in completely unqualified individuals reaching very high levels.} 

{To the best of our knowledge, this is not only the first work that examines strategic responses under selective classifiers, but also the first multi-stage model that captures both improvement and gaming actions.} Our main contributions are summarized as follows. 
\begin{enumerate}
    \item We formulate a sequential strategic classification problem with a progression of selective classifiers with increasing difficulty and reward. This is cast as a Markov decision process (MDP) with continuous state and action spaces.
    \item We fully characterize the agent's optimal instantaneous response to selective classifiers and show how the problem reduces to a countable-state Markov chain when the agent acts myopically (taking the instantaneous optimal action at each time). 
    \item We analyze a set of long-term properties for the two types of myopic agents (NG and NI), including their steady-state level distribution, long-term average attribute, and utility. 
    \item We derive design principles of the classifier sequence such that NI is dominated by NG as a strategy in terms of long-term utility, thereby showing that improvement can be incentivized in the long run even when improvement actions are more costly. 

\end{enumerate}

\section{Problem Formulation}
\label{sec:problem}

We consider a self-interested agent who is faced with a progression of ``levels'' with increasing reward associated with each. Going up/down the levels signifies promotion/demotion and is determined by a sequence of ``tests'' modeled as classifiers. To pass a test in order to be promoted or not be demoted, the agent has the option to make a costly effort to improve its preparedness, or to cheat (game the system), which is less costly. Our principal interest is in examining whether an agent can be incentivized to stay honest and obtain high rewards, and if so, how best to design the sequence of classifiers. We cast this as a sequential decision problem. Below, we detail each of its components.

\paragraph{The agent} Consider a discrete-time process with time steps indexed by $t\in\mathbb{N}$. An agent is defined by the tuple $(\vec{x}_t,y_t)\in\mathcal{X}\times\mathcal{Y}=\mathbb{R}^d_{\geq0}\times\mathbb{R}_{\geq0}$, where $\vec{x}_t$ (resp. $y_t$) is called the \emph{(pre-response) attribute} (resp. \emph{qualification}) of the agent \emph{at the beginning of} time step $t$. An agent can exert an effort $\vec{a}_t:=[\vec{a}^+_t;\vec{a}^-_t]\in\mathcal{A}=\mathbb{R}^{2d}_{\geq0}$ in order to secure a favorable test/classifier outcome, where $\vec{a}^+_t$ (resp. $\vec{a}^-_t$) denotes an \emph{improvement} (resp. \emph{gaming}) effort; they each incurs a cost $\iprod{\vec{c}^+}{\vec{a}^+_t}$ and $\iprod{\vec{c}^-}{\vec{a}^-_t}$ where $\vec{c}^+,\vec{c}^-\in\mathbb{R}^d_{\geq0}$. Denote by $\vec{c}:=[\vec{c}^+;\vec{c}^-]$ the unit cost vector. We will adopt the common assumption that cheating is less costly than improvements, given as follows.
\begin{assumption}\label{assum:cost}
$\vec{0} < \vec{c^-} < \vec{c^+} < \infty$.
\end{assumption}
Only the improvement action leads to an inherent change to the agent's attribute. We denote $\tilde{\vec{x}}_t:=\vec{x}_t+\vec{a}^+_t$ (resp. $\tilde{y}_t$) as the \emph{post-response attribute} (resp. \emph{qualification}) of the agent at \emph{the end of} time step $t$. In contrast, the \emph{(observable) feature} $\vec{z}_t:=\vec{x}_t+\vec{a}^+_t+\vec{a}^-_t\in\mathcal{X}$ is causal to both improvement and gaming actions, reflecting their indistinguishable effect on the outcome -- the decision maker can only observe the feature, while pre- and post-response attributes and qualification are information/signals private to the agent. Additionally, we will assume that the agent's attribute depreciates over time in the form of $\vec{x}_{t+1}=\gamma\tilde{\vec{x}}_t$, $\gamma\in(0,1)$, $\gamma$ being the retention factor. This models the fact that learned skills can degrade and more effort is required to keep them up to date.

\paragraph{The test/classifier} We will consider a ternary classifier, where in addition to the common binary decisions, it can also abstain from deciding in highly uncertain cases. This abstention option can reduce decision error, albeit at the expense of lower coverage~\cite{ortner_learning_2016,franc_optimal_nodate}. It turns out this classifier choice fits very nicely our sequential setting, where an agent can be promoted, demoted, or stay at the same level in accordance with the ternary decision outcome. Consider a sequence of $I\geq2$ levels indexed by $i_t\in[I]$, each associated with a test/classifier of the same index, and a reward of $R_i:=r\cdot i$, $r>0$. Let $\hat{y}_t:=\iprod{\vec{\theta}}{\vec{z}_t}$ denote the agent's estimated post-response qualification (not to be confused with $\tilde{y}_t$, its true post-response qualification), computed as a linear projection with non-negative parameters $\vec{\theta}\in\mathbb{R}_{\geq0}$ ($\vec{\theta}\neq\vec{0}$) -- this captures the underlying algorithm used by the classifier to assess the quality of the agent based on its observable features (e.g., the last linear layer of a neural network), where $\vec{\theta}$ is typically obtained from training.

The decision of the classifier is given by (i) a sigmoid function $\sigma:\mathbb{R}\to[0,1]$ where $\sigma_i:=\sigma(\alpha(\hat{y}_t-\mu_i))$ estimates the probability of the agent being qualified for the next level, and (ii) an abstention function $h(\sigma_i):[0,1]\to[0,1]$ that denotes the probability the classifier will not make a decision. Here $\mu_i\geq0$ is a threshold and $\alpha>0$ controls the sensitivity of the estimate: large $\alpha$ represents high confidence (a near-step decision boundary), while small $\alpha$ represents lower confidence and softer estimate. These two functions are assumed to satisfy a number of natural monotonicity properties, including the intuition that abstention is more likely with increased uncertainty (when the estimated probability is nearing $1/2$); these are stated formally below.
\begin{assumption}\label{assum:sigma-h}
\emph{(i)} $\sigma$ is analytic, strictly increasing, satisfying $\lim_{x\to\infty}\sigma(x)=1$, and inversion symmetry at $x=0$, i.e., $\sigma(x)=1-\sigma(-x)$. \emph{(ii)} $h$ is analytic, satisfying $h(0)=0$, symmetric around $1/2$, i.e., $h(\sigma)=h(1-\sigma)$, and strictly increasing on $[0,1/2)$.
\end{assumption}
These assumptions capture most scenarios of interest, including the logistic function $\sigma(x)=\frac{1}{1+e^{-x}}$, which is extensively used in machine learning applications. A reasonable choice of the abstention function is the scaled entropy $h(\sigma)=\beta\cdot(-\sigma\ln\sigma-(1-\sigma)\ln(1-\sigma))$ $(0<\beta\leq\ln 2)$, where higher entropy (i.e., uncertainty) results in a higher abstention probability.

Using these two functions, the classifier's decision is expressed as follows: Formally, the (random) decision outcome is determined by
\begin{equation}\label{eq:decision_outcome}
    D_i(\hat{y}) =
    \begin{cases}
    +1 \;\; \text{(pass/promote)} & \text{with probability } \big(1 - h(\sigma_i)\big) \cdot \sigma_i \\
    -1 \;\; \text{(fail/demote)} & \text{with probability } \big(1 - h(\sigma_i)\big) \cdot \big(1 - \sigma_i\big) \\
    0 \;\;\;\; \text{(abstain/no change)} & \text{with probability } h(\sigma_i)
    \end{cases}.
\end{equation}

\paragraph{The temporal dynamics/transition} We assume the agent starts at the first level: $i_0=1$, with some initial $(\vec{x_0},y_0)$. At each time $t$, the agent is at level $i_t$ with $(\vec{x_t},y_t)$, takes an action $\vec{a}_t$, which results in an observable feature $z_t$, and is subject to the classifier $D_{i_t}$. This results in one of the three outcomes given in \cref{eq:decision_outcome} and accordingly, at the next time step $t+1$ the agent moves to $i_{t+1}=i_t+1$, $i_{t+1}=i_t-1$, or stay at $i_{t+1}=i_t$, and collects the reward $R_{i_{t+1}}$. The process then repeats. In the cases of $i=1$ and $i=I$, the agent remains at the same level whenever $D_i\neq 1$ and $D_i\neq -1$, respectively. We define the transition probabilities $p^+_i(\hat{y}):=\mathbb{P}(D_i(\hat{y})=1)$, $p^0_i(\hat{y}):=\mathbb{P}(D_i(\hat{y})=0)$, and $p^-_i(\hat{y}):=\mathbb{P}(D_i(\hat{y})=-1)$, and the $I\times I$ transition probability matrix $P(\hat{y})$ with row vector $\vec{p}_i(\hat{y})$. The evolution of $\{i_t\}_t$ is depicted in \cref{fig:mc}, which resembles that of a birth-death Markov chain, with the note that the state as depicted is incomplete because the transition probabilities depend on the underlying pre-response attribute $\vec{x_t}$ and action $\vec{a}_t$, which are time dependent; for this reason this system is in general time-inhomogeneous.

\begin{figure}[!htbp]
    \centering
        \vspace{-10pt}
    \resizebox{.6\textwidth}{.16\textwidth}{
        \begin{tikzpicture}[
            ->,>=stealth,shorten >=1pt,auto,thick,
            state/.style={circle,draw,fill=blue!20,thick,minimum size=3em}
        ]
            \node[state] (state_1) {$1$};
            \node (text_1) [right=of state_1] {$\dots$};
            \node[state] (state_i) [right=of text_1] {$i$};
            \node[state] (state_i_next) [right=of state_i] {${i+1}$};
            \node (text_2) [right=of state_i_next] {$\dots$};
            \node[state] (state_I) [right=of text_2] {$I$};

            \path[every node/.style={fill=white,sloped,font=\sffamily\small}]
                (state_1) edge [bend right=30] node[below=1mm] {$p_1^+$} (text_1)
                (state_i) edge [bend right=30] node[below=1mm] {$p_i^+$} (state_i_next)
                (state_i_next) edge [bend right=30] node[below=1mm] {$p_{i+1}^+$} (text_2)
                (state_i_next) edge [bend right=30] node [above=1mm] {$p_{i+1}^-$} (state_i)
                (state_i) edge [bend right=30] node[above=1mm] {$p_{i}^-$} (text_1)
                (state_I) edge [bend right=30] node[above=1mm] {$p_{I}^-$} (text_2)
                (state_1) edge [loop above] node[above=1mm] {$p_{1}^0$} (state_1)
                (state_i) edge [loop above] node[above=1mm] {$p_{i}^0$} (state_i)
                (state_i_next) edge [loop above] node[above=1mm] {$p_{i+1}^0$} (state_i_next)
                (state_I) edge [loop above] node[above=1mm] {$p_{I}^0$} (state_I);
        \end{tikzpicture}
    }
    \caption{The evolution of the agent's level.}
    \vspace{-10pt}
    \label{fig:mc}
\end{figure}

The complete state is given by $s_t:=(i_t,\vec{x_t})$. The process $\{\vec{s_t}\}_t$ is a discrete-time Markov process with an uncountable state space and potentially time-homogeneous state transitions (if the action at time $t$ is completely determined by $\vec{s_t}$), as we will focus on for the rest of the paper. We will shortly show that when the agent adopts a myopic strategy, $\{\vec{s_t}\}_t$, conditioned on an initial $\vec{x_0}$, the induced state space becomes countable.

The agent collects an instantaneous reward $r\cdot i_t$ at the beginning of time $t$. The agent's instantaneous utility at time $t$ is the expected instantaneous reward of time $t+1$ minus the instantaneous cost:
\begin{equation}\label{eq:inst-utility}
    u(i_t,\vec{x}_t,\vec{a}_t) = \mathbb{E}[r\cdot i_{t+1}|i_t] - \iprod{\vec{c}^+}{\vec{a}^+_t} - \iprod{\vec{c}^-}{\vec{a}^-_t}.
\end{equation}

\section{The Agent's Optimal Myopic Policy}
\label{sec:incentive_improve}
In this section, we derive an agent's optimal myopic policy, that maximizes the instantaneous utility (\cref{eq:inst-utility}) over its actions for a given state $(i_t, \vec{a_t})$ at time $t$. Since this optimization is not dependent on $t$, we will drop this subscript for the remainder of this section. The agent's greedy action will also be referred to as its \emph{best response}, given by the \emph{best-response set} $\text{BR}(i,\vec{x}) = \argmax_{\vec{a}\in\mathbb{R}^{2d}_{\geq0}} u(i,\vec{x},\vec{a})$.

This is, in general, a non-convex optimization problem. On the other hand, due to the linearity of action costs and post-response attribute estimates, this problem can be effectively solved in two steps. The first consists of identifying the optimal action dimension(s)/direction(s), which are also the most cost effective; i.e., the optimal $[\vec{a}^+;\vec{a}^-]$ contains positive entries only in these optimal directions and $0$ elsewhere. When multiple directions are equally optimal, we use a random tie-breaking rule to reduce this to a single direction, so the optimal action becomes a scalar. The second step then consists of finding the value of this scalar. We present the details of each step below.

\subsection{Direction of the best-response action}
\label{sec:direction}

\begin{wrapfigure}{!htbp}{0.55\textwidth}
    \vspace{-20pt}
    \begin{minipage}{0.55\textwidth} 
        \centering
        \begin{subfigure}[b]{.48\textwidth}
            \centering
            \begin{tikzpicture}[scale=0.55] 
                \draw[very thin,color=gray,step=0.5,draw opacity=0.3] (-.1,-.1) grid (3.9,3.9);
                \draw[->] (-0.2,0) -- (4.2,0) node[right] {$a_1$};
                \draw[->] (0,-0.2) -- (0,4.2) node[above] {$a_2$};
                
                \draw[-latex,color=blue,thick] (0.5,0.5) -- (0.5,{0.5+1});
                \draw[-latex,color=blue,thick] (0.5,0.5) -- (0.5,{0.5+1*(2.9/1.5)});
        
                \draw[-latex,color=red,thick] (0.5,0.5) -- ({3.9-0.5}, 0.5);
                \draw[-latex,color=red,thick] (0.5,0.5) -- ({0.5+1.5},0.5);
        
                \draw[dashed,color=black] (0.5,{0.5+1}) -- ({0.5+1.5},0.5);
                \draw[dashed,color=black] (0.5,{0.5+1*(2.9/1.5)}) -- ({3.9-0.5}, 0.5);
                
                \filldraw[fill=black] (0.5,0.5) circle (2pt) node[below left=-1pt and -1.7pt] {$x$};

                \draw[color=black,thick] plot[domain=0:2.5] (\x,{2.5-\x}) node[above=.1in] {};
                \draw[color=black,thick] plot[domain=0:3.9] (\x,{3.9-\x}) node[above=.2in] {$\vec{\theta}$};
            \end{tikzpicture}
            \caption{Optimal action w/ single best direction $a_1$.}
            \label{fig:direction}
        \end{subfigure}
        \hfill
        \begin{subfigure}[b]{.48\textwidth}
            \centering
            \begin{tikzpicture}[scale=0.55]
                \draw[very thin,color=gray,step=0.5,draw opacity=0.3] (-.1,-.1) grid (3.9,3.9);
                \draw[->] (-0.2,0) -- (4.2,0) node[right] {$a_i$};
                \draw[->] (0,-0.2) -- (0,4.2) node[above] {$a_j$};
                
                \draw[-latex,color=blue,thick] (0.5,0.5) -- ({3.9-0.5}, 0.5);
                \draw[-latex,color=blue,thick] (0.5,0.5) -- (0.5,{0.5+1.5});
                \draw[-latex,color=blue,thick] (0.5,0.5) -- ({0.5+1.5},0.5);
                \draw[-latex,color=blue,thick] (0.5,0.5) -- (0.5,{3.9-0.5});
                
                \draw[-latex,color=red,thick] (0.5,0.5) -- ({0.5+1.5/4},{2.5-(0.5+1.5/4)});
                \draw[-latex,color=red,thick] (0.5,0.5) -- (1.25,{2.5-1.25});
                \draw[-latex,color=red,thick] (0.5,0.5) -- ({0.5+1.5*3/4},{2.5-(0.5+1.5*3/4)});
        
                \draw[-latex,color=red,thick] (0.5,0.5) -- ({0.5+2.9/4},{3.9-(0.5+2.9/4)});
                \draw[-latex,color=red,thick] (0.5,0.5) -- ({0.5+2.9/2},{3.9-(0.5+2.9/2)});
                \draw[-latex,color=red,thick] (0.5,0.5) -- ({0.5+2.9*3/4},{3.9-(0.5+2.9*3/4)});
                
                \filldraw[fill=black] (0.5,0.5) circle (2pt) node[below left=-1pt and -1.7pt] {$x$};
                
                \draw[color=black,thick] plot[domain=0:2.5] (\x,{2.5-\x}) node[above=.1in] {};
                \draw[color=black,thick] plot[domain=0:3.9] (\x,{3.9-\x}) node[above=.2in] {$\vec{\theta}$};
            \end{tikzpicture}
            \caption{Optimal action w/ two best directions $a_i, a_j$.}
            \label{fig:convex-comb}
        \end{subfigure}
        \caption{Agent's optimal actions (red arrows).}
    \end{minipage}
    \vspace{-10pt}
\end{wrapfigure}

Our result in this first step is a relatively minor generalization of a result in \cite{jin_incentive_2022}. For this reason, we will focus on illustrating the intuition behind the result and have included the technical details in Appendix \ref{app:opt-action-dir}. \cref{fig:direction} illustrates a two-dimensional action space where the return on investment (ROI) differs: $\frac{\theta_1}{c_1}>\frac{\theta_2}{c_2}$. For an attribute $x$ below the decision threshold, a fixed investment in the higher-ROI direction (i.e., $a_1$) always brings the agent closer to the decision boundary than the other (i.e., $a_2$). Thus, the agent's optimal action is to invest exclusively in $a_1$. While non-convex utility functions may produce multiple separated optimal actions, these solutions differ only in magnitude, not direction, as shown in \cref{fig:direction}. Each magnitude corresponds to a unique estimated post-response qualification $\hat{y}$. When multiple directions share the same ROI (i.e., $\frac{\theta_i}{c_i}=\frac{\theta_j}{c_j}$ for $i\neq j$), optimal actions form convex sets aligned with the highest ROI-directions. Each set corresponds to a unique $\hat{y}$ and consists of all convex combinations of the unidirectional actions that achieve that $\hat{y}$. This is depicted in \cref{fig:convex-comb}.

Let $K=\argmax_{l\in[2d]}\frac{\tilde{\theta}_l}{c_l}$, where $\tilde{\vec{\theta}}:=[\vec{\theta};\vec{\theta}]\in\mathbb{R}^{2d}_{\geq0}$, consists of the highest-ROI directions. Notice $K$ is independent of the state ($i$ and $x$). We break ties by letting the agent invest exclusively in an arbitrary direction in $K$, which is then fixed throughout the decision process. This allows us to focus on a one-dimensional problem (see \cref{sec:magnitude}). It can also be shown that the magnitude of this unidirectional optimal action, denoted as $\bar{a}_t$, depends on $\vec{x}_t$ only through $x_t:=\iprod{\vec{\theta}}{\vec{x}_t}$. Thus, we will write this optimal action as $\bar{a}(i,x_t)$ and adopt the following simplified notations:
\begin{equation*}
    \tilde{x}_t:=\iprod{\vec{\theta}}{\tilde{\vec{x}}_t},\quad z_t:=\iprod{\vec{\theta}}{\vec{z}_t}=\hat{y}_t,\quad a_t:=\iprod{\vec{\theta}}{\vec{a}_t},\quad s_t:=(i_t,x_t),\quad\text{and}\quad c:=c_{\min K}/\tilde{\theta}_{\min K}.
\end{equation*}
We refer to $x_t/\tilde{x}_t$, $z_t$, $a_t$, and $s_t$ as well as their vectorized versions interchangeably as pre-/post-response attribute, observable feature, action, and state, respectively. The instantaneous utility can also be expressed in terms of the simplified notations by $u(i_t,x_t,a_t)=\mathbb{E}[ri_{t+1}|i_t]-ca_t$.

\subsection{Magnitude of the best-response action}
\label{sec:magnitude}
Define an auxiliary function $G:[I]\times\mathbb{R}\to\mathbb{R}$ as follows
\begin{equation}\label{eq:G}
    G(i,z)=\begin{cases}
        r(1-h(\sigma(\alpha z)))\sigma(\alpha z)  - cz & i=1 \\
        r(1-h(\sigma(\alpha z)))(2\sigma(\alpha z)-1) - cz & 2\leq i \leq I-1 \\
        -r(1-h(\sigma(\alpha z)))(1-\sigma(\alpha z)) - cz & i=I
    \end{cases}.
\end{equation}
\begin{proposition}\label{prop:br-form}
    $\bar{a}(i,x) \in \mu_i-x+\argmax_{z\geq x-\mu_i}\;G(i,z)$.
\end{proposition}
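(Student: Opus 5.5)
The approach is to reduce the instantaneous utility to a one-dimensional function of the post-response feature, shifted by the level threshold $\mu_i$. By the reduction in \cref{sec:direction}, the agent invests only in one highest-ROI direction, so after projecting onto $\vec\theta$ the problem becomes choosing a scalar $a\ge 0$ with feature $z=x+a$ and cost $ca$. The instantaneous utility is then $u(i,x,a)=\mathbb{E}[r\,i_{t+1}\mid i_t=i]-ca$.

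The first step is to compute $\mathbb{E}[r\,i_{t+1}\mid i]$ from \eqref{eq:decision_outcome} for each type of level, with $\sigma_i=\sigma(\alpha(x+a-\mu_i))$ and $h_i=h(\sigma_i)$.
\begin{itemize}
\item For an interior level $2\le i\le I-1$, the expectation is $ri+r(p_i^+-p_i^-)=ri+r(1-h_i)(2\sigma_i-1)$.
\item For $i=1$, demotion is replaced by staying at level $1$, so the expectation is $r+r(1-h_1)\sigma_1$.
\item For $i=I$, promotion is replaced by staying at level $I$, so the expectation is $rI-r(1-h_I)(1-\sigma_I)$.
\end{itemize}

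Next, change variables to $w:=x+a-\mu_i$, i.e.\ $a=w-(x-\mu_i)$. The cost becomes $ca=cw-c(x-\mu_i)$. In every case the utility then reads
\[
u(i,x,a)=G(i,w)+ri+c(x-\mu_i),
\]
where the term $ri+c(x-\mu_i)$ does not depend on the decision variable. One should check that the $-cz$ term in \eqref{eq:G} matches $-cw$, which it does.

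The constraint $a\ge 0$ becomes $w\ge x-\mu_i$. Maximizing $u$ over $a\ge0$ is therefore equivalent to maximizing $G(i,w)$ over $w\ge x-\mu_i$, and the best-response magnitude satisfies $\bar a=w^*-(x-\mu_i)=\mu_i-x+w^*$ for some maximizer $w^*$. This is exactly the claimed inclusion.

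The main obstacle is not the algebra but the existence of a maximizer, so that the $\argmax$ is nonempty. I would argue that $G(i,\cdot)$ is continuous, since $\sigma$ and $h$ are analytic. Its reward part is bounded, with absolute value at most $r$, while $-cw\to-\infty$ as $w\to\infty$. Hence the supremum over $[x-\mu_i,\infty)$ is attained on a compact interval $[x-\mu_i,M]$. Here $M$ is chosen so that $G(i,w)<G(i,x-\mu_i)$ for all $w>M$; it suffices to take $cM>c(x-\mu_i)+2r$.

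A secondary point is the justification that restricting to a single direction loses no optimality. For this I would cite the appendix result on $K$ referenced in \cref{sec:direction}: any action is weakly dominated by one in a highest-ROI direction achieving the same $\hat y$ at no greater cost, with cost exactly $c\cdot a$ in projected units.
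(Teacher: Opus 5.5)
Your proof is correct and follows the same route as the paper. The paper rewrites the utility as $G(i,x+a-\mu_i)+c(x-\mu_i)+ri$ and substitutes $z=x+a-\mu_i$, so that $a\ge 0$ becomes $z\ge x-\mu_i$. Your level-by-level computation of $\mathbb{E}[r\,i_{t+1}\mid i]$ and your coercivity argument for the existence of a maximizer are extra detail that the paper leaves implicit; the paper handles existence separately in its appendix, using boundedness of the reward term and compactness.
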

\cref{prop:br-form} implies the optimal greedy action can be calculated by optimizing $G(i,\cdot)$. In case of multiple global maximizers, we break ties by choosing the largest, i.e., $\bar{a}(i,x) = \mu_i - x + \max\{\argmax_{z\geq x-\mu_i} G(i,z)\}$ ($\argmax_{z\geq x-\mu_i} G(i,z)$ is finite; see \cref{app:finite-local-max}).

\begin{wrapfigure}{}{0.55\textwidth}
    \begin{minipage}{0.55\textwidth}
        \centering
        \begin{subfigure}[b]{.45\textwidth}
            \includegraphics[width=\textwidth]{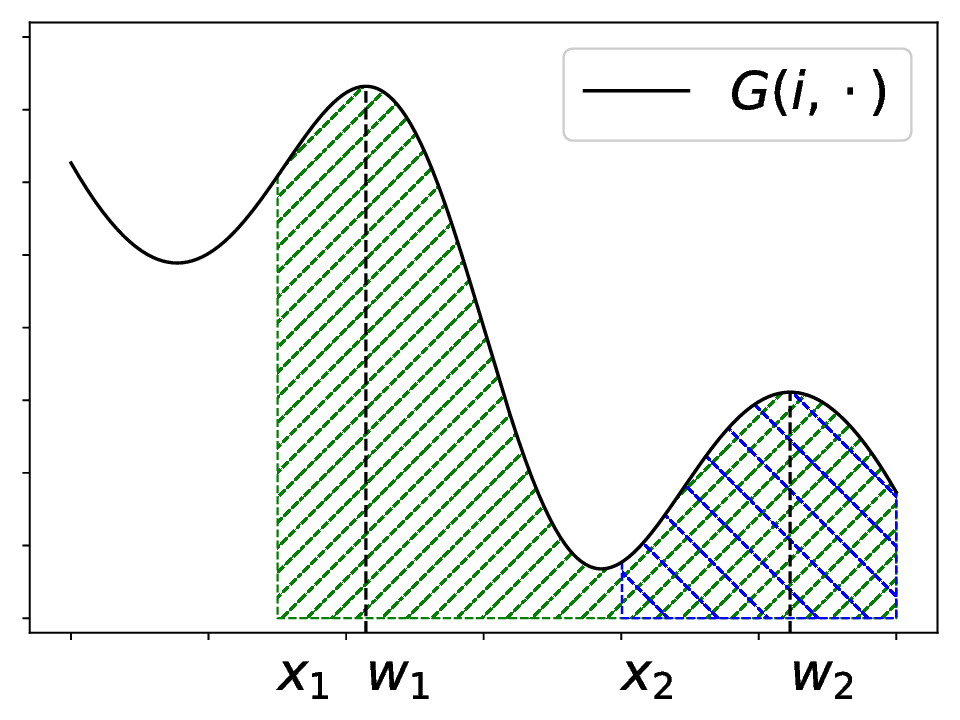}
            \subcaption{}
            \label{fig:br:type1-G}
        \end{subfigure}
        \begin{subfigure}[b]{.45\textwidth}
            \includegraphics[width=\textwidth]{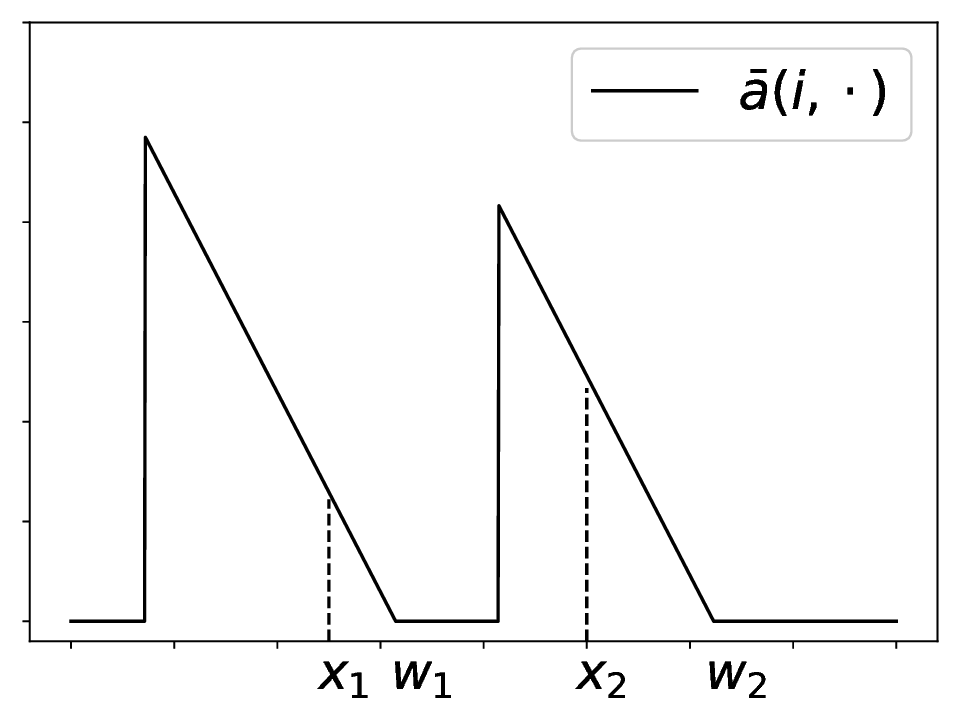}
            \subcaption{}
            \label{fig:br:type1-br}
        \end{subfigure}
        \begin{subfigure}[b]{.45\textwidth}
            \includegraphics[width=\textwidth]{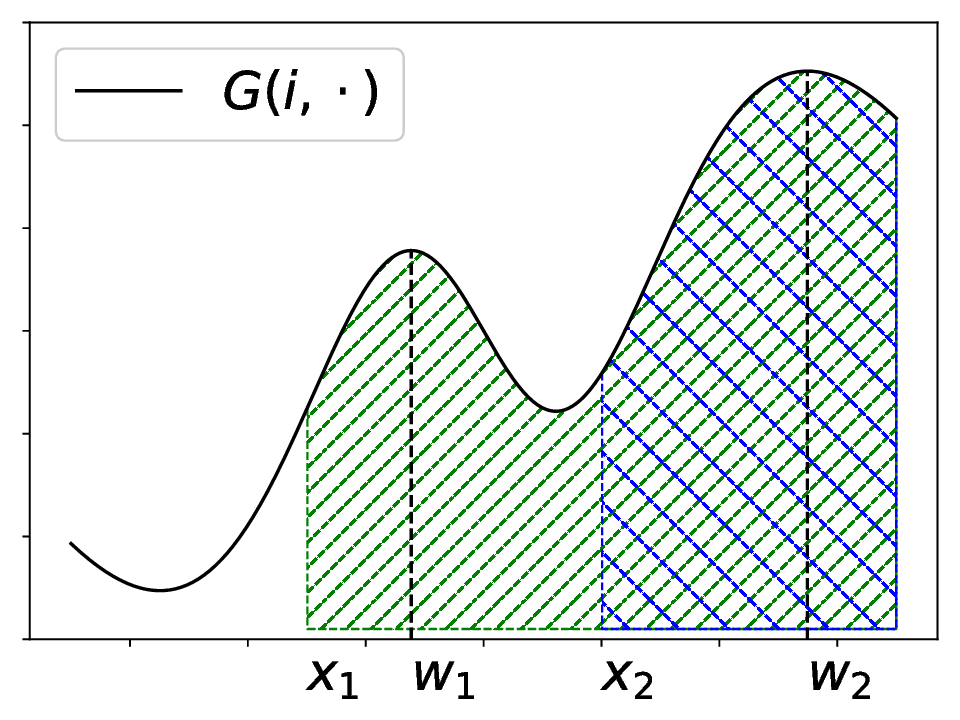}
            \subcaption{}
            \label{fig:br:type2-G}
        \end{subfigure}
        \begin{subfigure}[b]{.45\textwidth}
            \includegraphics[width=\textwidth]{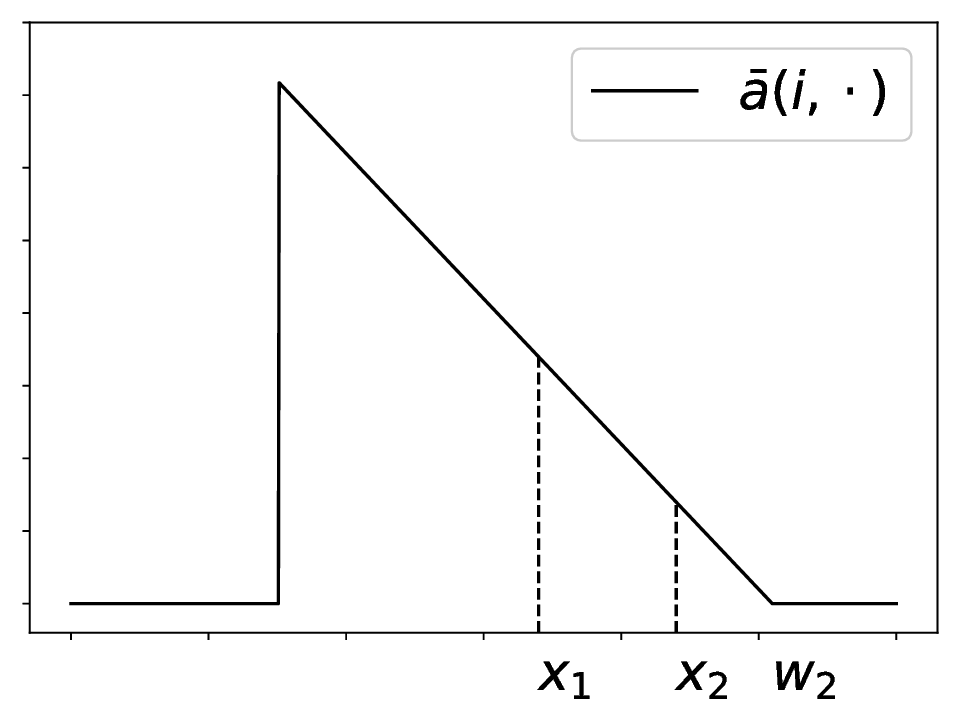}
            \subcaption{}
            \label{fig:br:type2-br}
        \end{subfigure}
        \caption{Illustration of $G(i,\cdot)$ (left, with $z$ being the x-axis) and $\bar{a}(i,\cdot)$ (right, with $x$ being the x-axis).}
        \label{fig:best-response}
  \end{minipage}
\end{wrapfigure}
When $G(i,\cdot)$ has multiple local maximizers, $\bar{a}(i,\cdot)$ can exhibit disconnected positive intervals on $x$, as shown in \cref{fig:best-response}. The left (resp. right) column depicts $G(i,\cdot)$ (resp. $\bar{a}(i,\cdot)$), while the top (resp. bottom) row shows the case when the rightmost local maximizer is not (resp. is) the global maximizer. Let $w_1<w_2$ be the two local maximizers of $G(i,\cdot)$ in this example. When $w_2$ is not the global maximizer (i.e., \cref{fig:br:type1-G}), the agent with attribute $x_1$ (resp. $x_2$) would select $\bar{a}=\mu_1+w_1-x_1$ (resp. $\mu_2+w_2-x_2$), by maximization over the green diagonal (resp. blue backward-diagonal) regions. This creates disconnected positive segments in $\bar{a}(i,\cdot)$ (i.e., \cref{fig:br:type1-br}), where the lower-attribute agent targets at staying in the current level (left triangle) and the higher-attribute agent seeks promotion to the next level (right triangle). Empirical evidence suggests that there can be no more than two such disconnected segments; it remains a subject of future work to formally establish this. When $w_2$ is the global maximizer (i.e., \cref{fig:br:type2-G}), agents with both $x_1$ and $x_2$ would choose actions associated with $w_2$, yielding a single-triangle in the best-response action (i.e., \cref{fig:br:type2-br}). In this case, aiming for promotion is more beneficial to the agent than staying at the current level. We will focus on this latter case.

\begin{assumption}\label{asm:local-maximizer}
    Let $\textrm{LM}_i$ be the set of local maximizer of $G(i,\cdot)$. Assume for every $i\in[I]$, $\textrm{LM}_i$ is non-empty and $G(i,\max\textrm{LM}_i)\geq G(i,z)$, $\forall z\in\textrm{LM}_i$.
\end{assumption}

In other words, the above assumption limits our attention to the case where the last of these local maximizers also achieves the maximum value of $G$ among these local maximizers, as illustrated in \cref{fig:br:type2-G}. We note that there exists a wide range of decision rule parameters that satisfy this assumption, as will be shown by the simulations in \cref{sec:simulation}. Under \cref{asm:local-maximizer}, $\bar{a}(i,x)$ conforms with \cref{fig:br:type2-br} and can be written analytically as
\begin{equation}\label{eq:br-simplified}
    \bar{a}(i,x)=(\overline{w}_i+\mu_i-x)\mathbf{1}\{\mu_i + \underline{w}_i\leq x<\mu_i+\overline{w}_i\}~,
\end{equation}
where $\overline{w}_i$ is the largest local maximizer and $\underline{w}_i:=\inf\{z:G(i,z)\leq G(i,\overline{w}_i)\}$. The relationship of $\underline{w}_i$ and $\overline{w}_i$ is illustrated in \cref{fig:w_underbar}. For ease of notation, we will use $\overline{\mu}_i:=\mu_i+\overline{w}_i$ and $\underline{\mu}_i:=\mu_i+\underline{w}_i$ to denote the boundaries of the indicator function.

\begin{proposition}\label{prop:boundary-pr}
    $p_i^+(\overline{\mu}_i)\geq p_i^-(\overline{\mu}_i)$, $\forall i\in[I-1]$.
\end{proposition}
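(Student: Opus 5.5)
The plan is to reduce the claim to the sign of $\overline{w}_i$ and then rule out $\overline{w}_i<0$ using the symmetries in \cref{assum:sigma-h}.

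\textbf{Reduction.} At $\hat{y}=\overline{\mu}_i=\mu_i+\overline{w}_i$ we have $\sigma_i=\sigma(\alpha\overline{w}_i)$. Writing $q(z):=1-h(\sigma(\alpha z))\ge 0$, the definition \cref{eq:decision_outcome} gives
\[
p_i^+(\overline{\mu}_i)=q(\overline{w}_i)\,\sigma(\alpha\overline{w}_i),
\qquad
p_i^-(\overline{\mu}_i)=q(\overline{w}_i)\,\bigl(1-\sigma(\alpha\overline{w}_i)\bigr).
\]
These are the raw probabilities $\mathbb{P}(D_i=\pm1)$, which is also how $p_1^-$ is defined. Hence it suffices to show $\sigma(\alpha\overline{w}_i)\ge 1/2$, i.e. $\overline{w}_i\ge 0$, for $i\in[I-1]$. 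I argue by contradiction and assume $\overline{w}:=\overline{w}_i<0$.

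\textbf{Symmetry facts.} By \cref{assum:sigma-h}, $q$ is even, since $h(\sigma(-\alpha z))=h(1-\sigma(\alpha z))=h(\sigma(\alpha z))$. Also $\sigma'$ is even, and $q'(z)\le 0$ for $z<0$, because $h$ is increasing on $[0,1/2)$.

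\textbf{Monotonicity to the right of $\overline{w}$.} The function $G(i,\cdot)$ is analytic, and $G(i,z)\to-\infty$ as $z\to\infty$ because the reward term is bounded. By definition, $\overline{w}$ is the largest local maximizer. If $G(i,\cdot)$ increased anywhere on $(\overline{w},\infty)$, then, since it eventually decreases to $-\infty$, it would attain a maximum over a suitable compact interval at an interior point. That point would be a local maximizer larger than $\overline{w}$, which is impossible. Hence $G'(i,z)\le 0$ for all $z\ge\overline{w}$, and in particular $G'(i,\overline{w})=0$.

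\textbf{Middle levels, $2\le i\le I-1$.} Here $G(i,z)=r\,q(z)\bigl(2\sigma(\alpha z)-1\bigr)-cz$. The reward part is odd, so $G'(i,\cdot)$ is even. Since $\overline{w}<0$, the sets $[\overline{w},\infty)$ and its mirror image $(-\infty,-\overline{w}]$ together cover $\mathbb{R}$. Evenness of $G'(i,\cdot)$ then gives $G'(i,\cdot)\le 0$ on all of $\mathbb{R}$.

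\textbf{Level $i=1$.} Here $G(1,z)=r\,q(z)\sigma(\alpha z)-cz$, and a direct computation using the symmetry facts gives
\[
G'(1,-z)-G'(1,z)=-r\,q'(z).
\]
For $z<0$ this is $\ge 0$. Applying it at $z=\overline{w}$ gives $G'(1,-\overline{w})\ge G'(1,\overline{w})=0$. Since $-\overline{w}>\overline{w}$, the previous step also gives $G'(1,-\overline{w})\le 0$. Now take any $z\le\overline{w}<0$. Then $-z\ge-\overline{w}>\overline{w}$, so $G'(1,z)\le G'(1,-z)\le 0$. Together with the monotonicity step, this gives $G'(1,\cdot)\le 0$ on all of $\mathbb{R}$.

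\textbf{Contradiction.} In both cases $G(i,\cdot)$ is nonincreasing on $\mathbb{R}$. For $\overline{w}$ to be a local maximizer, $G(i,\cdot)$ must then be constant on a left neighbourhood of $\overline{w}$. By analyticity it is constant on $\mathbb{R}$, which contradicts $G(i,z)\to-\infty$. Therefore $\overline{w}_i\ge 0$, and the proposition follows.

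The main obstacle is the level $i=1$ case, because $G(1,\cdot)$ lacks the clean odd structure of the middle levels. The derivative-comparison identity above is the step I expect to need the most care. I would also double-check the boundary-growth argument that rules out increases of $G(i,\cdot)$ to the right of $\overline{w}$, and the precise notion of local maximizer used in \cref{asm:local-maximizer}.
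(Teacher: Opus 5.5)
Your proof is correct, and it reaches the contradiction by a different mechanism than the paper. Both arguments reduce the claim to the sign of $\overline{w}_i$ and use the same symmetry input. Your identity $G'(1,-z)-G'(1,z)=-r\,q'(z)$ is exactly the paper's comparison $g_1(z)<g_1(-z)$ for $z<0$, and both arguments use evenness of $G'(i,\cdot)$ for the middle levels.

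The paper argues constructively. From a negative local maximizer $\bar z$ it uses $g_i(z)\to 0$ as $|z|\to\infty$ and the intermediate value theorem to produce a positive local maximizer beyond $-\bar z$. It then rules out $z=0$ separately, using $h''(1/2)\le 0$, $\sigma''(0)=0$ and a sign condition on $g_1'(0)$.

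You instead use the maximality of $\overline{w}_i$ to get $G'(i,\cdot)\le 0$ on $[\overline{w}_i,\infty)$. You then reflect this to all of $\mathbb{R}$ and conclude by analyticity together with $G(i,z)\to-\infty$.

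Your route has three advantages.
\begin{itemize}
\item It needs only $G(i,z)\to-\infty$, which is automatic because the reward term is bounded. The paper's limit $g_i\to 0$ needs $\sigma'(x)\to 0$, which is not among the stated assumptions.
\item It avoids the second-order analysis at $z=0$. There, the paper's step ``$g_1$ crosses $c$ from above at $0$ iff $g_1'(0)<0$'' is not airtight.
\item You only use $h'\ge 0$ on $[0,1/2)$, whereas the paper asserts $h'>0$ there, which strict monotonicity does not guarantee.
\end{itemize}

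Your argument also goes through verbatim under the hypothesis $\overline{w}_i\le 0$. At $z=0$ one has $q'(0)=0$, and the two half-lines still cover $\mathbb{R}$. So it in fact yields the strict inequality $\overline{w}_i>0$, which the paper relies on later, in its proof that $\overline{w}_i\ge\overline{w}_I$. For the stated weak inequality, your $\overline{w}_i\ge 0$ of course suffices.
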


\begin{wrapfigure}{!htbp}{0.35\textwidth}
    \begin{minipage}{0.35\textwidth}
        \centering
        \vspace{-30pt}
        \includegraphics[width=\textwidth]{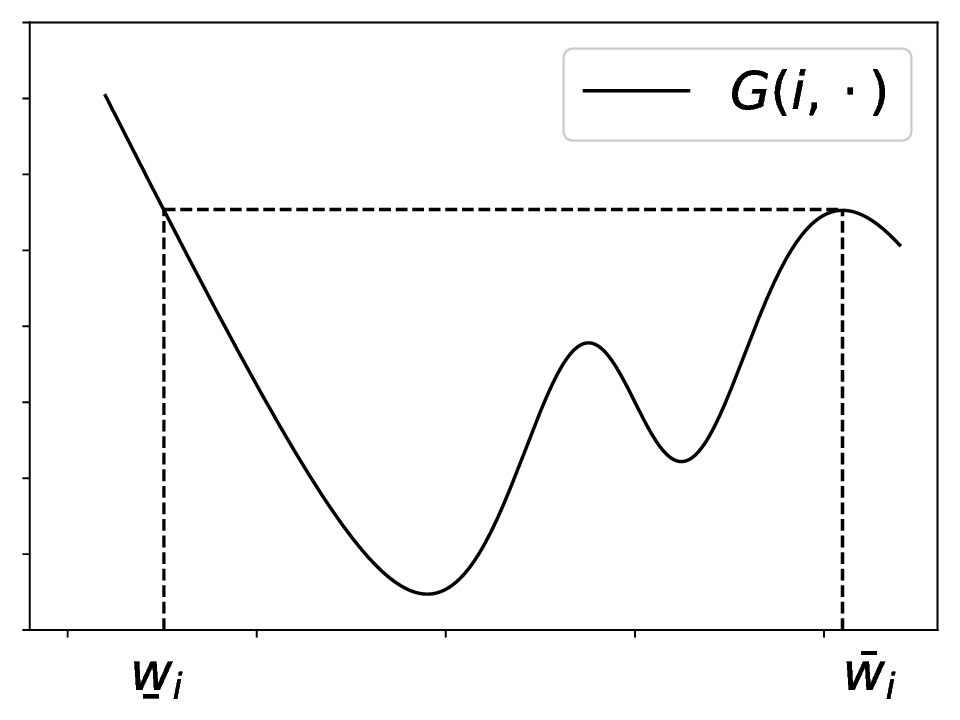}
        \caption{Relationship between $\underline{w}_i$ and $\overline{w}_i$.}
        \label{fig:w_underbar}
        \vspace{-20pt}
    \end{minipage}
\end{wrapfigure}
\cref{prop:boundary-pr} says that (as a consequence of adopting \cref{asm:local-maximizer}) for non-terminal levels ($i<I$), the agent exerts effort {(either improvement or gaming)} primarily for promotion. For the terminal level $I$, however, demotion probability may exceed promotion probability, reflecting the agent's shifted focus to status maintenance instead of advancement.

\paragraph{Reduction to countable-state Markov chain.} The piecewise form of instantaneous best response allows reformulation of the state process $\{s_t\}_t$ as a countable-state Markov chain conditioned on $x_0$. According to \cref{eq:br-simplified}, whenever the agent inputs nontrivial efforts in improvement, $x_{t+1}$ can only be $\gamma\overline{\mu}_i$ for some $i\in[I]$. When the agent chooses not to improve (either zero effort or gaming), its next attribute is a $\gamma$-multiple of its current attribute. Thus, we only need to consider the discretized attribute space $\mathcal{X}(x_0):=\{\gamma^tx_0:t=0,1,\dots\}\cup\{\gamma^t\overline{\mu}_i:i\in[I],t=1,2,\dots\}$, which is clearly a countable set. An example illustration of the discretization process is provided in \cref{fig:discrete-state}.

\begin{figure}[!htbp]
    \centering
    \resizebox{.8\textwidth}{.16\textwidth}{
        \begin{tikzpicture}
    \draw (4.8,0.8) -- (4.8,0.8) node[above,font=\footnotesize] {$\gamma\overline{\mu}_3$};
    \draw (1.6,0.8) -- (1.6,0.8) node[above,font=\footnotesize] {$\gamma\overline{\mu}_2$};
    \draw (-1.6,0.8) -- (-1.6,0.8) node[above,font=\footnotesize] {$\gamma\overline{\mu}_1$};
    \draw (4.0,0.8) -- (4.0,0.8) node[above,font=\footnotesize] {\color{black}$x_0$};
    \draw (3.2,0.8) -- (3.2,0.8) node[above,font=\footnotesize] {$\gamma^2\overline{\mu}_3$};
    \draw (2.4,0.8) -- (2.4,0.8) node[above,font=\footnotesize] {\color{black}$\gamma x_0$};

    \draw (5.2,0.6) -- (5.2,0.6) node[right,font=\footnotesize] {$i=1$};
    \draw (5.2,0) -- (5.2,0) node[right,font=\footnotesize] {$i=2$};
    \draw (5.2,-0.6) -- (5.2,-0.6) node[right,font=\footnotesize] {$i=3$};
    \draw [-latex, thick] (-1.4, -1.0) to (-1.0,0.6) {};
    \draw (-1.4, -1.0) -- (-1.4, -1.0) node[left, font=\footnotesize] {$\overline{\mu}_1$};
    \draw [-latex, thick] (0.0, -1.0) to (0.4,-0.6) {};
    \draw (0.0, -1.0) -- (0.0, -1.0) node[left, font=\footnotesize] {$\underline{\mu}_3$};
    \draw (5.0,-0.8) rectangle (0.4,-0.4);
    \draw (1.8,-0.2) rectangle (-2.0,0.2);
    \draw (-5.6,0.8) -- (-1.0,0.8) -- (-1.0,0.4)  -- (-5.6,0.4);

    \foreach \x in {-4.8,-4.0,...,5.6}{
        \foreach \y in {-0.6,0,0.6}{
            \fill[black!80] (\x,\y) circle[radius=1.4pt];
    }}

    \draw (-5.0,0.6) -- (-5.0,0.6) node[left,font=\footnotesize] {$\cdots$};
    \draw (-5.0,0) -- (-5.0,0) node[left,font=\footnotesize] {$\cdots$};
    \draw (-5.0,-0.6) -- (-5.0,-0.6) node[left,font=\footnotesize] {$\cdots$};
  \end{tikzpicture}
  }
  \caption{A countable state space with $x_t$ plotted along the horizontal axis (increasing to the right) and $i_t$ along the vertical axis (increasing downward). Each rectangle corresponds to the interval $[\underline{\mu}_i,\overline{\mu}_i]$, though $\underline{\mu}_i$ and $\overline{\mu}_i$ themselves are in general not part of the state space.}
  \label{fig:discrete-state}
\end{figure}

\section{Incentivizing Improvement in the Long Term}\label{sec:long-term}

{In this section we will analyze and compare two types of greedy agents (or policies): one that never games/cheats (denoted as NG, i.e., does nothing or exerts improvement effort) over an infinite horizon, and one that never improves (denoted as NI, i.e., does nothing or exerts gaming effort). We will adopt the one-shot best response derived in \cref{sec:incentive_improve}, specifically, by setting $c^+<c^-$ (resp. $c^-<c^+$). We will also examine the conditions under which the former outperforms the latter.} 
{Our simulation results in \cref{sec:simulation} show that NG is indeed the optimal over the class of random mixed strategies.}

Let $c^g$, $G^g$, $\overline{w}^g_i$, $\underline{w}_i^g$, $\overline{\mu}^g_i$, and $\underline{\mu}_i^g$ denote the corresponding entities defined in \cref{sec:incentive_improve} for $g\in\{\textrm{NG},\textrm{NI}$\}, and let $\overline{a}^\textrm{NG}$ and $\overline{a}^{\textrm{NI}}$ denote the best-response actions in the NG and NI policies,   respectively.
We are interested in the following {long-term time-average} quantities for each $g\in\{\textrm{NG},\textrm{NI}\}$:
\begin{enumerate}
    \item Level distribution: $\hat{q}^g_i:=\lim_{T\to\infty}\frac{1}{T}\sum_{t=0}^T\mathbf{1}\{i^g=i\}$, $\forall i$, {i.e., the agent's achievement}. 
    \item Average attribute: $\hat{x}^g:=\lim_{T\to\infty}\frac{1}{T}\sum_{t=0}^Tx_t^g$, {i.e., the agent's true quality}. 
    \item Average utility: $\hat{u}^g:=\lim_{T\to\infty}\frac{1}{T}\sum_{t=0}^Tu(i_t,x_t,\overline{a}^g(i_t,x_t))$.
\end{enumerate}

{Obtaining these quantities depends on the the design of the sequence of classifiers. Below we first describe such a sequence that is aimed at incentivizing maximum improvement from an NG agent.} 

\subsection{{A sequence of classifiers: incremental thresholding}}

{Let the classifier parameters in \cref{sec:problem} be fixed, except for the thresholds $\mu_i$. The set of conditions below define a sequence of classifiers with incremental increases in thresholds, representing tests that are progressively harder but not significantly so. We show in the next two subsections that this incentivizes an NG agent to always exert positive improvement effort, resulting in steady improvement in its attribute (true qualification).} 

\begin{definition}[Incremental Thresholding]\label{cond:level-cond} {We say a sequence of classifiers with fixed $\alpha, \sigma$, and $h$ have incremental thresholds if it satisfies the following set of conditions:}
    \begin{enumerate}[(a)]
        \item $\underline{\mu}_1^\textrm{NG}\leq 0$ (incentivize zero-attribute agents);
        \item $\overline{\mu}^\textrm{NG}_{i-1} < \overline{\mu}_i^\textrm{NG}$, $\forall i\geq 2$; for non-boundary $i\in\{3, \cdots, {I}\}$, this is equivalent to $\mu_{i-1}<\mu_i$ (strictly increasing thresholds to induce improvement aimed at advancing to higher levels); 
        \item $\gamma\overline{\mu}_i^\textrm{NG}\geq \max\left\{\underline{\mu}_{i+1}^\textrm{NG},\underline{\mu}_i^\textrm{NG}\right\}$, $\forall i\leq I-1$, and $\gamma\overline{\mu}_I^\textrm{NG}\geq \underline{\mu}_I^\textrm{NG}$ ({maintain incentives despite depreciation in attributes).} 
    \end{enumerate}
\end{definition}

{Later in \cref{alg:find-level} we will show how to construct such a set of classifiers.} 
Each of these conditions has clear interpretations. \cref{cond:level-cond}-(a) ensures that a NG agent with zero initial attribute {at level 1} will exert positive effort. This allows us to eliminate the trivial (and uninteresting) scenario where an agent with attribute below $\underline{\mu}_1^\textrm{NG}$ will persistently exert zero effort.
\begin{wrapfigure}{!htbp}{0.5\textwidth}
    \vspace{-20pt}
    \begin{minipage}{0.5\textwidth} 
        \begin{algorithm}[H]
          \caption{Finding the Longest Level Sequence Satisfying \Cref{cond:level-cond}.}
          \label{alg:find-level}
          \begin{algorithmic}[1]
            \Require increment $\Delta\mu>0$, some upper bound on the terminal threshold $M>\Delta\mu$, $\overline{w}_1^\textrm{NG}$, $\overline{w}_2^\textrm{NG}$, $\overline{w}_I^\textrm{NG}$, $\underline{w}_1^\textrm{NG}$, $\underline{w}_2^\textrm{NG}$ and $\underline{w}_I^\textrm{NG}$.
            \State $\mu_1\gets0,\;\mu_2\gets\Delta\mu,\;i\gets2$
            \If{$\overline{w}_{1}^\textrm{NG}-\overline{w}_2^\textrm{NG} < \Delta\mu\leq \gamma\overline{w}_1^\textrm{NG}-\underline{w}_2^\textrm{NG}$}
                    \While{$\mu_{i}+\Delta\mu\leq  M$ and $\Delta\mu \leq (\gamma-1)\mu_{i}+\gamma\overline{w}_{2}^\textrm{NG}-\underline{w}_2^\textrm{NG}$}
                        \State $\mu_{i+1}\gets\mu_{i}+\Delta\mu$;~~ $i\gets i+1$
                    \EndWhile
            \EndIf
            \State $\mu_{i+1}\gets\mu_{i}+\Delta\mu$; ~~
             $i\gets i+1$
            \State Define $k_x:=\mathbf{1}\{x=1\}+2\mathbf{1}\{x>1\}$
            \For{$j=i,\dots,2$}
                \If{$\mu_{j-1}+\overline{w}_{k_{j-1}}^\textrm{NG}-\overline{w}_I^\textrm{NG}<\mu_j\leq \gamma(\mu_{j-1}+\overline{w}_{k_{j-1}}^\textrm{NG})-\underline{w}_I^\textrm{NG}$}
                    \State Return $\{\mu_1,\dots,\mu_j\}$
                \EndIf
            \EndFor
            \State No $\{\mu_i\}_{i=1}^I$ with $I\geq 2$ exists; return null.
          \end{algorithmic}
        \end{algorithm}
      \end{minipage}
          \vspace{-30pt}
\end{wrapfigure}

\Cref{cond:level-cond}-(b) ensures that the levels {represent a progression/advancement of skills/attributes: at each level, an agent's improvement effort leads to increased post-response attribute (with the possible exception of the boundary state $i=1$).} 
\Cref{cond:level-cond}-(c) {ensures that when an NG agent best responds at level $i$,  its depreciated post-response attribute $\gamma\overline{\mu}_i^\textrm{NG}$ remains in or above the incentivizable interval of the next level it lands in ($i-1$, $i$, or $i+1$, corresponding to demotion, no change, and promotion, respectively) regardless of the classifier outcome; consequently its chance of promotion at the next time step, after best-responding, will  remain higher than that of demotion.} 
Taken together, such a design allows an NG agent to gradually and consistently obtain positive decision outcomes with higher probability.

It is interesting to note that when combined, \Cref{cond:level-cond}-(b) and (c) also imply that  $\gamma>\max_{i\geq2}\left(1-\frac{\overline{w}^\textrm{NG}_i-\underline{w}^\textrm{NG}_i}{\mu_{i-1}+\overline{w}^\textrm{NG}_{i-1}}\right)$, which $\to1$ as $\mu_{i-1}\to\infty$, This suggests that for a fixed {retention} $\gamma<1$, there does not exist an infinite sequence of incremental thresholding classifiers that satisfy the above conditions.
{In fact, since higher attribute depreciates more, there does not exist a decision rule that can incentivize an NG agent to increase its attribute beyond $\max_{i\in[I]}\frac{\gamma\overline{w}^\textrm{NG}_i-\underline{w}_i^\textrm{NG}}{1-\gamma}$.}

\subsection{Analysis of long-term level distribution and attribute}\label{sec:level-attribute}

\begin{theorem}\label{thm:long-term-levels}
    {For the sequence of classifiers satisfying \cref{cond:level-cond}, $\hat{q}^g_i=O((\frac{1-\sigma^g}{\sigma^g})^{|\hat{i}^{g}-i|})$ where $\sigma^g:=\sigma(\alpha\overline{w}_2^g)$, $\hat{i}^\textrm{NI}:=l=\max\{j\in[I]:\underline{\mu}_j^\textrm{NI}\leq 0\}$, and $\hat{i}^\textrm{NG}:=I$, $\forall g\in\{\textrm{NI},\textrm{NG}\}$.}
\end{theorem}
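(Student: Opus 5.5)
The plan is to reduce each myopic policy to a birth--death chain on the levels with essentially constant transition probabilities in the relevant region. Then the stationary distribution of that chain gives the geometric profile around the mode $\hat{i}^g$.

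\textbf{Step 1 (NG agent).} Under \cref{cond:level-cond}, start with an NG agent at level $1$ with $x_0\ge 0$. By condition (a) it has $\underline{\mu}^\textrm{NG}_1\le 0\le x_0$. If $x_0<\overline{\mu}_1^\textrm{NG}$, then \cref{eq:br-simplified} makes the agent best respond, and its feature becomes exactly $\overline{\mu}_1^\textrm{NG}$. Condition (c) then guarantees that the depreciated attribute $\gamma\overline{\mu}^\textrm{NG}_i$ lies in $[\underline{\mu}^\textrm{NG}_{j},\infty)$ for each $j\in\{i-1,i,i+1\}$. Condition (b) gives $\gamma\overline{\mu}^\textrm{NG}_i<\overline{\mu}^\textrm{NG}_i<\overline{\mu}^\textrm{NG}_{i+1}$, so the agent best responds again at the next level.

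The case of demotion needs care. After demotion to $i-1$ the attribute may exceed $\overline{\mu}^\textrm{NG}_{i-1}$, so the agent does nothing. Its feature $\gamma\overline{\mu}_i$ then exceeds the threshold of level $i-1$, and since $p^\pm$ are monotone in $z$, the promotion probability there is at least that at $\overline{\mu}_{i-1}$. By induction, after the first step the agent's feature at level $i$ is always at least $\overline{\mu}^\textrm{NG}_i$ for interior levels, while decay keeps it bounded above by $\max_j\overline{\mu}_j$.

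At interior levels $2\le i\le I-1$ we have $\overline{\mu}_i-\mu_i=\overline{w}_2$ (the function $G$ is the same for all interior $i$). Hence $p_i^+/p_i^-\ge \sigma(\alpha\overline{w}_2)/(1-\sigma(\alpha\overline{w}_2))=\sigma^\textrm{NG}/(1-\sigma^\textrm{NG})$, using $p^+/p^-=\sigma_i/(1-\sigma_i)$ (the abstention factor cancels) and monotonicity of $\sigma$. \cref{prop:boundary-pr} ensures this ratio is at least $1$.

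\textbf{Step 2 (NI agent).} Gaming does not change $x$, so $x_t=\gamma^t x_0\to 0$ and eventually the attribute is negligible. Every gaming response then produces feature exactly $\overline{\mu}^\textrm{NI}_i$ at levels with $\underline{\mu}^\textrm{NI}_i\le 0$, that is, for $i\le l$. At levels $i>l$ the agent eventually exerts zero effort, its feature $\approx 0\ll\mu_i$, and $p_i^-/p_i^+=(1-\sigma(-\alpha\mu_i))/\sigma(-\alpha\mu_i)$. I will need to bound this below by $\sigma^\textrm{NI}/(1-\sigma^\textrm{NI})$, that is, show $\mu_i\ge\overline{w}^\textrm{NI}_2$ for $i>l$. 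This should follow from $\underline{\mu}_i^\textrm{NI}>0$ together with $\underline{w}_i\le\overline{w}_i$ and the relation between $G$'s maximizer and the threshold. I expect this to be the main obstacle, and I would first try arguing directly from the shape of $G(i,\cdot)$ under \cref{asm:local-maximizer}. Below $l$, the ratio argument is the same as in Step 1, with drift up toward $l$.

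\textbf{Step 3 (conclusion).} The level process is a birth--death chain whose one-step ratios are controlled as above. For the inhomogeneous NG case, I would couple it with, or dominate it stochastically by, a homogeneous birth--death chain having ratio exactly $\rho=\sigma^g/(1-\sigma^g)$ toward $\hat{i}^g$. Detailed balance gives $\pi_{i}/\pi_{i\pm1}\le 1/\rho$ moving away from the mode, hence $\pi_i\le C\rho^{-|\hat{i}^g-i|}$. The long-run time averages $\hat{q}^g_i$ exist and equal this stationary mass by the ergodic theorem for the resulting positive recurrent countable chain. Transient effects from $x_0$ and the boundary levels $1$ and $I$ affect only the constant. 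A rigorous version would compare the levels' excursion lengths via a monotone coupling rather than rely on literal detailed balance.
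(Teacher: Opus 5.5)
Your outline matches the paper's: a birth--death structure on levels, $p_i^+/p_i^-=\sigma_i/(1-\sigma_i)$ so that abstention cancels, and the NG feature at level $i$ kept above $\overline{\mu}_i^{\textrm{NG}}$ via Definition~\ref{cond:level-cond}(b),(c). However, the step you flag in the NI case is a real gap, and the route you suggest for it cannot work.

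For interior $i>l$ you need $\mu_i\geq\overline{w}_2^{\textrm{NI}}$. From $\underline{\mu}_i^{\textrm{NI}}>0$ you get only $\mu_i>-\underline{w}_i^{\textrm{NI}}$. The trivial inequality $\underline{w}_i\le\overline{w}_i$ turns this into $\mu_i>-\overline{w}_i$, which is vacuous. What is required is the much stronger $\underline{w}_i<-\overline{w}_i$ (Lemma~\ref{app:lemma:bar-larger}).

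The missing idea is that $G(i,\cdot)$ is odd at interior levels. Since $\sigma(-x)=1-\sigma(x)$ and $h(1-s)=h(s)$, one has $G(i,-z)=-G(i,z)$. Hence $G(i,0)=0$, and $-\overline{w}_i$ is a strict local minimizer (oddness plus analyticity; recall $\overline{w}_i>0$).

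Suppose $G(i,\overline{w}_i)\le 0$. Then $G(i,-\overline{w}_i)\ge 0\ge G(i,\overline{w}_i)$, and $G(i,\cdot)$ exceeds $G(i,-\overline{w}_i)$ just to the right of $-\overline{w}_i$. So its maximum over $[-\overline{w}_i,\overline{w}_i]$ is attained at an interior local maximizer whose value exceeds $G(i,\overline{w}_i)$, contradicting Assumption~\ref{asm:local-maximizer}. Therefore $G(i,\overline{w}_i)>0$, which gives $G(i,-\overline{w}_i)=-G(i,\overline{w}_i)<G(i,\overline{w}_i)$. By continuity and the definition of $\underline{w}_i$ as an infimum, $\underline{w}_i<-\overline{w}_i$. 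Thus $\mu_i>-\underline{w}_2^{\textrm{NI}}>\overline{w}_2^{\textrm{NI}}$ for interior $i>l$, which is exactly your needed bound. The terminal level, where $G$ is not odd, never enters through a ratio $p_I^+/p_I^-$, only through a bounded prefactor.

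Your Step 3 also needs repair, though the fix is easy. A per-level bound $p_k^+/p_k^-\ge\rho$ does not give the per-edge relation $\pi_k/\pi_{k+1}=p_{k+1}^-/p_k^+\le 1/\rho$ that you state, because abstention factors at different levels and features need not cancel. It does give your conclusion after regrouping: $\pi_i/\pi_I=\frac{p_I^-}{p_i^+}\prod_{k=i+1}^{I-1}\frac{p_k^-}{p_k^+}$, with a bounded prefactor.

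For NG you do not need a coupling either. The paper works with the stationary distribution of the joint chain $(i,x)$, which lives on a finite closed class. Because levels move by $\pm1$, the cut identity $\tilde{p}_i^+\pi_i^{\textrm{NG}}=\tilde{p}_{i+1}^-\pi_{i+1}^{\textrm{NG}}$ of Theorem~\ref{app:thm:detailed-balance} holds exactly with $x$-averaged probabilities $\tilde{p}_i^{\pm}$. The paper then uses $\tilde{p}_i^+\ge p_i^+(\overline{\mu}_i^{\textrm{NG}})$ and $\tilde{p}_{i+1}^-\le p_{i+1}^-(\overline{\mu}_{i+1}^{\textrm{NG}})$. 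Alternatively, your pointwise ratio bound passes to the averages, $\tilde{p}_k^+\ge\rho\,\tilde{p}_k^-$, and the regrouping above finishes the argument. This route also avoids needing $p^-$ to be monotone in the feature.
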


\cref{thm:long-term-levels} says that for an NI agent, the proportion of visits to each level on either side of level $l$ decays exponentially in their distance to $l$; i.e., such an agent's level is concentrated around  $l$. On the other hand, for an NG agent, the long-term distribution of level decays exponentially in the distance to the highest level $I$, thus  concentrating its most mass around $I$. Consequently, the NG agent achieves probabilistically higher steady-state levels than the NI agent {as long as} $l<I$, i.e., when the terminal threshold is high enough to deter gaming. {We provide experimental evidence on the gaming deterrence effect of the number of levels in \cref{sec:simulation}.} 

\begin{theorem}\label{thm:long-term-attribute}
    Using classifiers given by  \cref{cond:level-cond}, $\hat{x}^\textrm{NI}=0$ and $\hat{x}^\textrm{NG}\geq \gamma\min_{i\in[I]}\overline{\mu}_i^\textrm{NG}$.
\end{theorem}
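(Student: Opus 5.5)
The two claims are proved separately. Both rest on the observation that the attribute only changes through the improvement action, followed by multiplicative depreciation.

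\textbf{NI agent.} Under the NI policy the improvement component is zero, so $\vec{a}^+_t=\vec{0}$ at every step. Hence $\tilde{x}_t=x_t$ and $x_{t+1}=\gamma x_t$, which gives $x_t^{\textrm{NI}}=\gamma^t x_0$ deterministically, whatever the classifier outcomes are. Then
\[
\frac1T\sum_{t=0}^T x_t^{\textrm{NI}}\le \frac{x_0}{T(1-\gamma)}\to 0 ,
\]
and since $x_t\ge0$ we get $\hat{x}^{\textrm{NI}}=0$. No property of the classifiers is used in this step.

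\textbf{NG agent.} The strategy is to show that after a finite transient $\tau$, every attribute satisfies $x_t\ge\gamma\min_{i}\overline{\mu}^{\textrm{NG}}_i=:m$. The Ces\`aro average then has $\liminf\ge m$, because the finite prefix contributes $O(1/T)$. I would proceed in three steps.

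\emph{Step 1 (entering the band).} Let $\tau$ be the first time the agent best-responds with positive effort. From \cref{eq:br-simplified}, if $x_t<\underline{\mu}_{i_t}^{\textrm{NG}}$ the agent does nothing and $x_t$ decays geometrically while the level stays put. Condition (a) gives $\underline{\mu}_1^{\textrm{NG}}\le 0$. Without improvement the level can only move down or stay, since zero effort still allows random promotion; I have to handle that. Two observations cover the cases. At level 1, any $x\ge0$ satisfies $x\ge \underline{\mu}_1$, so either $x<\overline{\mu}_1$ and the agent acts, or $x\ge\overline{\mu}_1$ and $x$ is already large. In general, while idle either $x_t$ decays into $[\underline{\mu}_{i},\overline{\mu}_{i})$ for the current level, or $x_t$ falls below $\underline{\mu}_i$, after which demotions eventually bring the agent to level 1. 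So $\tau<\infty$ almost surely; the transient terms before $\tau$ are finitely many, and for averaging I only need the post-$\tau$ behaviour.

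\emph{Step 2 (invariance).} The claim is that for $t>\tau$ the pre-response attribute satisfies $x_t\ge \underline{\mu}^{\textrm{NG}}_{i_t}$ and $x_t\ge m$. Argue inductively. If at time $t$ the agent acts, then $\tilde{x}_t=\overline{\mu}_{i_t}$, so $x_{t+1}=\gamma\overline{\mu}_{i_t}\ge m$. By condition (c), $x_{t+1}\ge\underline{\mu}_{j}$ for each of $j\in\{i_t-1,i_t,i_t+1\}$ that can occur. For $j=i_t-1$, combine (b) and (c): $\gamma\overline{\mu}_{i_t}>\gamma\overline{\mu}_{i_t-1}\ge\underline{\mu}_{i_t-1}$. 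If instead $x_t\ge\overline{\mu}_{i_t}$, the agent idles and $x_{t+1}=\gamma x_t\ge\gamma\overline{\mu}_{i_t}\ge m$, and the same level comparisons apply with a larger attribute. Since $\underline{\mu}_j\le\overline{\mu}_j$, the attribute always lies in $[\underline{\mu}_{j},\infty)$ at the new level $j$. So the agent either acts, which resets $x$ to $\overline{\mu}$, or idles above $\overline{\mu}$. In both cases the next attribute is at least $m$.

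\emph{Step 3.} Conclude that $\liminf_T\frac1T\sum_t x_t\ge m$ almost surely. Existence of the limit follows from the positive-recurrent countable chain of \cref{sec:level-attribute}, or one can read the inequality with $\liminf$.

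\textbf{Main obstacle.} The delicate part is Step 2 at the boundary levels. At $i=I$ there is no promotion, and at $i=1$ there is no demotion. For an idle agent with $x_t\ge\overline{\mu}_{i_t}$ who gets demoted, I need $\gamma x_t\ge\underline{\mu}_{i_t-1}$; this follows from $x_t\ge\overline{\mu}_{i_t}$ and the chain (b)+(c) above. For an idle agent who gets promoted, I need $\gamma x_t\ge\gamma\overline{\mu}_{i_t}\ge\underline{\mu}_{i_t+1}$, which is exactly (c). So each case should close using only the monotonicity in (b) and the depreciation margin in (c).
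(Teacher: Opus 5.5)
Your argument is correct. For the NG agent its core coincides with the paper's. Your Step~2 invariance uses condition (c) for staying or promotion, and (b) together with (c) for demotion; this is exactly how the paper, in the proof of \cref{app:thm:stationary}, shows that the set $C$ of states with $x\ge\gamma\min\{\overline{\mu}_i^{\textrm{NG}},\overline{\mu}_{i-1}^{\textrm{NG}}\}$ is closed. Your Step~1 plays the role of its transience argument for $A(x_0)$.

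What differs is how the conclusion is drawn. The paper treats the theorem as a corollary of its stationary-distribution analysis. It shows that the unique NG stationary distribution lives on $C$, and that the NI stationary distribution lives on $x=0$ by ruling out mass at any positive attribute. It then passes to time averages through \cref{app:thm:ergodic}, which for NI relies on the $\varepsilon$-truncation and continuity argument of \cref{lemma:go-ergodicity}.

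You argue pathwise instead. The identity $x_t^{\textrm{NI}}=\gamma^t x_0$ gives the NI claim with no Markov-chain theory at all. For NG, every sample path satisfies $x_t\ge m$ after an a.s.\ finite time, which bounds the $\liminf$ of the Ces\`aro averages. Your route delivers the inequality without any stationary-distribution or ergodic-theorem machinery. The paper's route buys more: existence of the limits $\hat{x}^g$, the formula $\hat{x}^g=\sum_{i,x}x\pi_i^g(x)$, and stationary distributions that are reused for \cref{thm:long-term-levels} and \cref{thm:long-term-utility}.

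Two small tightenings.

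First, in Step~1, delete the sentence saying the idle level can only move down or stay. Then justify ``demotions eventually bring the agent to level 1'' as follows. While idle, $x_t\le x_0$, so the per-step demotion probability is bounded below by some $\delta>0$. Any $I-1$ consecutive steps then reach level 1 with conditional probability at least $\delta^{I-1}$. Hence, once $x_t<\overline{\mu}_1^{\textrm{NG}}$, the agent a.s.\ reaches level 1, where condition (a) forces it to act.

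Second, if you want existence of the limit, you need not borrow it from the paper. For $t\ge\tau+1$ the attribute lies in the finite set $\{\gamma^k\overline{\mu}_j^{\textrm{NG}}:k\ge1,\,j\in[I]\}\cap[m,\gamma\overline{\mu}_I^{\textrm{NG}}]$. From then on $(i_t,x_t)$ is a finite-state Markov chain, so its Ces\`aro averages converge almost surely.
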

\cref{thm:long-term-attribute} implies that the NG policy always yields a higher long-term attribute than the NI policy, regardless of the average level they each achieve in the long run (i.e., this holds even for $l=I$).
 
\subsection{Analysis of long-term utility}
We now compare the two types of agents/policy's long-term utility. 
Without loss of generality, we consider evenly-spaced levels, i.e., $\mu_{i-1}-\mu_i\equiv \Delta\mu$ for $2\leq i\leq I$.  {Notice that \cref{cond:level-cond}-(a) is satisfied on if $\underline{w}_1^\textrm{NG}\leq 0$ since $\mu_i\geq0$, a condition easily satisfied by our empirical evidence in \cref{sec:simulation}.}  When it holds, \cref{alg:find-level} shows a procedure that can be used to generate the longest sequence of levels with {constant increment $\Delta\mu$ with a terminal threshold not exceeding some prescribed $M$} (see \cref{app:alg-proof}).
\begin{theorem}\label{thm:long-term-utility}
    {When $\underline{w}_1^\textrm{NG}\leq 0$}, for $\{\mu_i\}_{i}$ generated by \cref{alg:find-level}, we have $\hat{u}^\textrm{NG}\geq\hat{u}^\textrm{NI}$ if 
    \begin{equation*}
        \Delta\mu\leq \frac{2\sigma^\textrm{NI}-1}{\sigma^\textrm{NI}}\left(\frac{2\sigma^\textrm{NI}-1}{(\sigma^\textrm{NI})^2}\overline{w}_l^\textrm{NI}-\underline{w}_2^\textrm{NI}-\frac{c^\textrm{NG}}{c^\textrm{NI}\sigma^\textrm{NI}}\Delta w^\textrm{NG}\right) -\frac{r}{c^\textrm{NI}\sigma^\textrm{NI}}\left(\frac{\sigma^\textrm{NG}(2\sigma^\textrm{NI}-1)}{\sigma^\textrm{NI}(2\sigma^\textrm{NG}-1)}-1\right),
    \end{equation*}
    where $l$, $\sigma^\textrm{NI}$ and $\sigma^\textrm{NG}$ are defined in \cref{thm:long-term-levels} and $\Delta w^\textrm{NG} =\max_{i\in[I]}(\overline{w}_i^\textrm{NG}-\underline{w}_i^\textrm{NG})$.
\end{theorem}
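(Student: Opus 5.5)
The plan is to write each long-term average utility as a reward term minus a cost term, and then compare the two agents term by term. By the Markov-chain reduction of \cref{sec:incentive_improve} and ergodic averaging, $\hat{u}^g = r\sum_i \hat{q}^g_i\,\mathbb{E}[i_{t+1}\mid i_t=i] - c^g\,\overline{A}^g$, where $\overline{A}^g$ is the long-run average action magnitude. Telescoping the level process shows that the time-average of $\mathbb{E}[i_{t+1}\mid i_t]$ equals the time-average level $\sum_i i\,\hat{q}^g_i$. So it suffices to lower-bound the NG expression and upper-bound the NI expression.

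\textbf{Step 1: bound the NG agent.} Under \cref{cond:level-cond}-(c), after every best response the NG agent's depreciated attribute $\gamma\overline{\mu}_i^\textrm{NG}$ lies in or above the incentivizable interval of whatever level it lands in. Hence each action has magnitude at most $\overline{\mu}_j-\underline{\mu}_j = \overline{w}_j^\textrm{NG}-\underline{w}_j^\textrm{NG}\le \Delta w^\textrm{NG}$, except possibly at time $0$, which does not affect averages. This gives the cost bound $c^\textrm{NG}\overline{A}^\textrm{NG}\le c^\textrm{NG}\Delta w^\textrm{NG}$.

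Every action pushes the feature to $\overline{\mu}_i^\textrm{NG}$, so the NG chain is a birth–death chain. For interior levels its up/down ratio is $\sigma^\textrm{NG}/(1-\sigma^\textrm{NG})$, and \cref{prop:boundary-pr} guarantees upward drift. Its stationary law is therefore the truncated geometric of \cref{thm:long-term-levels} centred at $I$, and the mean level is $\sum_i i\hat{q}^\textrm{NG}_i\ge I-\frac{\sigma^\textrm{NG}}{2\sigma^\textrm{NG}-1}+O(1)$, which I would compute explicitly from the geometric sum.

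\textbf{Step 2: bound the NI agent.} The NI agent's attribute decays to $0$ by \cref{thm:long-term-attribute}, so asymptotically it plays $\bar a(i,0)=\overline{\mu}_i^\textrm{NI}$ at levels $i\le l$ and $0$ above $l$. Its mean level is then at most $l+\frac{1-\sigma^\textrm{NI}}{2\sigma^\textrm{NI}-1}$-type corrections, again from the geometric law of \cref{thm:long-term-levels}. Its cost is $c^\textrm{NI}\sum_{i\le l}\hat{q}_i^\textrm{NI}(\mu_i+\overline{w}_i^\textrm{NI})$, and I would lower-bound this using the mass near $l$ (roughly a fraction $\frac{2\sigma^\textrm{NI}-1}{\sigma^\textrm{NI}}$).

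\textbf{Step 3: translate the level gap into thresholds.} To compare $I$ and $l$, I would use the definitions: $l$ is the largest $j$ with $\mu_j+\underline{w}_j^\textrm{NI}\le 0$ reversed appropriately, i.e.\ $\mu_l\approx (l-1)\Delta\mu$ is at most $\overline{w}$-scale quantities. \cref{alg:find-level} ensures the sequence is as long as possible, which gives $I\ge l$.

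\textbf{Step 4: assemble the inequality.} Substituting $\mu_l=(l-1)\Delta\mu$ and the bounds above into $\hat{u}^\textrm{NG}-\hat{u}^\textrm{NI}\ge0$, then dividing by $c^\textrm{NI}\sigma^\textrm{NI}$, should produce the stated condition on $\Delta\mu$. In that condition, the $\overline{w}_l^\textrm{NI}$ and $\underline{w}_2^\textrm{NI}$ terms come from the NI cost, the $\Delta w^\textrm{NG}$ term from the NG cost, and the final $r$-term from the difference of the two geometric mean-level corrections.

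The main obstacle is Step 2/4's careful accounting of the NI agent's stationary cost and mean level near $l$. That requires exact geometric-sum bounds with the boundary levels $1$ and $I$, where transition probabilities differ from the interior value $\sigma(\alpha\overline{w}_2)$. I would handle this by bounding the boundary levels crudely, using \cref{prop:boundary-pr}, and keeping exact expressions only for the interior.
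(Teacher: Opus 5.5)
Your skeleton matches the paper's. Utility is $r$ times the mean level minus the mean cost; your telescoping argument reaches this more directly than the paper's detailed-balance computation. The NG cost is at most $c^{\textrm{NG}}\Delta w^{\textrm{NG}}$, and the NI cost is lower-bounded through the mass at $l$. The genuine gap is in the reward comparison.

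You center the NI level law at $l$ and the NG law at $I$, and plan to bridge them with $I\ge l$. At best this gives $r\bigl(\sum_i i\hat q_i^{\textrm{NG}}-\sum_i i\hat q_i^{\textrm{NI}}\bigr)\ge r\bigl(I-l-a^{\textrm{NG}}-C\bigr)$, where $a^{g}:=\frac{1-\sigma^{g}}{2\sigma^{g}-1}$ and $C\ge0$ accounts for NI mass above $l$. However, the $r$-term in the statement equals $\frac{2\sigma^{\textrm{NI}}-1}{c^{\textrm{NI}}(\sigma^{\textrm{NI}})^2}\cdot r\bigl(a^{\textrm{NG}}-a^{\textrm{NI}}\bigr)$. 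So once the two cost bounds are in place, the condition leaves room for an NI reward advantage of exactly $r(a^{\textrm{NG}}-a^{\textrm{NI}})$. That is a \emph{difference} of the two geometric corrections, with no $I-l$ in it. Your bound delivers this only if $I-l\ge C+a^{\textrm{NI}}$, which fails, e.g., when $l=I$. So Step 4 cannot land on the stated inequality, and the comparison of $I$ with $l$ in Step 3 is not the one that is needed.

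The missing idea is to anchor \emph{both} laws at the top level. Write $\rho_{g}:=(1-\sigma^{g})/\sigma^{g}$. Every NI ratio $\hat q^{\textrm{NI}}_{i+1}/\hat q^{\textrm{NI}}_i$ is at most $1/\rho_{\textrm{NI}}$: it is equal on interior levels below $l$, and below $1$ from $l$ on because $\mu_{l+1}>-\underline w_2^{\textrm{NI}}>\overline w_2^{\textrm{NI}}$. Hence, up to the boundary levels and whatever $l$ is, $\hat q^{\textrm{NI}}$ is dominated in likelihood ratio by $\tilde q^{\textrm{NI}}_i\propto\rho_{\textrm{NI}}^{I-i}$. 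Likewise, $\hat q^{\textrm{NG}}$ dominates $\tilde q^{\textrm{NG}}_i\propto\rho_{\textrm{NG}}^{I-i}$.

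A direct computation with these two truncated geometric laws on $[I]$ then shows that the first mean exceeds the second by at most $\frac{\rho_{\textrm{NG}}}{1-\rho_{\textrm{NG}}}-\frac{\rho_{\textrm{NI}}}{1-\rho_{\textrm{NI}}}=a^{\textrm{NG}}-a^{\textrm{NI}}$. This holds \emph{provided} $\rho_{\textrm{NI}}<\rho_{\textrm{NG}}$, i.e.\ $\sigma^{\textrm{NI}}>\sigma^{\textrm{NG}}$. That ordering comes from $c^{\textrm{NI}}<c^{\textrm{NG}}$: $\partial_z G^{\textrm{NI}}(2,\overline w_2^{\textrm{NG}})=c^{\textrm{NG}}-c^{\textrm{NI}}>0$ forces $\overline w_2^{\textrm{NI}}>\overline w_2^{\textrm{NG}}$. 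Your proposal never uses this ordering, yet the form and sign of the whole $r$-term rest on it.

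Separately, Step 3 points the wrong way for the NI cost. You need a \emph{lower} bound on the thresholds the NI agent pays: $(l-1)\Delta\mu\ge-\underline w_2^{\textrm{NI}}-\Delta\mu$, from the maximality of $l$ with $\mu_i=(i-1)\Delta\mu$. Combined with the arithmetico-geometric sum $\sum_{i\le l}(i-1)\rho_{\textrm{NI}}^{\,l-i}$, this is what produces the $-\underline w_2^{\textrm{NI}}$ term. It also produces the coefficient $\frac{\sigma^{\textrm{NI}}}{2\sigma^{\textrm{NI}}-1}$ on $\Delta\mu$ that becomes the left-hand side.

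A minor point: the NG level process is not exactly birth--death with ratio $\sigma^{\textrm{NG}}/(1-\sigma^{\textrm{NG}})$. After a demotion the agent can sit above $\overline\mu_i^{\textrm{NG}}$ without acting, so only the favorable inequality holds, which is all you need.
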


\cref{thm:long-term-utility} says that as long as the increments in level thresholds are suitably small, the sequence of classifiers can yield  higher long-term utility for the NG agent, {the intuition being that small increments entail lower effort costs per time step, but the cumulative effort results in higher attribute and level attainment.}
Note that \cref{cond:level-cond}-(b) requires $\Delta\mu>0$, meaning the condition in \cref{thm:long-term-utility} {fails if $c^\textrm{NI}\ll c^\textrm{NG}$ or $r$ is very large (cost of cheating is much lower or reward for cheating is high).}  Furthermore, $\sigma^\textrm{NI}:=\sigma(\alpha\overline{w}_2^\textrm{NI})$ is increasing in $\alpha$, so do functions $\sigma\mapsto \frac{2\sigma-1}{\sigma^2}$ and  $\sigma\mapsto \frac{2\sigma-1}{\sigma}$. Thus, the above condition is more likely to hold for larger $\alpha$, i.e., classifiers with higher confidence.
Lastly, since last term in the condition is positive as $\sigma^\textrm{NI}>\sigma^\textrm{NG}$, improvement is more likely to be incentivizable when the reward $r$ is not too large. 

\section{Simulation Results}\label{sec:simulation}

In addition to the properties defined in \cref{sec:long-term}, we will also examine numerically the decision maker's utility, defined as the average accuracy
$$   \mathbb{E}^g_{(i,x)\sim P_\infty}\left\{(1-h(\sigma(\alpha z)))\left[\sigma(\alpha z)\mathbf{1}\{\tilde{x}\geq \mu_i\}+(1-\sigma(\alpha z))\mathbf{1}\{\tilde{x}<\mu_i\}\right]\right\}$$
where $P_\infty$ denotes the stationary state distribution and $g$ denotes the agent's strategy.
In what follows, \cref{sec:sim:levels} examines how the threshold increment $\Delta\mu$ impacts different long-term properties, \cref{sec:sim:mixed} generalizes the agent policy to a mixed myopic strategy and a pure farsighted strategy, and \cref{sec:sim:ablation} summarizes results from various ablation tests; complete results are in provided \cref{app:experiment}.

\subsection{On the impact of threshold increment}
\label{sec:sim:levels}

We fix $\alpha = 4$, $\beta = 0.604$, $\gamma = 0.9$, and simulate for $T = 10,000$ steps; the unit cost for the NI agent and NG agent are 0.75 and 0.80, respectively; the normalized entropy function is used for abstention. The reward per level is $r = 1.0$.
We obtain the thresholds from \cref{alg:find-level} with $M=3$ and $\Delta\mu=M/I$. The generated classifier sequence for each $I$ satisfies $\mu_I=M=3$.
\cref{fig:ex4_summary_panel} compares the long-term properties of NI and NG as $I$ increases.
There is a clear trend that NG achieves increasingly higher levels and utilities than NI, and the average attribute of NG is bounded away from that of NI.

\begin{figure}[!htbp]
    \centering
    \begin{subfigure}[t]{0.33\textwidth}
        \centering
        \includegraphics[width=\textwidth]{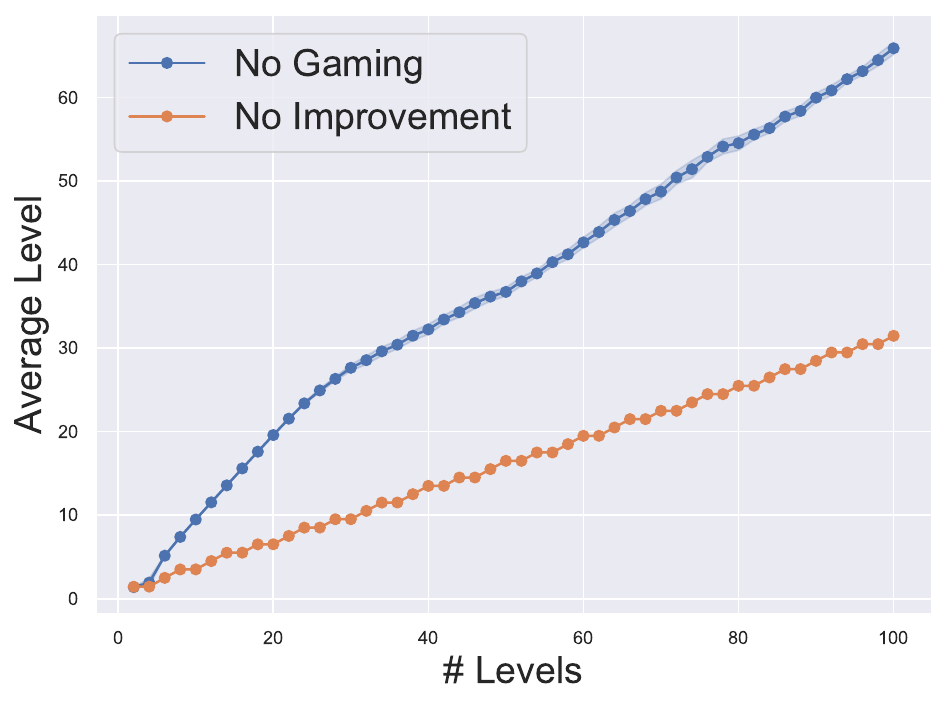}
        \caption{Average levels.}
        \label{fig:ex4_level}
    \end{subfigure}%
    \hfill
    \begin{subfigure}[t]{0.33\textwidth}
        \centering
        \includegraphics[width=\textwidth]{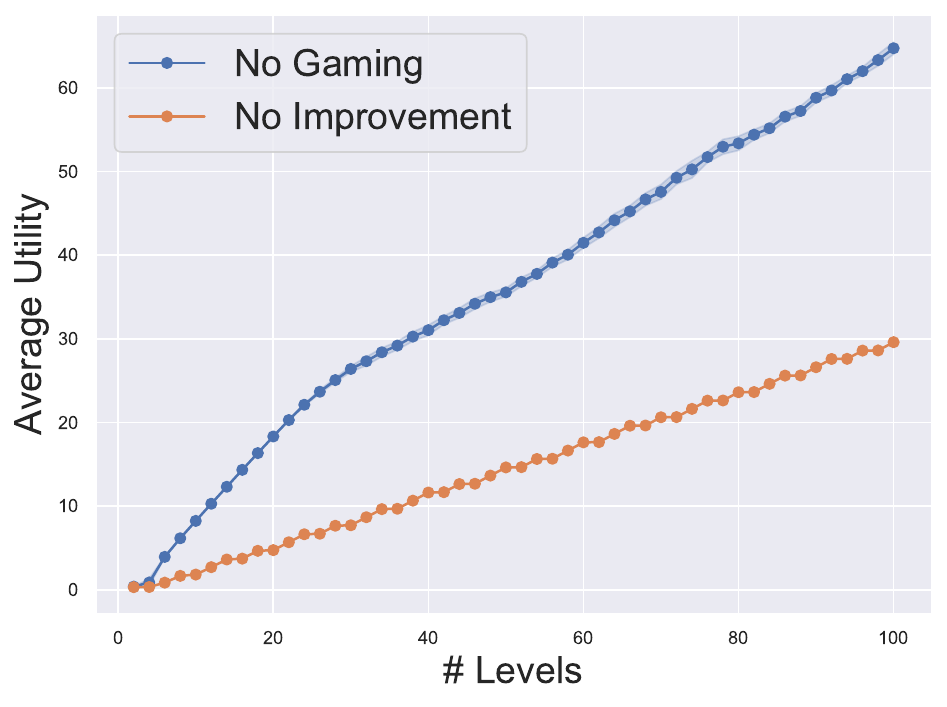}
        \caption{Average utilities.}
        \label{fig:ex4_utility}
    \end{subfigure}%
    \hfill
    \begin{subfigure}[t]{0.33\textwidth}
        \centering
        \includegraphics[width=\textwidth]{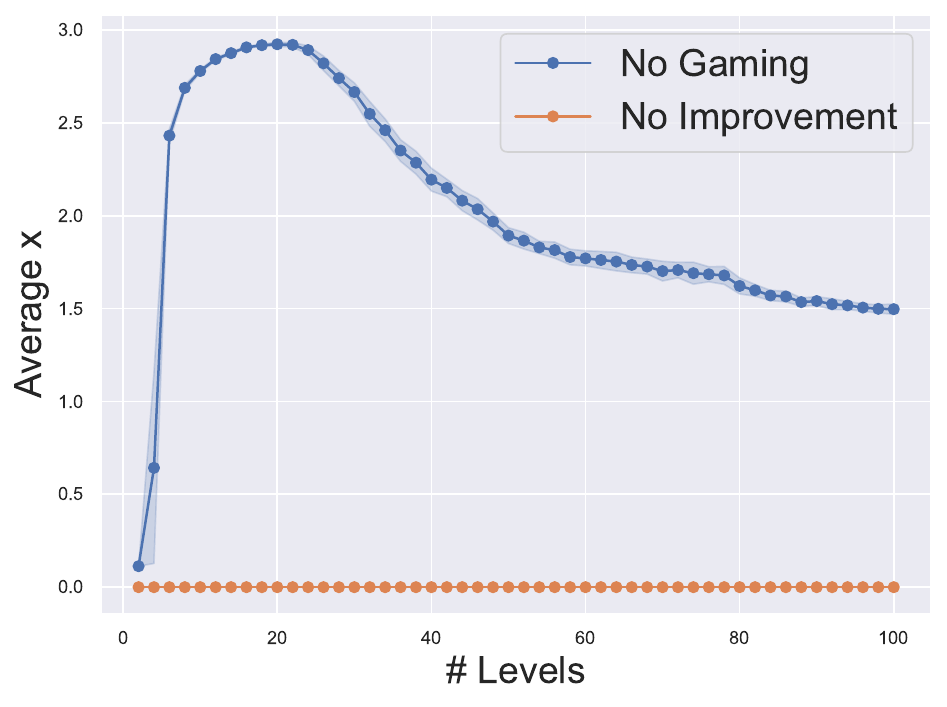}
        \caption{Average attributes.}
        \label{fig:ex4_qualification}
    \end{subfigure}
    \caption{Long-term properties across different numbers of levels ($I$) $\pm2$ standard deviation (STD).}
    \label{fig:ex4_summary_panel}
\end{figure}

\subsection{Mixed strategies and the optimal pure strategy policy}
\label{sec:sim:mixed}

Consider a mixed myopic policy $\textrm{Mix}(\rho)$, whereby
at each time $t$, the agent chooses the optimal instantaneous improvement action with probability $\rho\in(0,1)$ and the optimal instantaneous gaming action with probability $1-\rho$. For this set of experiments, we use the classifiers generated in \cref{sec:sim:levels} with $I=10$, and a polynomial abstention function (see \cref{app:alpha}).
\cref{fig:ex5_stationary} displays the stationary level distribution under $\textrm{Mix}(\rho)$, and highlights that while a small fraction of improvement can be counterproductive (e.g., $\rho<0.3$), a moderate to majority proportion of improvement (e.g., $\rho>0.4$) leads to consistent concentrations at the highest level. \cref{fig:ex5_agent} shows the long-term utility of the agent and the decision maker, respectively. Both long-term utilities increase as more improvement actions are adopted, despite a significant initial drop of the decision maker's utility when the gaming action is the majority (e.g., $\rho<0.1$).
This is because when improvement is lightly mixed into the gaming actions,  the occasional attribute improvement makes the agent reach and cheat at more levels than no improvement at all, leading to a decrease in the classifier's accuracy.

\begin{figure}[htbp]
    \centering
    \hfill
    \begin{subfigure}[t]{0.45\textwidth}
        \centering
        \includegraphics[width=.8\textwidth]{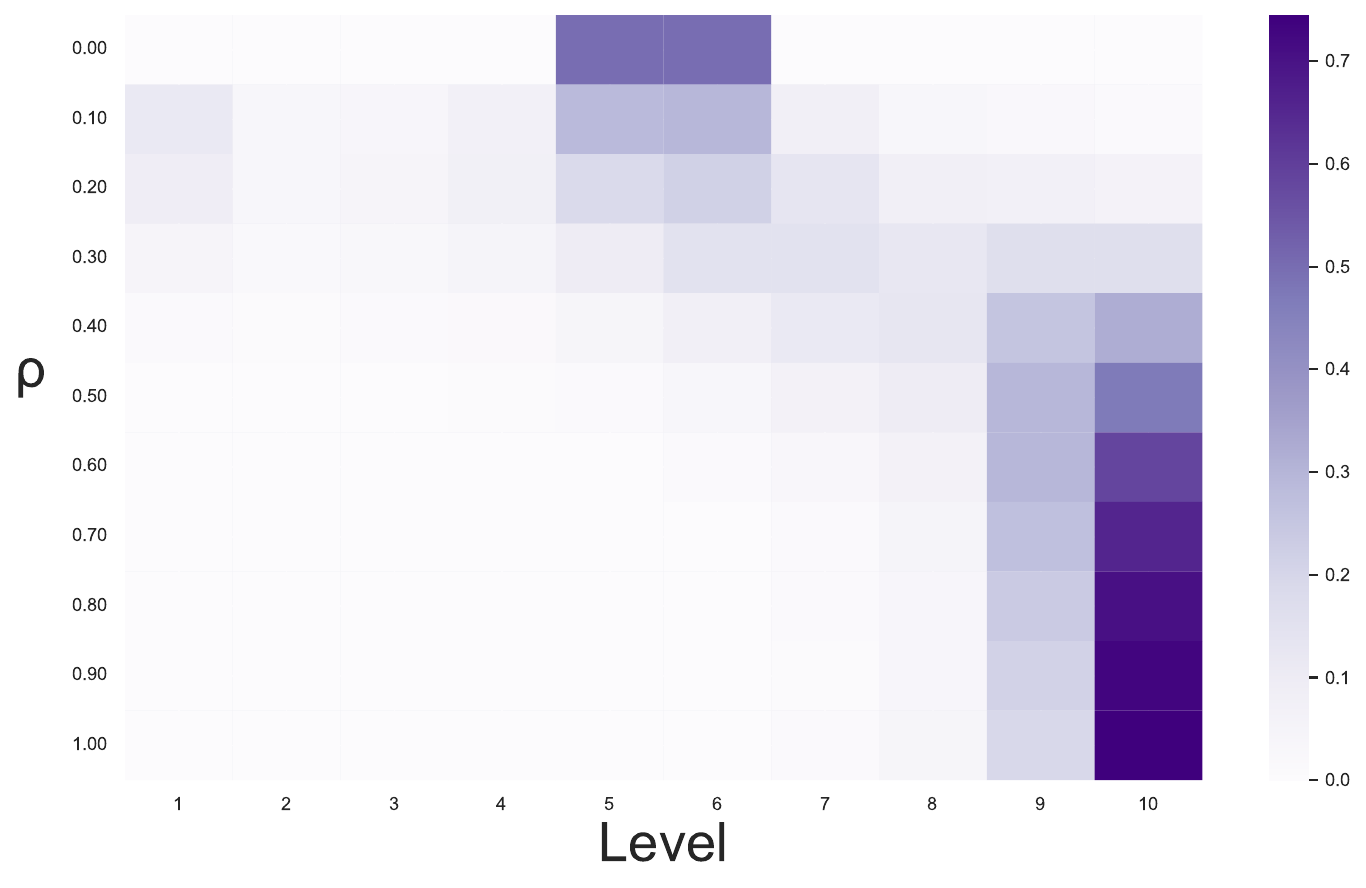}
        \caption{Level distribution under $\textrm{Mix}(\rho)$.}
        \label{fig:ex5_stationary}
    \end{subfigure}
    \hfill
    \begin{subfigure}[t]{0.45\textwidth}
        \centering
        \includegraphics[width=.8\textwidth]{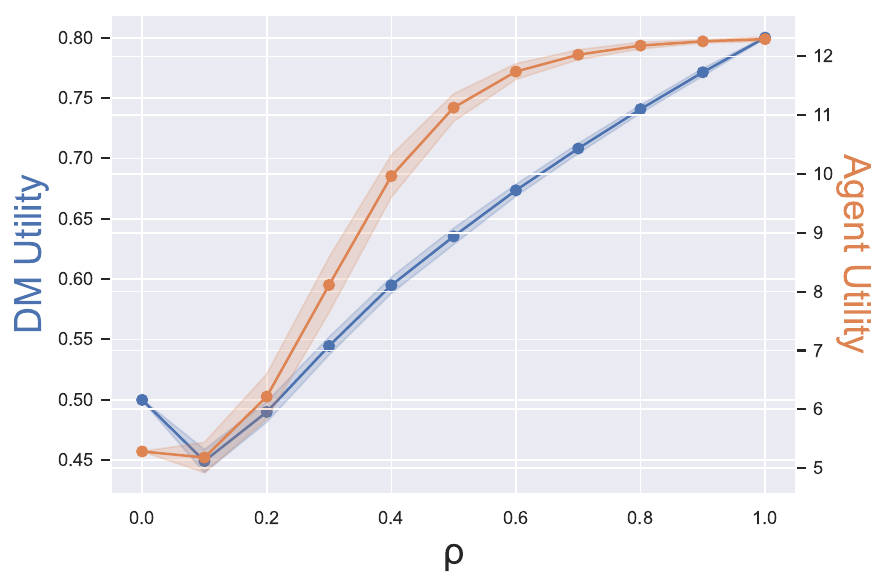}
        \caption{Utilities under $\textrm{Mix}(\rho)$ $\pm2$ STD.}
        \label{fig:ex5_agent}
    \end{subfigure}%
    \caption{Long-term properties under $\textrm{Mix}(\rho)$ versus $\rho$ (probability of no gaming).}
    \label{fig:ex5_summary_panel}
\end{figure}

As a further generalization, we consider the optimal pure strategy policy where the agent can choose either $a^+$, $a^-$, or both, to maximize the following discounted reward over a large finite horizon:
\begin{equation*}
    \mathbb{E}\left[\eta\sum_{t=0}^{T-1}\left(r\cdot i_{t+1}-c^+a_t^+-c^-a_t^-\right)\Bigg|x_0=0\right]~.
\end{equation*}
We first quantize the action/attribute space and then run an RL algorithm for 2000 episodes, each with $T=2500$ steps (see the algorithm detail and other parameter setup in \cref{app:exp10}). The solution corresponds to the true optimal policy with pure strategies and the complete action space.
We find that even when the improvement cost is larger, the optimal policy can still exert more improvement efforts than gaming efforts. This phenomenon is more pronounced with smaller threshold gaps (see details in \cref{app:exp10}), aligning with the implications of \cref{thm:long-term-utility}.

\subsection{Ablation tests}
\label{sec:sim:ablation}

We show that higher rewards $r$ increase long-term average levels for both NI and NG, but narrow their utility gap $\hat{u}^\textrm{NG}-\hat{u}^\textrm{NI}$ (see \cref{app:r}). Raising classifier confidence $\alpha$ increases both strategies' utilities, while NG benefits more consistently (see \cref{app:alpha}). While abstention sensitivity $\tilde{\beta}:=h(1/2)$ has mild effects, absolute-value abstention reduces NG's utility as $\tilde{\beta}$ grows, suggesting smoother functions (e.g., entropy/parabolic) better incentivize improvement (see \cref{app:beta}).

\section{Conclusion and Limitation}
\label{sec:conclusion}
This paper studies a long-term, multi-stage strategic classification problem where decision making is driven by a sequence of progressively more difficult classifiers representing advancement in levels.  We derived conditions for the classifier design that incentivize long-term improvement actions, even when improvement is more expensive than gaming. A major limitation of this study is that the analysis is restricted to various myopic policies. While we do present simulation results, it remains an open question to determine how a farsighted (or truly optimal) policy would behave, and how social well-being measures such as fairness and welfare may be impacted by this type of classifier design. Notably, the setting of multi-stage classification can lead to different fairness issues and new insights compared to the one-shot problems, as depreciation can also differ across sensitivity groups.

\newpage
\bibliography{reference}

\appendix

\section{Related Works}\label{app:related-work}
\paragraph{Strategic classification.}
Strategic classification aims at enhancing the robustness of machine learning models under data providers' strategic manipulation~\cite{hardt_strategic_2015,braverman_role_2020,miller_strategic_2020,milli_social_2019,haghtalab_maximizing_2020,jin_incentive_2022,chen2020strategic,kleinberg_how_2019,chen_learning_2020,bechavod_gaming_2021}. \cite{hardt_strategic_2015} pioneered the research of strategic classification and formulated the problem as a Stackelberg game, where the decision maker leads by publishing a classifier and the agents best respond by strategically manipulating their features. Derivative works have included optimal classifier design in anticipation of the agent's strategic response~\cite{braverman_role_2020,miller_strategic_2020,zrnic_who_2022}, discussions on social welfare~\cite{milli_social_2019,haghtalab_maximizing_2020}, mechanism design that limits manipulations~\cite{jin_incentive_2022,chen2020strategic,kleinberg_how_2019}, and learning algorithms under imperfect information~\cite{chen_learning_2020,bechavod_gaming_2021,dong_strategic_2017}. 



\paragraph{Improvement incentivization.} While a large body of literature has focused on malicious manipulation/gaming actions~\cite{hardt_strategic_2015,braverman_role_2020}, a comparably large amount of works also consider the existence of both improvement (causal to both true attribute and observable feature) and gaming (causal only to the observable feature) actions~\cite{miller_strategic_2020,bechavod_gaming_2021,ahmadi_classification_2022,jin_incentive_2022,chen2020strategic}. \cite{jin_incentive_2022} points out that improvement can only be incentivized when there exists an improvement direction that is more cost efficient than all gaming directions and \cite{bechavod_gaming_2021} shows the possibility of learning such direction when it exists. Improvement incentivization becomes much challenging in a more practical situation when such direction does not exist. \cite{jin_incentive_2022,jin_collaboration_2023,chen2020strategic} propose to leverage external instruments, such as transfer mechanisms, collaborations, or recourse policies, to address this problem. In contrast, the present paper shows the possibility of incentivizing improvement, in the long run, by classifier design only.


\paragraph{Long-term strategic classification.} Long-term impacts of machine learning systems have been extensively studied in the literature. \cite{zhang_group_2019,zhang_how_2020} examine the distribution dynamics of the agent under indefinite and repeated interactions with fair classifiers, where agents' attribute distribution is updated each time as a result of responding to the classification outcome. Performative prediction~\cite{perdomo_performative_2021,jin_performative_2024,jin_addressing_2024} further abstracts the agent's response as the sensitivity of the population distribution and studies the model and population dynamics in the long run. However, long-term properties with explicit best-response actions are less commonly studied, with the exception of \cite{xie_automating_2024} that examines the evolving dynamics between machine learning models and the population where human best responses are taken as annotated samples in the next round of interaction and \cite{zrnic_who_2022} that characterizes the equilibrium outcome of the repeated Stackelberg game between the decision maker and the strategic agent. 
All works discussed above assume a singular effect of the agent's actions (i.e., either causal/improvement or non-causal/gaming to the agent's attribute), while our work is the first to model both improvement and gaming actions explicitly in the study of long-term properties.

\paragraph{Selective classification.} In this work, we adopt the concept of abstention in {\em selective classifiers} or {\em classifiers with rejection option}~\cite{shah_selective_nodate,el-yaniv_foundations_nodate,geifman_selective_2017,lee_fair_2021,ortner_learning_2016}. Selective classifiers reject the prediction in case of high classification loss, which is almost always related to the highly uncertain predictions~\cite{franc_optimal_nodate}, regardless of what loss functions are adopted. This was first introduced by~\cite{el-yaniv_foundations_nodate}, who introduced the risk-overage (RC) trade-off that balances prediction accuracy against coverage, i.e., the fraction of instances that the classifier is willing to classify. Derivative works have expanded selective classification in multiple ways, including generalization to arbitrary uncertainty score~\cite{franc_optimal_nodate}, adoption in deep learning models~\cite{geifman_selective_2017}, and fairness analysis of selective classification~\cite{lee_fair_2021}.

\section{Direction of the best-response action} 
\label{app:opt-action-dir}

Our result in this first step is a relatively minor generalization of a result in \cite{jin_incentive_2022}. For this reason, we will focus on illustrating the intuition behind the result and have included the technical details in \cref{app:sec:best-response}. \cref{fig:direction} illustrates a two-dimensional action space where the return on investment (ROI) differs: $\frac{\theta_1}{c_1}>\frac{\theta_2}{c_2}$. For an attribute $x$ below the decision threshold, a fixed investment in the higher-ROI direction  (i.e., $a_1$) always brings the agent closer to the decision boundary than the other (i.e., $a_2$). Thus, the agent's optimal action is to invest exclusively in $a_1$.
While non-convex utility functions may produce multiple separated optimal actions, these solutions differ only in magnitude, not direction, as shown in \cref{fig:direction}. Each magnitude corresponds to a unique estimated post-response qualification $\hat{y}$.
When multiple directions share the same ROI (i.e., $\frac{\theta_i}{c_i}=\frac{\theta_j}{c_j}$ for $i\neq j$), optimal actions form convex sets aligned with the highest ROI-directions. Each set corresponds to a unique $\hat{y}$ and consists of all convex combinations of the unidirectional actions that achieve that $\hat{y}$. This is depicted in \cref{fig:convex-comb}.

Let $K=\argmax_{l\in[2d]}\frac{\tilde{\theta}_l}{c_l}$, {where $\tilde{\vec{\theta}}:=[\vec{\theta};\vec{\theta}]\in\mathbb{R}_{\geq0}^{2d}$}, consists of the highest-ROI directions. 
Notice $K$ is independent of the state ($i$ and $x$). We break ties by letting the agent invest exclusively in an arbitrary direction in $K$, which is then fixed throughout the decision process. This allows us to focus on a one-dimensional problem (see \cref{sec:myopic:magnitude}). 
It can also be shown that the magnitude of this unidirectional optimal action, denoted as $\overline{a}_t$, depends on $\vec{x}_t$ only through $x_t:=\iprod{\vec{\theta}}{\vec{x}_t}$. Thus, we will write this optimal action as $\overline{a}(i,x_t)$ and adopt the following simplified notations:
\begin{eqnarray*}\label{eq:notation-simplification}
\tilde{x}_t:=\iprod{\vec{\theta}}{\tilde{\vec{x}}_t},\; z_t:= \iprod{\vec{\theta}}{\vec{z}_t}=\hat{y}_t,\;a_t:=\iprod{\vec{\theta}}{\vec{a}_t},\;s_t:=(i_t,x_t),\;\textrm{and}\;c:= c_{\min K}/\tilde{\theta}_{\min K}~. 
\end{eqnarray*}
We refer to $x_t$/$\tilde{x}_t$, $z_t$, $a_t$, and $s_t$ as well as their vectorized versions interchangeably as pre-/post-response attribute, observable feature, action, and state, respectively.  
The instantaneous utility can also be expressed in terms of the simplified notations by $u(i_t,x_t,a_t) = \mathbb{E}[ri_{t+1}|i_t]- c a_t$.

\section{Supplementaries of Section~\ref{sec:incentive_improve}}
\subsection{Finite Local Maximizer of $G(i,\cdot)$}\label{app:finite:lm}
\begin{proposition}
    The function $G(i,\cdot)$ has finite local maximizers for every $i$.
\end{proposition}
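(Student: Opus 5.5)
We want to show that for each $i\in[I]$ the set of local maximizers of $G(i,\cdot)$ on $\mathbb{R}$ is finite. The plan has two ingredients: analyticity, which controls the critical points on any compact set, and the asymptotic behaviour of $G'(i,\cdot)$, which rules out critical points far from the origin.

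Write $g(z):=r(1-h(\sigma(\alpha z)))\,\phi_i(\sigma(\alpha z))$, where $\phi_1(s)=s$, $\phi_i(s)=2s-1$ for $2\le i\le I-1$, and $\phi_I(s)=-(1-s)$. Then $G(i,z)=g(z)-cz$. By \cref{assum:sigma-h}, $\sigma$ and $h$ are analytic, so $g$ is a composition and product of real-analytic functions and is real-analytic on $\mathbb{R}$. Hence $G'(i,\cdot)=g'-c$ is real-analytic.

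\textbf{Step 1: $G'(i,\cdot)$ is not identically zero.} If it were, $G(i,z)=G(i,0)$ for all $z$, so $g(z)=g(0)+cz$. But $g$ is bounded: $|g|\le r$ because $0\le h\le 1$ and $|\phi_i|\le 1$. (We can take $h\le 1$ since it is a probability.) A bounded function cannot equal $g(0)+cz$ with $c>0$, since \cref{assum:cost} gives $c>0$. So $G'(i,\cdot)\not\equiv 0$, and its zeros are isolated. Every local maximizer of the differentiable function $G(i,\cdot)$ is such a zero, so any compact interval contains only finitely many local maximizers.

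\textbf{Step 2: no critical points outside a compact set.} This is the main obstacle. I will show $g'(z)\to 0$ as $|z|\to\infty$, so that $G'(i,z)\to -c<0$ and there is some $M$ with $G'(i,z)<0$ for all $|z|>M$. Then all local maximizers lie in $[-M,M]$, and Step 1 finishes the proof.

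For the limit, note that $g(z)$ has finite limits as $z\to\pm\infty$. Indeed $\sigma(\alpha z)\to 1$ or $0$ by \cref{assum:sigma-h}, and $h(0)=h(1)=0$ by symmetry, so $g\to r\,\phi_i(1)$ or $r\,\phi_i(0)$. A convergent function need not have a vanishing derivative, so I need more. I would use the chain rule,
\[
g'(z)=\alpha\,\sigma'(\alpha z)\,\psi_i(\sigma(\alpha z)),
\]
where $\psi_i$ is the derivative in $s$ of $r(1-h(s))\phi_i(s)$. Since $h$ is analytic on $[0,1]$, $\psi_i$ is bounded there, so it suffices to have $\sigma'(x)\to 0$ as $|x|\to\infty$.

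I would justify this in one of two ways. One option is to take it as part of the natural reading of \cref{assum:sigma-h}. It holds for the logistic function, and analyticity together with monotone convergence to $1$ typically gives it. The other option is to avoid it: suppose there were infinitely many local maximizers, necessarily unbounded, say $z_k\to+\infty$. Between consecutive local maximizers $G(i,\cdot)$ has a local minimizer, so $G(i,\cdot)$ oscillates at arbitrarily large $z$. Any rise of $G$ on an interval forces $g$ to gain at least $c$ times the interval length on that interval. However, $g$ is bounded, and being a smooth function of the monotone $\sigma$ that converges at infinity, it should have finite total variation near infinity. This would bound the total length of the intervals on which $G$ increases, which I would use to reach a contradiction. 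The first route is cleaner, so I would present it and note the decay of $\sigma'$ as the mild regularity being used.
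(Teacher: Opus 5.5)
Step~1 is correct, but Step~2 has a genuine gap: the decay $\sigma'(x)\to0$ as $|x|\to\infty$ does not follow from Assumption~\ref{assum:sigma-h}. Your remark that analyticity plus monotone convergence to $1$ ``typically gives it'' is false. No argument can close this gap, because the statement itself is false under Assumption~\ref{assum:sigma-h} alone. Take
\[
f(x)=e^{-x^2}+\sum_{n\ge1}\left(\frac{A}{1+A^2n^4(x-n)^2}+\frac{A}{1+A^2n^4(x+n)^2}\right),\qquad \sigma(x)=\frac{1}{Z}\int_{-\infty}^{x}f(t)\,dt,\quad Z=\int_{\mathbb{R}}f.
\]
The poles $\pm n\pm i/(An^2)$ are non-real and accumulate only at infinity. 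Hence the series converges locally uniformly on a complex neighbourhood of $\mathbb{R}$, and $f$ is real-analytic. Moreover $f$ is even and positive, and each bump has mass $\pi/n^2$, so $Z=\sqrt{\pi}+\pi^3/3$ does not depend on $A$. Thus $\sigma$ is analytic, strictly increasing, tends to $1$, and satisfies $\sigma(x)=1-\sigma(-x)$. Yet $\sigma'(n)\ge A/Z$ for every $n$, while $\sigma'(n+\frac12)\to0$.

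In your notation, $\psi_i(1)\ge r>0$ for every $i$ and every admissible $h$, since $h(1)=0$ and $h'(1)=-h'(0)\le0$. So for $A>2cZ/(\alpha r)$ and all large $n$ we get $\partial_zG(i,n/\alpha)>0$ and $\partial_zG(i,(n+\frac12)/\alpha)<0$. This gives a local maximizer of $G(i,\cdot)$ in each such interval, hence infinitely many. Your fallback total-variation route fails for the same reason. It only bounds the total length of the intervals on which $G(i,\cdot)$ increases, and that is compatible with infinitely many such intervals (here their widths are of order $1/(An^2)$).

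So present your first route, but state $\sigma'(x)\to0$ (true for the logistic function) as an explicit extra hypothesis rather than a consequence of Assumption~\ref{assum:sigma-h}. With it, your argument is complete, and it is more complete than the paper's own proof. The paper only runs the identity-theorem argument. It shows that critical points of $G(i,\cdot)$ cannot accumulate at a finite point. It asserts without justification that $\partial_zG(i,\cdot)\equiv0$ is contradictory; your Step~1 supplies exactly this. It never addresses critical points escaping to infinity, so it establishes only local finiteness.

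Local finiteness together with $G(i,z)\to-\infty$ as $z\to\infty$ (from $|g|\le r$ and $c>0$) already gives what the paper uses right after Proposition~\ref{prop:br-form}: that $\argmax_{z\ge x-\mu_i}G(i,z)$ is finite. That needs no decay assumption. By contrast, the existence of $\max\textrm{LM}_i$ in Assumption~\ref{asm:local-maximizer} does require something like your decay hypothesis.
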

\begin{proof}
    Since $G(i,\cdot)$ is analytic, it is sufficient to show that $\frac{\partial G(i,z)}{\partial z}$ has finite roots. Let $LM_i\subseteq\mathbb{R}$ contains all critical of $G(i,\cdot)$. Let $\overline{z}\in\mathbb{R}$ be an accumulation point of $LM_i$. This implies that there exists a sequence $\{z_n\}_n\subseteq LM_i$ such that $z_n\to\overline{z}$. Due to continuity of $\frac{\partial G(i,z)}{\partial z}$, we have $\frac{\partial G(i,\overline{z})}{\partial z}=\lim_{n\to\infty}\frac{\partial G(i,z_n)}{\partial z}=0\implies \overline{z}\in LM_i$. By the identity theorem of analytic functions~\cite{krantz_primer_2002}, $\frac{\partial G(i,z)}{\partial z}\equiv 0$ on $z\in\mathbb{R}$ which is a contradiction. 
\end{proof}

\subsection{Detailed Analysis of the Structure of the Optimal Actions}\label{app:sec:best-response}
Define the {\em instantaneous best-response set} as $BR(i,\vec{x})=\argmax_{\vec{a}\in\mathbb{R}_{\geq0}^{2d}}u(i,\vec{x},\vec{a})$. We show that the best-response set in this model is a union of convex hulls. For easier subscripting, introduce $\tilde{\vec{\theta}}:=[\vec{\theta};\vec{\theta}]\in\mathbb{R}_{\geq0}^{2d}$ and then the estimated qualification can be written as $\hat{y}_t=\iprod{\vec{\theta}}{\vec{x}_t}+\iprod{\tilde{\vec{\theta}}}{\vec{a}_t}$.

\begin{lemma}\label{lemma:compact-maximizer}
    Let $f:\mathbb{R}\supset D\mapsto\mathbb{R}$ be continuous with compact domain $D$. Then, the set of maximizers $M=\argmax_{x\in D}f(x)$ is nonempty and compact. 
\end{lemma}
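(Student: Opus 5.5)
The plan is to argue nonemptiness via the extreme value theorem and compactness via closedness plus boundedness. Throughout, $f$ is continuous on the compact set $D\subset\mathbb{R}$, and we assume $D\neq\emptyset$ (otherwise the $\argmax$ is vacuous).

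\emph{Nonemptiness.} The image $f(D)$ is compact in $\mathbb{R}$, because continuous images of compact sets are compact. It is therefore closed and bounded, so $m:=\sup_{x\in D}f(x)$ is finite and belongs to $f(D)$. Hence some $x^\ast\in D$ satisfies $f(x^\ast)=m$. To keep the argument sequential, we can instead take $x_n\in D$ with $f(x_n)\to m$. By Bolzano--Weierstrass, and since $D$ is closed, we extract a subsequence converging to some $x^\ast\in D$. Continuity then gives $f(x^\ast)=m$, so $M\neq\emptyset$.

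\emph{Compactness.} Write
\[
M=\{x\in D: f(x)=m\}=f^{-1}(\{m\})\cap D .
\]
Since $\{m\}$ is closed and $f$ is continuous on $D$, the set $M$ is closed relative to $D$. Because $D$ is itself closed in $\mathbb{R}$, $M$ is closed in $\mathbb{R}$. Also $M\subseteq D$, and $D$ is bounded, so $M$ is bounded. By Heine--Borel, $M$ is compact.

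There is no real obstacle here. The only point needing care is the relative-closedness step: $M$ must be closed in $\mathbb{R}$, not merely closed relative to $D$. This follows because a relatively closed subset of a closed set is closed. Alternatively, take any sequence in $M$ converging to a limit; the limit lies in $D$ by closedness of $D$, and it satisfies $f=m$ by continuity, so it lies in $M$.
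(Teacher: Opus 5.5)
Your proposal is correct and takes essentially the same route as the paper: the Extreme Value Theorem gives nonemptiness, and closedness (by continuity, equivalently via the preimage of $\{m\}$) plus boundedness inherited from $D$ gives compactness by Heine--Borel. You also make explicit two points the paper leaves implicit: the limit of a sequence in $M$ lies in $D$ because $D$ is closed, and $D$ must be nonempty.
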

\begin{proof}
    By the Extreme Value Theorem, $f$ attains its maximum in $D$. Thus, $M$ is nonempty. Let $f(D)$ denote the (compact) range of $f$ over $D$. Thus, $M=\{x\in D:f(x)=\sup f(D)\}$. Let $(x_n)_n\subset M$ be an converging sequence with limit $\overline{x}\in D$. Then, $f(x_n)\to f(\overline{x})$ by continuity of $f$. Since $x_n\in M$, the sequence $f(x_n)$ is constant and equal to $\sup f(D)$ for all $n$. Thus, $f(\overline{x})=\sup f(D)$, implying $\overline{x}\in M$. So, $M$ is closed. Since $M$ is bounded because $D$ is bounded, we conclude that $M$ is compact. 
\end{proof}

\begin{lemma}\label{lemma:eq-space}
    Set $\vec{u}\in\mathbb{R}_{>0}^n$ and $\phi>0$. Then, $\{\vec{v}\in\mathbb{R}_{\geq0}^n:\iprod{\vec{u}}{\vec{v}}=\phi\}= \textrm{conv}\{(\phi/u_n)\vec{e}_n:n=1,\dots,N\}$.
\end{lemma}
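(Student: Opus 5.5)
We prove the two inclusions separately. The statement concerns $S:=\{\vec{v}\in\mathbb{R}^n_{\geq0}:\iprod{\vec{u}}{\vec{v}}=\phi\}$ and the convex hull $C:=\textrm{conv}\{(\phi/u_k)\vec{e}_k:k=1,\dots,n\}$; we read the index $N$ in the statement as $n$. The argument is elementary and needs no earlier result from the paper beyond the notation.

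\emph{Inclusion $C\subseteq S$.} Each vertex $\vec{p}_k:=(\phi/u_k)\vec{e}_k$ is nonnegative, because $\phi>0$ and $u_k>0$. It also satisfies $\iprod{\vec{u}}{\vec{p}_k}=u_k\cdot\phi/u_k=\phi$. The set $S$ is the intersection of the convex cone $\mathbb{R}^n_{\geq0}$ with an affine hyperplane, so $S$ is convex. Since $S$ contains every vertex $\vec{p}_k$, it contains their convex hull $C$.

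\emph{Inclusion $S\subseteq C$.} Take any $\vec{v}\in S$ and set $\lambda_k:=u_kv_k/\phi$ for each $k$. Each $\lambda_k\geq0$, since $u_k>0$, $v_k\geq0$ and $\phi>0$. The weights sum to one because $\sum_k\lambda_k=\iprod{\vec{u}}{\vec{v}}/\phi=1$. Finally,
\[\sum_k\lambda_k\vec{p}_k=\sum_k\frac{u_kv_k}{\phi}\cdot\frac{\phi}{u_k}\vec{e}_k=\sum_k v_k\vec{e}_k=\vec{v},\]
so $\vec{v}$ is a convex combination of the $\vec{p}_k$, i.e.\ $\vec{v}\in C$.

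There is no real obstacle here. The only point needing care is the choice of barycentric weights $\lambda_k=u_kv_k/\phi$; the strict positivity of $\vec{u}$ and $\phi$ makes them well defined and nonnegative. In the paper's later use, where some entries of $\tilde{\vec{\theta}}$ may be zero, this lemma would be applied only to the coordinates with positive weight, namely the highest-ROI directions in $K$.
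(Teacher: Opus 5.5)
Your proof is correct and follows the paper's argument: the same barycentric weights $\lambda_k=u_kv_k/\phi$ give $S\subseteq C$. For the reverse inclusion, your appeal to convexity of $S$ is just a repackaging of the paper's direct check that $\iprod{\vec{u}}{\sum_k\lambda_k(\phi/u_k)\vec{e}_k}=\phi$, and you also state explicitly the nonnegativity of the vertices, which the paper leaves implicit.
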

\begin{proof}
    For $\vec{v}$ satisfying $\iprod{\vec{u}}{\vec{v}}=\sum_nu_nv_n=\phi$, define $\lambda_n:=u_nv_n/\phi$. Notice $\sum_n\lambda_n=1$ and $\lambda_n\geq0$, $\forall n$. Thus, $\vec{v}=\sum_nv_n\vec{e}_n=\sum_n\lambda_n(\phi/u_n)\vec{e}_n$, implying $\vec{v}\in\textrm{conv}\{(\phi/u_n)\vec{e}_n:n=1,\dots,N\}$.

    For the reverse direction, let $\lambda_n\geq0$, $\forall n$, and $\sum_n\lambda_n=1$. Then, we have $\iprod{\vec{u}}{\sum_n\lambda_n(\phi/u_n)\vec{e}_n}=\sum_n\lambda_n(\phi/u_n)u_n=\phi\sum_n\lambda_n=\phi$. This implies $\sum_n\lambda_n(\phi/u_n)\vec{e}_n\in\{\vec{v}\in\mathbb{R}_{\geq0}^n:\iprod{\vec{u}}{\vec{v}}=\phi\}$.
\end{proof}

\begin{lemma}\label{lemma:substitutability}
    For every $\overline{\vec{a}}\in BR(\vec{s})$, we have $\overline{a}_k=0$, $\forall k\notin K$.
\end{lemma}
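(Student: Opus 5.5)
We argue by an exchange argument. Fix a state $\vec{s}=(i,\vec{x})$ and suppose, for contradiction, that some $\overline{\vec{a}}\in BR(\vec{s})$ has $\overline{a}_k>0$ for some $k\notin K$. The key observation is that the instantaneous utility depends on the action only through two scalars: the estimated qualification $\hat{y}=\iprod{\vec{\theta}}{\vec{x}}+\iprod{\tilde{\vec{\theta}}}{\vec{a}}$, which determines $\mathbb{E}[r\cdot i_{t+1}\mid i_t]$ via $p_i^\pm(\hat{y}),p_i^0(\hat{y})$, and the linear cost $\iprod{\vec{c}}{\vec{a}}$. So it suffices to construct another feasible action $\vec{a}'\in\mathbb{R}^{2d}_{\geq0}$ with the same $\hat{y}$ but strictly smaller cost; this yields $u(i,\vec{x},\vec{a}')>u(i,\vec{x},\overline{\vec{a}})$, contradicting optimality.

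For the construction, pick any $k^\star\in K$, so that $\tilde{\theta}_{k^\star}/c_{k^\star}=\max_l \tilde{\theta}_l/c_l>0$ (positive because $\vec{\theta}\neq\vec{0}$ and all costs are finite and positive by \cref{assum:cost}). Define $\vec{a}'$ by moving all mass from coordinate $k$ to $k^\star$ while preserving the $\tilde{\vec{\theta}}$-weighted contribution: set $a'_k=0$, $a'_{k^\star}=\overline{a}_{k^\star}+\overline{a}_k\tilde{\theta}_k/\tilde{\theta}_{k^\star}$, and leave all other coordinates unchanged. Then $\vec{a}'\geq\vec{0}$, and $\iprod{\tilde{\vec{\theta}}}{\vec{a}'}=\iprod{\tilde{\vec{\theta}}}{\overline{\vec{a}}}$, so the transition probabilities and hence the expected reward are unchanged.

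For the cost comparison, the cost difference equals $\overline{a}_k\bigl(c_k-c_{k^\star}\tilde{\theta}_k/\tilde{\theta}_{k^\star}\bigr)=\overline{a}_k\,\tilde{\theta}_k\bigl(c_k/\tilde{\theta}_k-c_{k^\star}/\tilde{\theta}_{k^\star}\bigr)$ when $\tilde{\theta}_k>0$. This is strictly positive because $k\notin K$ means $\tilde{\theta}_k/c_k<\tilde{\theta}_{k^\star}/c_{k^\star}$. If instead $\tilde{\theta}_k=0$, the coordinate has no effect on $\hat{y}$, and simply zeroing it saves $c_k\overline{a}_k>0$. In either case the cost strictly drops, which completes the contradiction.

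The main subtlety I anticipate is this degenerate case $\tilde{\theta}_k=0$, where the ROI ratio is $0$ and the rescaling formula divides by zero. Handling it separately as above is straightforward. Beyond that, I only need to note that the expected reward genuinely depends on $\vec{a}$ solely through $\hat{y}$, which is immediate from \cref{eq:decision_outcome} and the definition of $\sigma_i$.
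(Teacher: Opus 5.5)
Your proof is correct, but it takes a different route from the paper. The paper writes the Lagrangian $L(\vec{a},\vec{\lambda})=\psi(\iprod{\vec{\theta}}{\vec{x}+P\vec{a}})-\iprod{\vec{c}}{\vec{a}}+\iprod{\vec{\lambda}}{\vec{a}}$ and uses the first-order condition $\psi'(\hat{y})\tilde{\theta}_l-c_l+\lambda^*_l=0$. Since $\vec{\lambda}^*\geq\vec{0}$, this forces $\psi'(\hat{y})\leq\min_l c_l/\tilde{\theta}_l$, hence $\lambda^*_k>0$ for every $k\notin K$, and complementary slackness (left implicit) gives $\overline{a}_k=0$.

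You instead make a direct, global exchange: move the $\tilde{\vec{\theta}}$-weighted mass of coordinate $k$ onto a maximal-ROI coordinate $k^\star$. This leaves $\hat{y}$, and hence every transition probability, unchanged while strictly lowering the cost. Your argument needs only two facts: the expected reward depends on $\vec{a}$ through $\iprod{\tilde{\vec{\theta}}}{\vec{a}}$, and costs are linear with positive coefficients. It uses no differentiability of $\psi$, so it would survive, e.g., a hard-threshold classifier without the smooth $\sigma$ and $h$. It also needs no KKT machinery or complementary slackness, and it treats $\tilde{\theta}_k=0$ explicitly rather than through the convention $c_l/0=\infty$.

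What the paper's route buys in exchange is the marginal condition $\psi'(\hat{y})\tilde{\theta}_l\leq c_l$, with equality on active coordinates. This is a first-order characterization of the optimum, the multi-dimensional counterpart of the stationarity condition $\partial G(i,z)/\partial z=0$ used later.

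One small simplification: your case split is not needed. The reassignment only divides by $\tilde{\theta}_{k^\star}>0$, and the unfactored saving $\overline{a}_k\bigl(c_k-c_{k^\star}\tilde{\theta}_k/\tilde{\theta}_{k^\star}\bigr)$ is positive in both cases, because $c_k\tilde{\theta}_{k^\star}>c_{k^\star}\tilde{\theta}_k$ is exactly the statement $k\notin K$.
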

\begin{proof}
    Denote $\psi(h(\vec{z}))=\psi(\iprod{\vec{\theta}}{\vec{z}}):=\mathbb{E}[D_t(i,\vec{z})]$ which is differentiable by the definition of $D_t$. Then, the Lagrangian of the instantaneous utility (\cref{eq:inst-utility}) is $L(\vec{a},\vec{\lambda})=\psi(\iprod{\vec{\theta}}{\vec{x}+P\vec{a}})-\iprod{\vec{c}}{\vec{a}}+\iprod{\vec{\lambda}}{\vec{a}}$ where $\vec{\lambda}\in\mathbb{R}_{\geq0}^{2d}$ and $P=\begin{pmatrix}
        I_{d} & I_{d}
    \end{pmatrix}$ is a projection matrix with $I_d$ representing the $d\times d$ identity matrix. Denote the solution to $\max_{\vec{a}}\min_{\vec{\lambda}}L(\vec{a},\vec{\lambda})$ as $(\vec{a}^*,\vec{\lambda}^*)$. Notice $\vec{a}^*$ will be the instantaneous best response $\overline{\vec{a}}$. The first-order condition on $\vec{a}$ yields
    \begin{equation}
         \psi'(\iprod{\vec{\theta}}{\vec{x}+P\vec{a}^*})P^T\vec{\theta}-\vec{c}+\vec{\lambda}^* = 0
    \end{equation}
    which implies $\psi'(\iprod{\vec{\theta}}{\vec{x}+P\vec{a}^*})P^T\vec{\theta}\leq \vec{c}$. This is equivalent to $\psi'(\iprod{\vec{\theta}}{\vec{x}+P\vec{a}^*})\leq \min_{l\in[2d]}\frac{c_{l}}{\theta_{l\bmod d}}$ (assume $c_l/0=\infty$). Therefore, for $k\notin K$, we have $\lambda^*_j>0$, implying $a_j^*=0$.
\end{proof}

The next theorem presents a formal statement of the findings. It is worth noting that this applies to arbitrary (not necessarily convex) utility functions.
\begin{theorem}\label{app:prop:best-response}
        Define $K=\argmax_{l\in[2d]}\frac{\tilde{\theta}_l}{c_l}$. For any $(i,\vec{x})\in[I]\times\mathcal{X}$, there exists a nonempty and compact subset $\Phi=\Phi(\iprod{\vec{\theta}}{\vec{x}})\subset\mathbb{R}_{\geq0}$ such that 
    \begin{equation*}
        BR(i,\vec{x})=\bigcup\limits_{\phi\in \Phi}\textrm{conv}\left\{\frac{\phi}{\tilde{\theta}_k}\vec{e}_k:k\in K\right\}.
    \end{equation*}
\end{theorem}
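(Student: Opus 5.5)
The idea is to reduce the $2d$-dimensional problem to a scalar one, since $u$ depends on $\vec{a}$ only through the scalar $\phi:=\iprod{\tilde{\vec{\theta}}}{\vec{a}}$ and the cost $\iprod{\vec{c}}{\vec{a}}$. Write $u(i,\vec{x},\vec{a})=\psi(\iprod{\vec{\theta}}{\vec{x}}+\phi)-\iprod{\vec{c}}{\vec{a}}$, where $\psi(z)=\mathbb{E}[r\,i_{t+1}\mid i]$ is continuous in the observable score $z$. For a fixed level $\phi\ge 0$, the cheapest way to reach $\iprod{\tilde{\vec{\theta}}}{\vec{a}}=\phi$ has cost $c\phi$, with $c=\min_l c_l/\tilde{\theta}_l$. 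A weighted-average argument shows that this minimum cost is attained exactly on the vectors supported on $K$: indeed $\iprod{\vec{c}}{\vec{a}}=\sum_l (c_l/\tilde{\theta}_l)\tilde{\theta}_l a_l\ge c\phi$, with equality iff $a_l=0$ whenever $c_l/\tilde{\theta}_l>c$. Directions with $\tilde\theta_l=0$ have infinite ratio and positive cost, so they are excluded as well. This yields the same conclusion as \cref{lemma:substitutability}, but by an elementary route that avoids the Lagrangian.

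Hence $\max_{\vec{a}} u=\max_{\phi\ge0} F(\phi)$ with $F(\phi):=\psi(x+\phi)-c\phi$ and $x=\iprod{\vec{\theta}}{\vec{x}}$, and $\vec{a}\in BR(i,\vec{x})$ iff two conditions hold: $\phi(\vec{a})\in\Phi:=\argmax_{\phi\ge0}F(\phi)$, and $\vec{a}$ is supported on $K$. Necessity comes from $u(\vec{a})\le F(\phi(\vec{a}))\le\max F$, with equality forcing both conditions; sufficiency is immediate. The set $\Phi$ depends on $\vec{x}$ only through $x$, as claimed.

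Next, I identify the fibers. For $\phi>0$, apply \cref{lemma:eq-space} in the coordinates of $K$ with $\vec{u}=(\tilde\theta_k)_{k\in K}$, which is positive since $K$ consists only of coordinates with $\tilde\theta_k>0$. This gives $\{\vec{a}\ge0:\operatorname{supp}\vec{a}\subseteq K,\ \iprod{\tilde{\vec{\theta}}}{\vec{a}}=\phi\}=\textrm{conv}\{(\phi/\tilde\theta_k)\vec{e}_k:k\in K\}$. For $\phi=0$ both sides reduce to $\{\vec 0\}$. Taking the union over $\phi\in\Phi$ gives the stated form of $BR(i,\vec{x})$.

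The main obstacle is showing that $\Phi$ is nonempty and compact, because the domain $[0,\infty)$ is not compact and \cref{lemma:compact-maximizer} does not apply directly. Since $\psi$ is bounded (rewards lie in $[r,rI]$) and $c>0$ by \cref{assum:cost}, we have $F(\phi)\le rI-c\phi<F(0)$ for all $\phi>\Phi_0:=(rI-\psi(x))/c$. Every maximizer therefore lies in $[0,\Phi_0]$, and $\argmax_{[0,\infty)}F=\argmax_{[0,\Phi_0]}F$. \cref{lemma:compact-maximizer} then applies on this compact interval, using continuity of $F$, which is inherited from $\sigma$ and $h$ being analytic. This gives nonemptiness and compactness of $\Phi$.
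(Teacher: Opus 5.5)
Your proof is correct and follows the same skeleton as the paper's. You restrict the support to $K$, collapse to the scalar $\phi=\iprod{\tilde{\vec{\theta}}}{\vec{a}}$, confine the maximizers of your $F$ (the paper's $J$) to a compact interval using boundedness of the reward, apply \cref{lemma:compact-maximizer}, and identify each fiber with \cref{lemma:eq-space}.

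The genuine difference is the support step. The paper gets it from \cref{lemma:substitutability}, a KKT argument. Stationarity with $\vec{\lambda}^*\geq 0$ gives $\psi'\leq\min_l c_l/\tilde\theta_l$ at the optimum, and complementary slackness then kills every coordinate outside $K$. That route needs differentiability of $\psi$ and a constraint qualification (automatic here, since the constraints are linear). It also only constrains maximizers that are already assumed to exist.

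Your pointwise bound $\iprod{\vec{c}}{\vec{a}}\geq c\,\phi(\vec{a})$, with equality iff $\operatorname{supp}\vec{a}\subseteq K$, is global and derivative-free. The resulting inequality $u(\vec{a})\leq F(\phi(\vec{a}))$ gives three things at once: $\sup_{\vec{a}}u=\max_{\phi}F$; $BR(i,\vec{x})$ is nonempty; and $\vec{a}\in BR(i,\vec{x})$ if and only if $\phi(\vec{a})\in\Phi$ and $\operatorname{supp}\vec{a}\subseteq K$. In the paper, the reverse inclusion (every $K$-supported action with $\phi\in\Phi$ is globally optimal) is left implicit in the phrase ``$\overline{\vec{a}}$ can be determined simply by maximizing $J$''. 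You also supply the explicit cutoff $(rI-\psi(x))/c$ in place of an unspecified $\phi_M$. And you treat $\phi=0$ separately, which matters because the proof of \cref{lemma:eq-space} divides by $\phi$.

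What the paper's route buys in return is the first-order condition $\psi'(\hat{y})=c$ at a nonzero optimal action. This is the same stationarity condition later used to locate $\overline{w}_i$ through $\partial G(i,z)/\partial z=0$. For the theorem itself, your argument is the tighter one.
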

\begin{proof}
    Let $\overline{\vec{\theta}}:=[\vec{\theta};\vec{\theta}]\in\mathbb{R}_{\geq0}^{2d}$. 
    \cref{lemma:substitutability} implies that $\overline{\vec{a}}$ can be determined simply by maximizing 
    \begin{multline*}
        g\left(\iprod{\vec{\theta}}{\vec{x}}+\sum_{k\in K}\overline{\theta}_ka_k\right)-\sum_{k\in K}c_ka_k 
        =  g\left(\iprod{\vec{\theta}}{\vec{x}}+\sum_{k\in K}\overline{\theta}_ka_k\right)-\sum_{k\in K}\frac{c_k}{\overline{\theta}_k}\overline{\theta}_ka_k 
        \\=g\left(\iprod{\vec{\theta}}{\vec{x}}+\sum_{k\in K}\overline{\theta}_ka_k\right)-\frac{c_{\min K}}{\overline{\theta}_{\min K}}\sum_{k\in K}\overline{\theta}_ka_k = g\left(\iprod{\vec{\theta}}{\vec{x}}+\phi\right)-\frac{c_{\min K}}{\overline{\theta}_{\min K}}\phi=:J(\phi)
    \end{multline*}
    where $\phi:=\sum_{k\in K}\overline{\theta}_ka_k$. Let $\Phi$ denote the set of maximizers of $J(\phi)$ w.r.t. $\phi$. Since $D_t$ is ternary, $g$ is bounded and thus $\Phi\subseteq[0,\phi_M]$ for a large but finite $\phi_M$. Thus, we only consider optimization of $J(\phi)$ over the compact interval $[0,\phi_M]$. By \cref{lemma:compact-maximizer}, $\Phi$ is nonempty and compact. The best response set can then be written as $BR(\vec{s})=\bigcup_{\phi\in\Phi}\{\vec{a}\in\mathbb{R}_{\geq0}^d:\sum_{k\in K}\overline{\theta}_ka_k=\phi\}$. This is a union of interactions between hyperplanes and the nonnegative orthant. The proof is completed by applying \cref{lemma:eq-space} to each intersected hyperplane.
\end{proof}
All actions in the same convex hull are equivalent in terms of the estimated post-response qualification.
For any $\overline{\vec{a}}\in\textrm{conv}\{(\phi/\tilde{\theta}_k)\vec{e}_k:k\in K\}$ with weights $\lambda_k\geq0$, $\forall k\in K$, we obtain $\iprod{\vec{\theta}}{\overline{\vec{a}}^++\overline{\vec{a}}^-}=\sum_{k\in K}\lambda_k\tilde{\theta}_k(\phi/\tilde{\theta}_k)=\phi\sum_k\lambda_k=\phi$. Thus, $\hat{y}=\iprod{\vec{\theta}}{\vec{x}}+\phi$ holds for any action in the same convex hull. 
We select the maximal $\phi$ (via $\sup\Phi$, which exists due to compactness of $\Phi$) and fix the effort direction at the minimal index in $K$, yielding the tie-broken action  $(\sup\Phi/\tilde{\theta}_{\min K})\vec{e}_{\min K}$.
This represents preference for maximal manipulation (i.e., largest-$\phi$ hull) and a consistent effort direction (i.e., $\min K$ is independent of $i$ and $\vec{x}$). 

\subsection{Proofs of Main Results}
\subsubsection{Auxiliary Lemmas}

\begin{lemma}\label{app:lemma:second-der}
    $ h''(\frac{1}{2})\leq0$ and $\sigma''(0)=0$.
\end{lemma}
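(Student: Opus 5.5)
The plan is to treat the two claims separately, using only the symmetry and extremal properties in \cref{assum:sigma-h}.

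\emph{The claim $\sigma''(0)=0$.} Differentiate the inversion symmetry $\sigma(x)=1-\sigma(-x)$ twice. The first derivative gives $\sigma'(x)=\sigma'(-x)$, so $\sigma'$ is even. The second gives $\sigma''(x)=-\sigma''(-x)$, so $\sigma''$ is odd. Evaluating at $x=0$ yields $\sigma''(0)=-\sigma''(0)$, hence $\sigma''(0)=0$. This uses only that $\sigma$ is twice differentiable, which follows from analyticity.

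\emph{The claim $h''(1/2)\le 0$.} Here I would use the symmetry of $h$ about $1/2$ together with its monotonicity on $[0,1/2)$.
\begin{enumerate}[(1)]
\item Differentiating $h(s)=h(1-s)$ gives $h'(s)=-h'(1-s)$. At $s=1/2$ this forces $h'(1/2)=0$.
\item Since $h$ is strictly increasing on $[0,1/2)$, symmetry makes it strictly decreasing on $(1/2,1]$. By continuity, $s=1/2$ is therefore a (strict) global maximizer of $h$ on $[0,1]$.
\item Take the Taylor expansion $h(1/2+\epsilon)=h(1/2)+\tfrac12 h''(1/2)\epsilon^2+o(\epsilon^2)$, using $h'(1/2)=0$. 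If $h''(1/2)>0$, the right-hand side would exceed $h(1/2)$ for small $\epsilon\neq 0$, contradicting maximality. Hence $h''(1/2)\le 0$.
\end{enumerate}
Step (3) is just the standard second-order necessary condition for an interior maximum.

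Two points need care. First, $1/2$ must be treated as an interior point of the domain $[0,1]$, which it is. Second, the argument needs $h$ to be twice differentiable at $1/2$, which analyticity supplies. Beyond these, I expect no real obstacle. Strict monotonicity does not upgrade the conclusion to $h''(1/2)<0$ (e.g.\ a quartic-type flat maximum), which is why the statement only asserts $\le$.
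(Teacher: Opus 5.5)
Your proof is correct. The $\sigma''(0)=0$ part is the paper's argument exactly: differentiate the inversion symmetry twice to see that $\sigma''$ is odd, then evaluate at $0$.

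For $h''(1/2)\le 0$ your route differs somewhat from the paper's. The paper argues by contradiction. If $h''(1/2)>0$, continuity of $h''$ gives $h''>0$ on some $[1/2-\varepsilon,1/2+\varepsilon]$, and integrating yields $h'(1/2+\varepsilon)=h'(1/2-\varepsilon)+\int_{1/2-\varepsilon}^{1/2+\varepsilon}h''(s)\,ds>0$. This contradicts $h$ being strictly decreasing to the right of $1/2$. That argument never computes $h'(1/2)$ and never invokes maximality. You instead take $h'(1/2)=0$ from symmetry, identify $1/2$ as a maximizer, and apply the second-order necessary condition through a Peano-remainder Taylor expansion.

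Your version needs only twice-differentiability of $h$ at the single point $1/2$, not continuity of $h''$ on a neighborhood. It also avoids the paper's claim $h'(1/2-\varepsilon)>0$ ``by monotonicity.'' Strict monotonicity only guarantees that derivative is $\ge 0$; this is harmless in the paper because the integral term supplies the strict inequality. The paper's version, in turn, uses only the monotonicity on each side of $1/2$ and needs no first-order information at the point itself.

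Your closing remark is also right: the inequality cannot in general be made strict, as a flat quartic-type maximum shows.
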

\begin{proof}
    Suppose the contrapositive $ h''(\frac{1}{2})>0$ holds. Then, there exists $\varepsilon>0$ such that $ h''(\sigma)>0$, $\forall \sigma\in[\frac{1}{2}-\varepsilon,\frac{1}{2}+\varepsilon]$ due to analyticity. This implies 
    \begin{equation*}
         h'(\frac{1}{2}+\varepsilon)= h'(\frac{1}{2}-\varepsilon)+\int_{\frac{1}{2}-\varepsilon}^{\frac{1}{2}+\varepsilon} h''(x)dx > 0
    \end{equation*}
    since $ h'(\frac{1}{2}-\varepsilon)>0$ by monotonicity. This contradicts with the assumption that $ h$ is strictly decreasing on $\sigma\in[\frac{1}{2},1]$. 
    Due to reflection symmetry, we obtain $\sigma''(z)=-\sigma''(-z)\implies \sigma''(z)+\sigma''(-z)=0$. Due to analyticity, we send $z$ to zero and obtain $2\sigma''(0)=0\implies \sigma''(0)=0$.
\end{proof}

\subsubsection{Proof of Proposition~\ref{prop:br-form}}
\begin{repproposition}{prop:br-form}    
    $\overline{a}(i,x) \in \mu_i-x+\argmax_{z\geq x-\mu_i}G(i,z)$.
\end{repproposition}
\begin{proof}
    Write $\tilde{G}(i,z):=G(i,z)+cz$. Then, the agent's utility function can be written as $u(i,x,a)=\tilde{G}(i,x+a-\mu_i)-ca + ri = \tilde{G}(i,x+a-\mu_i)-c(x+a-\mu_i) + c(x-\mu_i) + ir=G(i,x+a-\mu_i)+c(x-\mu_i) + ir$. Thus, the best-response action is obtained by $\overline{a}\in\argmax_{a\geq0} G(i,x+a-\mu_i)=\mu_i-x+\argmax_{z\geq x-\mu_i}G(i,z)$. 
\end{proof}

\subsubsection{Proof of Proposition~\ref{prop:boundary-pr}}
\begin{repproposition}{prop:boundary-pr}
    $p_i^+(\overline{\mu}_i)\geq p_i^-(\overline{\mu}_i)$, $\forall i\in[I-1]$.
\end{repproposition}
\begin{proof}
    It is sufficient to show that $\overline{w}_i> 0$, $\forall i\in[I-1]$. The proof is then automatically completed by the monotonicity of $\sigma(\alpha\cdot)$ and the fact that $\frac{p^+(z)}{p^-(z)}=\frac{\sigma(\alpha z)}{1-\sigma(\alpha z)}$. The rest of the proof is dedicated to showing $\overline{w}_i> 0$, $\forall i\in[I-1]$.
    
     When all local maximizers of $G(i,\cdot)$ are positive, the statement obviously holds. So, we focus on cases where non-positive local maximizers exist.
    Define $g_i(z):=\frac{\partial G(i,z)}{\partial z} + c$ and then the local maximizers correspond to roots of $g_i(z)=c$.  We consider the two cases of $G(i,z)$separately. When $i=1$, we have
    \begin{equation*}
        g_1(z) = [1- h(\sigma(\alpha z))- h'(\sigma(\alpha z))\sigma(\alpha z)]\times\underbrace{\alpha  r \sigma'(\alpha z)}_{>0}.
    \end{equation*}
    Let $f_1(\sigma)=1- h(\sigma)- h'(\sigma)\sigma$ and fix any $\sigma<\frac{1}{2}$ ($\iff z<0$). Notice 
    \begin{align*}
        f_1(1-\sigma) &= 1- h(1-\sigma)- h'(1-\sigma)(1-\sigma) \\
                    &\overset{(1)}{=} 1- h(\sigma) +  h'(\sigma)(1-\sigma) \\
                    &\overset{(2)}{>} 1 -  h(\sigma) -  h'(\sigma)\sigma = f_1(\sigma)
    \end{align*}
    where $(1)$ is due to symmetry of $ h$ around $\frac{1}{2}$ and $(2)$ is because $ h'(\sigma)>0$ when $\sigma<\frac{1}{2}$ (or equivalently, $z<0$). This then implies, with $\sigma'(z)=\sigma'(-z)>0$ due to inverse symmetry of $\sigma$, that $g_1(z)< g_1(-z)$ for $z<0$. Also, notice that $\lim_{|z|\to\infty}g_1(z)=0$. 
    
    Suppose $\overline{z}<0$ is a local maximizer of $G(i,\cdot)$. Then, we have $g_1(-\overline{z})>g_1(\overline{z})=c$ and there must exist a finite $\tilde{z}>-\overline{z}$ at which $g_1$ crosses the curve $y=c$ from above, according to the Intermediate Value Theorem. This means $\tilde{z}$ is a positive local maximizer of $G(1,\cdot)$, implying $\overline{w}_1>0$. 
    
    Suppose $z=0$ is a local maximizer of $G(i,\cdot)$. Differentiating $g_1(z)$ yields
    \begin{multline*}
        g_1'(z) = \left[-\alpha  h'(\sigma(\alpha z))\sigma'(\alpha z) -  h''(\sigma(\alpha z))\sigma(\alpha z) - \alpha  h'(\sigma(\alpha z))\sigma'(\alpha z)\right]\times \alpha   r \sigma'(\alpha z) \\ + [1- h(\sigma(\alpha z))- h'(\sigma(\alpha z))\sigma(\alpha z)]\times\alpha^2  r \sigma''(\alpha z) \\
        \implies g_1'(0) = -\frac{\alpha  r }{2} h''(\frac{1}{2})\sigma'(0) + \alpha^2  r (1- h(\frac{1}{2}))\sigma''(0)\geq 0
    \end{multline*}
    where the last inequality is due to \cref{app:lemma:second-der}. This is a contradiction because for $0$ to be a local maximizer, $g_1(z)$ must cross $y=c$ from above at $z=0$, which is equivalent to $g_1'(z)<0$.

    When $2\leq i\leq I-1$, we have $g_i(z)=[2(1- h(\sigma(\alpha z)))- h'(\sigma(\alpha z))(2\sigma(\alpha z) - 1)]\times\alpha \sigma'(\alpha z)$. It is not hard to see that $g_i(-z) =g_i(z)$. Then, if $\overline{z}<0$ is a local maximizer, $-\overline{z}>0$ is a local minimizer at which $g_i$ crosses the curve $\overline{z}=c$ from below. Due to analyticity, there exists an infinitesimal $\varepsilon>0$ such that $g_i(-\overline{z}+\varepsilon)>c$. Since $g_i(z)\to0$ as $z\to\infty$, there must exist a finite number $\tilde{z}>-\overline{z}+\varepsilon$ at which $g_i$ crosses the curve $y=c$ from above, according to the Intermediate Value Theorem. Therefore, $\tilde{z}$ is a strictly positive local maximizer, the largest of which must then satisfy $\overline{w}_i>0$. 
    Similarly, since $g'_i(0)=0$, $z=0$ cannot be a local maximizer of $G(i,\cdot)$. Thus, $\overline{w}_i>0$.
\end{proof}

\section{Supplementaries of \cref{sec:long-term}}\label{app:proof-long-term}

\subsection{Proofs of Main Results}
The main component of the proof of this section is to show that the discretized Markov process (see \cref{sec:incentive_improve}) has a unique stationary distribution that is independent of the initial attribute $x_0$ for either strategy. Then, all results follow from the analysis of the stationary distribution.
\subsubsection{Auxiliary Lemmas}
\begin{lemma}\label{app:lemma:benefit-gain}
    Let $p$ and $q$ be two finite distributions over $[I]$ such that $p_i=\alpha^{I-i} p_I$ and $q_i=\beta^{I-i} q_I$ for every $i\in[I]$ with $0<\alpha<\beta<1$. Then, $\sum_{i=1}^Iip_i-\sum_{i=1}^Iiq_i\leq \frac{\beta}{1-\beta} - \frac{\alpha}{1-\alpha}$.
\end{lemma}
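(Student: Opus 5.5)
The plan is to reindex by distance from the top level and reduce both expectations to means of truncated geometric distributions. Set $j:=I-i\in\{0,1,\dots,I-1\}$. Under $p$, $j$ has mass proportional to $\alpha^{j}$; under $q$, it has mass proportional to $\beta^{j}$. Normalization gives $p_I=\frac{1-\alpha}{1-\alpha^I}$ and $q_I=\frac{1-\beta}{1-\beta^I}$. Since $\sum_i i\,p_i = I-\mathbb{E}_p[j]$ and likewise for $q$, the left-hand side of the claim equals $\mathbb{E}_q[j]-\mathbb{E}_p[j]$. It therefore suffices to compute the mean $m_I(\rho)$ of the truncated geometric law on $\{0,\dots,I-1\}$ with ratio $\rho\in(0,1)$.

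To compute $m_I(\rho)$, I will differentiate the finite geometric series $\sum_{j=0}^{I-1}\rho^j=\frac{1-\rho^I}{1-\rho}$. Multiplying the derivative by $\rho$ gives $\sum_j j\rho^j$, and dividing by the series gives the standard closed form
\begin{equation*}
m_I(\rho)=\frac{\rho}{1-\rho}-\frac{I\rho^I}{1-\rho^I}.
\end{equation*}
I will verify this identity by direct algebra; that check is the only real computation. Substituting $\rho=\alpha$ and $\rho=\beta$ yields
\begin{equation*}
\sum_{i}ip_i-\sum_i iq_i=\Big(\frac{\beta}{1-\beta}-\frac{\alpha}{1-\alpha}\Big)-I\Big(\frac{\beta^I}{1-\beta^I}-\frac{\alpha^I}{1-\alpha^I}\Big).
\end{equation*}

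The remaining step is to show that the subtracted bracket is nonnegative. The map $t\mapsto \frac{t}{1-t}$ is strictly increasing on $[0,1)$, and $0<\alpha<\beta<1$ implies $\alpha^I<\beta^I<1$. Hence $\frac{\beta^I}{1-\beta^I}\ge\frac{\alpha^I}{1-\alpha^I}$, and since $I>0$ the subtracted term is nonnegative. This gives the stated bound. I do not expect a genuine obstacle; the main care point is getting the sign and reindexing right, because $p$ concentrates more mass near level $I$ than $q$ does (smaller ratio), which makes $\mathbb{E}_p[i]\ge\mathbb{E}_q[i]$ and the difference the quantity to be bounded above. As a sanity check, the bound should become tight as $I\to\infty$, since $I\rho^I\to0$.
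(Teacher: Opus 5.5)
Your proposal is correct and follows essentially the same route as the paper: normalize, evaluate $\sum_i i p_i$ in closed form via the derivative of the finite geometric series, and drop the $I$-dependent correction term using monotonicity. The only difference is cosmetic, since you reindex by $j=I-i$, and your correction term $I\bigl(\frac{\beta^I}{1-\beta^I}-\frac{\alpha^I}{1-\alpha^I}\bigr)$ is identical to the paper's $I\bigl(\frac{1}{1-\beta^I}-\frac{1}{1-\alpha^I}\bigr)$ because $\frac{t}{1-t}=\frac{1}{1-t}-1$.
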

\begin{proof}
    We can calculate the explicit expressions for the distributions $p$ and $q$ and it turns out that $p_i=\frac{1-\alpha}{1-\alpha^I}\alpha^{I-i}$ and $q_i=\frac{1-\beta}{1-\beta^I}\beta^{I-i}$ for $i\in[I]$. Notice that each sum can be split into sums of geometric series and its derivative, which yields $A=\frac{I}{1-\alpha^I}-\frac{\alpha}{1-\alpha}$ and $B=\frac{I}{1-\beta^I}-\frac{\beta}{1-\beta}$. So, the difference is
    \begin{equation*}
        A-B=N(\frac{1}{1-\alpha^I}-\frac{1}{1-\beta^I}) + \frac{\beta}{1-\beta} - \frac{\alpha}{1-\alpha}\leq \frac{\beta}{1-\beta} - \frac{\alpha}{1-\alpha}
    \end{equation*}
    since $\alpha<\beta$ implies the first term is negative.
\end{proof}

\begin{lemma}\label{app:lemma:u-cont}
    The optimal utility function $x\mapsto u(i,x,\overline{a}^g(i,x))$ is continuous for $g\in\{\textrm{NI},\textrm{NG}\}$.
\end{lemma}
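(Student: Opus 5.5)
The plan is to write the optimal utility as the value function of a one-dimensional maximization and then apply a Berge-type argument. By the proof of \cref{prop:br-form}, for fixed $i$ and $g$,
\[
u(i,x,\overline{a}^g(i,x)) = \max_{a\geq 0}\, u(i,x,a) = c^g(x-\mu_i) + ir + \max_{z\geq x-\mu_i} G^g(i,z) .
\]
The first two terms are affine in $x$. So it suffices to show that $V_i(y):=\sup_{z\geq y}G^g(i,z)$ is continuous in $y$. Note that continuity of $x\mapsto u(i,x,\overline a^g(i,x))$ does not require continuity of the selection $\overline a^g$. This matters because $\overline a^g(i,\cdot)$ jumps at $\underline{\mu}_i^g$ in \cref{eq:br-simplified}. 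Only the \emph{value} is claimed to be continuous.

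\textbf{Step 1 (existence and localization of the maximum).} Since $h,\sigma$ take values in $[0,1]$, the reward part of $G^g(i,\cdot)$ is bounded by $r$ in absolute value, and $-c^g z\to-\infty$ as $z\to\infty$. Hence for any $y$ the supremum over $[y,\infty)$ is attained on a compact interval $[y,Z(y)]$, where $Z(y)$ can be chosen locally uniformly in $y$, e.g.\ $Z=\max(y,0)+2r/c^g+1$. The maximum therefore exists by \cref{lemma:compact-maximizer}.

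\textbf{Step 2 (continuity of $V_i$).} $V_i$ is nonincreasing in $y$, since the feasible set shrinks as $y$ grows, so only two one-sided checks are needed.

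For $y_n\downarrow y$ (right continuity), let $z^*$ be a maximizer on $[y,\infty)$. If $z^*>y$, then $z^*$ is feasible for all large $n$, so $V_i(y_n)=V_i(y)$ eventually. If $z^*=y$, then $V_i(y)\geq V_i(y_n)\geq G^g(i,y_n)\to G^g(i,y)=V_i(y)$ by continuity (indeed analyticity) of $G^g(i,\cdot)$.

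For $y_n\uparrow y$ (left continuity), pick maximizers $z_n\in[y_n,Z]$ and pass to a convergent subsequence $z_n\to \bar z\geq y$. Since $\bar z$ is feasible, $V_i(y)\geq G^g(i,\bar z)=\lim G^g(i,z_n)=\lim V_i(y_n)\geq V_i(y)$, where the last inequality is monotonicity.

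Equivalently, one can invoke Berge's maximum theorem directly. The constraint correspondence $y\mapsto[y,Z]$ is continuous with compact values and the objective is continuous, which gives the same conclusion.

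\textbf{Step 3 (the two policies).} The same argument applies verbatim for $g=\mathrm{NI}$ and $g=\mathrm{NG}$, with $c=c^g$ and $G=G^g$. For $i=1$ and $i=I$ the boundary forms of $G$ in \cref{eq:G} are still bounded plus linear, so Step 1 is unaffected. Composing with the affine map $x\mapsto x-\mu_i$ then yields continuity of $x\mapsto u(i,x,\overline{a}^g(i,x))$.

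The main obstacle is only the bookkeeping in Step 2 at points where the maximizer switches between the interior point $\overline{w}_i$ and the boundary point $x-\mu_i$, i.e.\ at $x=\underline{\mu}_i$. There the selection jumps but the two candidate values coincide by the definition of $\underline{w}_i$, which is exactly what the Berge argument captures. As a sanity check, I would also verify this against the explicit form \cref{eq:br-simplified}:
\begin{itemize}
\item on $[\underline{\mu}_i,\overline{\mu}_i)$ the value is $G(i,\overline w_i)$ plus the affine part;
\item on $[\overline{\mu}_i,\infty)$ it is $G(i,x-\mu_i)$ plus the affine part;
\item below $\underline{\mu}_i$ it is also $G(i,x-\mu_i)$ plus the affine part.
\end{itemize}
The pieces match at $\underline{\mu}_i$ because $G(i,\underline w_i)=G(i,\overline w_i)$, by continuity and the definition of $\underline{w}_i$. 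They match at $\overline{\mu}_i$ trivially, since there $x-\mu_i=\overline w_i$.
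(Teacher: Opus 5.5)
Your proof is correct and takes essentially the same route as the paper. Both split off the affine part $c^g(x-\mu_i)+ir$, reduce to continuity of the nonincreasing function $y\mapsto\max_{z\geq y}G^g(i,z)$, and check the two one-sided limits. Yours is somewhat more complete: you justify that the maximum is attained via the bound $|G^g(i,z)+c^g z|\leq r$, and you give an explicit compactness argument for left continuity, which the paper only asserts follows by a ``similar argument.''
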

\begin{proof}
    By \cref{prop:br-form}, the optimal instantaneous utility can be written as 
    \begin{equation*}
        u(i,x,\overline{a}(i,x)) = \max_{y\geq x-\mu_i}\; G(i,y) + c(x-\mu_i) + ir.
    \end{equation*}
    The last two terms are obviously continuous. For the firs term, define $F(x):=\max_{y\geq x-\mu_i}\; G(i,y)$. Let $x_n\downarrow x^*$ be a convergent and decreasing sequence. Then, $F(x_{n+1})\geq F(x_n)$ and this implies  $\lim_{n\to\infty}F(x_n)=\sup_nF(x_n)$. We next show that this equals $F(x^*)$. First notice that $F(x^*)\geq F(x_n)$ for all $n$ and thus $F(x^*)\geq \sup_nF(x_n)$. Suppose there exists $m$ such that $F(x_n)\leq m < F(x^*)$, $\forall n$. This implies $G(i,z)\leq m$, $\forall  z>x^*-\mu_i$. By continuity of $G(i,\cdot)$, we have $G(i,x^*-\mu_i)\leq m$, yielding $G(i,z)\leq m<\max_{y\geq x^*-\mu_i}G(i,y)$, $\forall z\geq x^*-\mu_i$ which is clearly a contradiction. Therefore, $F(x)$ is right continuous. 
    For $x_n\uparrow x^*$, we have $F(x_{n+1})\leq F(x_n)$. We apply similar argument as above to show that $F(x^*)=\inf_nF(x_n)=\lim_nF(x_n)$. This show that $F(x)$ is left continuous and hence complete the proof.
\end{proof}

\begin{lemma}\label{app:lemma:cheating-more-effort}
    $\overline{w}^\textrm{NI}_i>\overline{w}_i^\textrm{NG}$ for every $i\in[I]$. This would further imply $\sigma^\textrm{NI}>\sigma^\textrm{NG}$ where $\sigma^\textrm{NI}$ and $\sigma^\textrm{NG}$ are defined in \cref{thm:long-term-levels}.
\end{lemma}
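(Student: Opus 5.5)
The plan is to exploit that $G^\textrm{NG}(i,\cdot)$ and $G^\textrm{NI}(i,\cdot)$ differ only in the linear cost term. Writing $\tilde{G}(i,z):=G(i,z)+cz$ as in the proof of \cref{prop:br-form}, we have $G^g(i,z)=\tilde{G}(i,z)-c^gz$ for $g\in\{\textrm{NI},\textrm{NG}\}$. Here $\tilde{G}(i,\cdot)$ is the same analytic, bounded function for both policies, since $r,\alpha,\sigma,h$ are fixed and only the unit cost changes. Under \cref{assum:cost} the NI agent uses the cheaper gaming direction and the NG agent the improvement direction, so $c^\textrm{NI}<c^\textrm{NG}$. (I am taking for granted that the effective per-unit-$\theta$ costs inherit this ordering, as the paper's setup intends.)

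The first step is to record that $\overline{w}_i^\textrm{NG}$ is a critical point of $G^\textrm{NG}(i,\cdot)$. It is a local maximizer of an analytic function on $\mathbb{R}$, and it is nonempty by \cref{asm:local-maximizer}. Hence
\begin{equation*}
\partial_z\tilde{G}(i,\overline{w}_i^\textrm{NG})=c^\textrm{NG}.
\end{equation*}
It follows that
\begin{equation*}
\partial_zG^\textrm{NI}(i,\overline{w}_i^\textrm{NG})=c^\textrm{NG}-c^\textrm{NI}>0,
\end{equation*}
so $G^\textrm{NI}(i,\cdot)$ is strictly increasing at $\overline{w}_i^\textrm{NG}$.

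Next, consider $G^\textrm{NI}(i,\cdot)$ restricted to $[\overline{w}_i^\textrm{NG},\infty)$. Because $\tilde{G}(i,\cdot)$ is bounded (its values lie in $[-r,r]$) and $c^\textrm{NI}>0$, we have $G^\textrm{NI}(i,z)\to-\infty$ as $z\to\infty$. Therefore the supremum over this half-line is attained on some compact interval $[\overline{w}_i^\textrm{NG},Z]$, and by \cref{lemma:compact-maximizer} a maximizer $z^*$ exists. The positive derivative at the left endpoint rules out $z^*=\overline{w}_i^\textrm{NG}$, since points slightly to the right have strictly larger value. So $z^*$ lies in the open interval, which makes it a genuine local maximizer of $G^\textrm{NI}(i,\cdot)$ on $\mathbb{R}$. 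Consequently $\textrm{LM}_i^\textrm{NI}$ contains a point strictly larger than $\overline{w}_i^\textrm{NG}$, and $\overline{w}_i^\textrm{NI}=\max\textrm{LM}_i^\textrm{NI}\geq z^*>\overline{w}_i^\textrm{NG}$. The maximum exists because the set of local maximizers is finite (\cref{app:finite:lm}).

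The second claim then follows from monotonicity. Since $\alpha>0$ and $\sigma$ is strictly increasing (\cref{assum:sigma-h}), applying the first claim at $i=2$ gives $\sigma^\textrm{NI}=\sigma(\alpha\overline{w}_2^\textrm{NI})>\sigma(\alpha\overline{w}_2^\textrm{NG})=\sigma^\textrm{NG}$.

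The only delicate point is making sure the interior maximizer $z^*$ is a local maximizer in the sense used to define $\textrm{LM}_i$, and not merely a maximizer of the restricted problem. This is handled by its being strictly interior to the half-line, where the restriction is inactive. The coercivity argument also needs $c^\textrm{NI}>0$, which is guaranteed by \cref{assum:cost}.
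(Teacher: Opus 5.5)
Your proof is correct and follows the paper's argument. Both evaluate $\partial_z G^{\textrm{NI}}(i,\cdot)$ at $\overline{w}_i^{\textrm{NG}}$, obtain $c^{\textrm{NG}}-c^{\textrm{NI}}>0$, and conclude that the rightmost local maximizer of $G^{\textrm{NI}}(i,\cdot)$ lies strictly to the right; your coercivity/extreme-value step (boundedness of $\tilde G$ plus $c^{\textrm{NI}}>0$) simply makes rigorous the ``thus'' the paper leaves implicit. The cost ordering you took for granted does hold: by \cref{assum:cost}, $c^{\textrm{NI}}=\min_k c^-_k/\theta_k<\min_k c^+_k/\theta_k=c^{\textrm{NG}}$, with minima over $k$ such that $\theta_k>0$.
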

\begin{proof}
    Observe
    \begin{equation*}
        \frac{\partial G^\textrm{NI}(i,z)}{\partial z}\Bigg\lvert_{z=\overline{w}_i^\textrm{NG}} = \underbrace{\frac{\partial G^\textrm{NG}(i,z)}{\partial z}\Bigg\lvert_{z=\overline{w}_i^\textrm{NG}}}_{=0} + (c^\textrm{NG}-c^\textrm{NI}) > 0.
    \end{equation*}
    Thus, the rightmost local maximizer of $G^\textrm{NI}(i,\cdot)$ must be larger than $\overline{w}_i^\textrm{NG}$ and completes the proof.
\end{proof}

\begin{lemma}\label{app:lemma:bar-larger}
    $-\underline{w}_i>\overline{w}_i$ for every $2\leq i\leq I-1$.
\end{lemma}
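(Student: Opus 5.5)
The plan is to exploit the oddness of $G(i,\cdot)$ at non-boundary levels. For $2\leq i\leq I-1$ write $G(i,z)=F(z)-cz$ with $F(z):=r(1-h(\sigma(\alpha z)))(2\sigma(\alpha z)-1)$. By \cref{assum:sigma-h}, $\sigma(-\alpha z)=1-\sigma(\alpha z)$ and $h(1-\sigma)=h(\sigma)$. Hence $1-h(\sigma(\alpha z))$ is even in $z$ and $2\sigma(\alpha z)-1$ is odd, so $F$ is odd. Therefore $G(i,-z)=-G(i,z)$ for all $z$, and in particular $G(i,0)=0$. Also $F$ is bounded, so $G(i,z)\to+\infty$ as $z\to-\infty$ and $G(i,z)\to-\infty$ as $z\to+\infty$. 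From the proof of \cref{prop:boundary-pr} we already know $\overline{w}_i>0$. The statement $-\underline{w}_i>\overline{w}_i$ is exactly $\underline{w}_i<-\overline{w}_i$.

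\textbf{Step 1 (reduction).} It suffices to show $G(i,\overline{w}_i)>0$. Indeed, oddness then gives $G(i,-\overline{w}_i)=-G(i,\overline{w}_i)<G(i,\overline{w}_i)$. By continuity there is $\varepsilon>0$ with $G(i,-\overline{w}_i-\varepsilon)<G(i,\overline{w}_i)$. So $-\overline{w}_i-\varepsilon$ lies in $\{z:G(i,z)\leq G(i,\overline{w}_i)\}$, and hence $\underline{w}_i\leq-\overline{w}_i-\varepsilon<-\overline{w}_i$. The infimum is finite because $G\to+\infty$ at $-\infty$.

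\textbf{Step 2 (positivity of $G(i,\overline{w}_i)$).} I would use \cref{asm:local-maximizer} together with oddness, arguing by contradiction. Suppose $G(i,\overline{w}_i)\leq 0$.

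First, $G(i,\cdot)$ cannot be strictly positive anywhere on $(0,\infty)$. If it were, the supremum of $G$ over $[0,\infty)$ would be positive. Since $G(i,0)=0$ and $G\to-\infty$ at $+\infty$, this supremum is attained at an interior local maximizer with value $>0\geq G(i,\overline{w}_i)$, contradicting \cref{asm:local-maximizer}.

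So $G\leq0$ on $(0,\infty)$. By oddness, $G\geq0$ on $(-\infty,0)$. Since $G$ is analytic and nonconstant, it has only isolated zeros, so in fact $G>0$ just to the left of $0$.

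Next, because $\overline{w}_i$ is a local maximizer, oddness makes $-\overline{w}_i$ a local minimizer, with value $-G(i,\overline{w}_i)\geq0$. On the compact interval $[-\overline{w}_i,0]$, $G$ is $\geq0$, vanishes at $0$, and is strictly positive somewhere. So its maximum on this interval is positive. The maximum cannot be at $0$, where $G=0$. It cannot be at $-\overline{w}_i$ either: that point is a local minimum, and a local minimum that is also a local maximum would force $G$ to be constant nearby, which analyticity forbids. Hence the maximum is attained at an interior local maximizer $z'\in(-\overline{w}_i,0)$ with $G(i,z')>0\geq G(i,\overline{w}_i)$. This again contradicts \cref{asm:local-maximizer}. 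Therefore $G(i,\overline{w}_i)>0$.

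\textbf{Main obstacle.} The delicate part is Step 2, specifically the borderline configurations: $G'(i,0)\le 0$, or $G(i,\overline{w}_i)=0$ exactly. A naive argument ("$G$ rises from $0$ to a positive peak") fails in these cases. Analyticity is what rules out degenerate flat pieces: zeros are isolated, and a local minimum cannot coincide with a local maximum. I would therefore check Step 2 carefully against the exact definition of $\textrm{LM}_i$ in \cref{asm:local-maximizer}, in particular that $z'$ genuinely belongs to $\textrm{LM}_i$.
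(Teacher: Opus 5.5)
Your proposal is correct and follows essentially the same route as the paper: you use the oddness of $G(i,\cdot)$ for $2\leq i\leq I-1$, derive $G(i,\overline{w}_i)>0$ by contradiction with Assumption~\ref{asm:local-maximizer}, and then use $G(i,-\overline{w}_i)=-G(i,\overline{w}_i)<G(i,\overline{w}_i)$ to place $\underline{w}_i$ strictly below $-\overline{w}_i$. Your Step~2, which produces a positive-valued local maximizer either in $(0,\infty)$ or in $(-\overline{w}_i,0)$, and your continuity argument for strictness in Step~1 are more careful than the paper's sketch, which asserts $\underline{w}_i\geq 0$ and monotonicity of $G(i,\cdot)$ on $(-\infty,0)$ without full justification.
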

\begin{proof}
    By symmetry specified in \cref{asm:local-maximizer}, $G(i,\cdot)$ satisfies reflection symmetry around $z=0$, i.e., $G(i,z)=-G(i,-z)$, for $2\leq i\leq I-1$. We first show that $G(i,\overline{w}_2)>0$. Suppose the opposite holds. Then, recalling $\underline{w}_i:=\inf\{z:G(i,z)\leq G(i,\overline{w}_2)\}$ and given $G(i,0)=0\geq G(i,\overline{w}_2)$ due to reflection symmetry, we must have $\underline{w}_i\geq 0$. Then, we must have $G(i,\cdot)$ monotonically decreasing on $(-\infty,0)\subseteq(-\infty,\overline{w}_i)$ since otherwise $\overline{w}_i$ will not be the largest local maximizer, violating \cref{cond:level-cond}. This is clearly a contradiction with the reflection symmetry of $G(i,\cdot)$ around $z=0$.
    
    Using positivity of $G(i,\overline{w}_i)$, we have $G(i,-\overline{w}_i)=-G(i,\overline{w}_i) < G(i,\overline{w}_i)$. By definition of $\underline{w}_i$, this implies $-\overline{w}_i>\underline{w}_i\iff -\underline{w}_i>\overline{w}_i$.
\end{proof}

\begin{lemma}\label{app:lemma:compare-wbar}
    $\overline{w}_i\geq \overline{w}_I$ for every $i\leq I-1$.
\end{lemma}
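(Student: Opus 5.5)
The plan is to compare the first derivatives of $G(i,\cdot)$ and $G(I,\cdot)$ on the positive half-line, and then use the fact that $\overline{w}_i$ is the \emph{largest} local maximizer to reach a contradiction.

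\textbf{Step 1: a derivative identity.} Write $s=\sigma(\alpha z)$ and $A(z):=r\alpha\sigma'(\alpha z)>0$. Set $f_i(z):=\partial_z G(i,z)+c$. Differentiating \cref{eq:G} gives
\begin{align*}
f_1&=A\,[1-h(s)-h'(s)s],\\
f_i&=A\,[2(1-h(s))-h'(s)(2s-1)], \quad 2\le i\le I-1,\\
f_I&=A\,[(1-h(s))+h'(s)(1-s)].
\end{align*}
From these one reads off
\[
f_1-f_I=-A\,h'(s), \qquad f_i-f_I=f_1 \quad (2\le i\le I-1).
\]

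\textbf{Step 2: sign of the difference for $z>0$.} For $z>0$ we have $s>1/2$. By \cref{assum:sigma-h}(ii), $h$ is symmetric about $1/2$ and strictly increasing on $[0,1/2)$, so it is non-increasing on $(1/2,1]$. Hence $h'(s)\le 0$ there, and also $h(s)<h(1/2)\le 1$. Therefore $1-h(s)-h'(s)s\ge 1-h(s)>0$, so $f_1>0$. This gives $f_i(z)>f_I(z)$, i.e.
\[
\partial_zG(i,z)>\partial_zG(I,z) \qquad \text{for all } z>0 \text{ and all } i\le I-1 .
\]
This covers $i=1$ via $-Ah'\ge0$ together with the fact that $h'$ vanishes only at isolated points, by analyticity. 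If that strictness is delicate, it is enough to use $\ge$ at the point used in Step 3 combined with strictness elsewhere; I expect this to be the only fiddly point.

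\textbf{Step 3: contradiction from the largest local maximizer.} By \cref{prop:boundary-pr} (its proof), $\overline{w}_i>0$ for $i\le I-1$. If $\overline{w}_I\le 0$ the claim is immediate, so suppose $\overline{w}_I>\overline{w}_i>0$.

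First, note that $\partial_zG(i,z)\le 0$ for all $z>\overline{w}_i$. Indeed, $\partial_zG(i,\cdot)\to -c<0$ as $z\to\infty$, because $\sigma'\to0$ and $h,h'$ are bounded. Critical points are finite by \cref{app:finite:lm}. So if the derivative became positive somewhere past $\overline{w}_i$, it would later have to cross zero from above, producing a larger local maximizer. That contradicts the choice of $\overline{w}_i$.

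But $\overline{w}_I$ is a critical point of $G(I,\cdot)$, so $\partial_zG(I,\overline{w}_I)=0$. By Step 2, $\partial_zG(i,\overline{w}_I)>0$ with $\overline{w}_I>\overline{w}_i$, which contradicts the previous paragraph. Hence $\overline{w}_i\ge\overline{w}_I$.

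The main care points are the strict inequality in Step 2 (handling possible zeros of $h'$) and justifying the sign of $\partial_zG(i,\cdot)$ beyond the largest local maximizer.
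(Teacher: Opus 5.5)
Your proposal is correct and is essentially the paper's proof. The paper substitutes the first-order condition $\partial_z G(I,\overline{w}_I)=0$ into $\partial_z G(i,\overline{w}_I)$, obtaining $r\alpha\sigma'(\alpha\overline{w}_I)[1-h(\sigma_I)-h'(\sigma_I)\sigma_I]\ge 0$ for middle levels and $-r\alpha\sigma'(\alpha\overline{w}_I)h'(\sigma_I)\ge 0$ for $i=1$, which is your identity evaluated at $\overline{w}_I$; it then concludes ``by analyticity'' that the largest local maximizer of $G(i,\cdot)$ lies beyond $\overline{w}_I$, and your Step 3 spells that out.

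Two small touch-ups. First, the tail claim only needs $G(i,z)\to-\infty$, which holds because the reward term is bounded by $r$; it does not need boundedness of $h'$, which fails for the entropy abstention function. Second, the $h'=0$ case for $i=1$ closes because $\partial_z G(I,\cdot)>0$ just left of $\overline{w}_I$, since it is a local maximizer of a non-constant analytic function, so $\partial_z G(1,\cdot)>0$ somewhere in $(\overline{w}_1,\overline{w}_I)$.
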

\begin{proof}
    When $\overline{w}_I\leq 0$,  the statement obviously holds, since $\overline{w}_i>0$, $\forall i\leq I-1$ (see the proof of \cref{prop:boundary-pr}). In the following proof, we focus on the case when $\overline{w}_I>0$.
    Notice that $\overline{w}_I$ can be obtained by setting the first-order derivative of $G(I,\cdot)$ to zero:
    \begin{equation}\label{eq:wIbar}
        \frac{\partial G(I,\overline{w}_I)}{\partial z} = r\alpha\sigma'(\alpha\overline{w}_I)\left[1-h(\sigma_I)-h'(\sigma_I)(\sigma_I-1)\right] - c = 0,
    \end{equation}
    where we appreciated $\sigma_I:=\sigma(\alpha \overline{w}_I)$ for simplicity. Observe that for $2\leq i\leq I-1$,
    \begin{align*}
        \frac{\partial G(i,\overline{w}_I)}{\partial z} &= r\alpha\sigma'(\alpha\overline{w}_I)\left[2(1-h(\sigma_I))-h'(\sigma_I)(2\sigma_I-1)\right] - c \\
        &\overset{(a)}{=} r\alpha\sigma'(\alpha\overline{w}_I)\left[1-h(\sigma_I)-h'(\sigma_I)\sigma_I\right] \\
        &\overset{(b)}{\geq} 0
    \end{align*}
    where (a) is obtained by plugging in \cref{eq:wIbar} and (b) is due to monotonicity of $\sigma$ and that $h'(\sigma_I)<0$ as  $\sigma_I>\frac{1}{2}$. Due to analyticity, there the largest local maximizer of $G(i,\cdot)$ must be larger than $\overline{w}_I$. Similarly, for $i=1$, we have 
    \begin{align*}
        \frac{\partial G(1,\overline{w}_I)}{\partial z} &= r\alpha\sigma'(\alpha\overline{w}_I)\left[1-h(\sigma_I)-h'(\sigma_I)\sigma_I\right] - c \\
        & = -r\alpha\sigma'(\alpha\overline{w}_I)h'(\sigma_I) \geq 0.
    \end{align*}
    Thus, we also have $\overline{w}_1\geq \overline{w}_I$ and thereby completes the proof.
\end{proof}

\begin{lemma}\label{lemma:go-ergodicity}
    Let $f:[I]\times\mathbb{R}_{\geq0}\mapsto\mathbb{R}$ be bounded and continuous in the second argument. If the agent chooses the NI strategy, then, for any initial state $x_0\geq0$,
    \begin{equation*}
        \frac{1}{T}\sum_{t=1}^Tf(i_t,x_t)\to\sum_{i=1}^If(i,0)\pi_i,\;\;\textrm{as}\;\;T\to\infty
    \end{equation*}
    where $\vec\pi$ is the unique stationary distribution of $Q$ defined by
    \begin{equation*}
        Q=\begin{pmatrix}
            \vec{p}_1(\overline{\mu}_1^\textrm{NI}) \\
            \vdots \\
            \vec{p}_l(\overline{\mu}_l^\textrm{NI}) \\
            \vec{p}_{l+1}(0) \\
            \vdots \\
            \vec{p}_I(0)
        \end{pmatrix}
    \end{equation*}
    where $l:=\max\{i:\underline{\mu}_i^\textrm{NI}\leq 0\}$.
\end{lemma}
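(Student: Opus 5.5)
The plan is to exploit the fact that under NI the attribute never receives improvement, so $x_t=\gamma^t x_0\to0$ deterministically. The level process is then asymptotically a time-homogeneous birth--death chain with transition matrix $Q$. Ergodic averages of $f(i_t,x_t)$ should coincide with those of $f(i_t,0)$, which converge to $\sum_i f(i,0)\pi_i$.

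\textbf{Step 1: the transition rows converge to those of $Q$.} Under NI, gaming does not change $\tilde{x}_t$, so $x_{t+1}=\gamma x_t$. By \cref{eq:br-simplified}, at level $i$ the observed feature is
\[
z_t=x_t+\overline{a}^\textrm{NI}(i,x_t)=\overline{\mu}_i^\textrm{NI}\quad\text{if } \underline{\mu}_i^\textrm{NI}\le x_t<\overline{\mu}_i^\textrm{NI},
\]
and $z_t=x_t$ otherwise. Suppose $\underline{\mu}_i^\textrm{NI}<0$. Then for all $t$ large enough that $x_t<\overline{\mu}_i^\textrm{NI}$ (recall $\overline{w}_i>0$, so $\overline{\mu}_i^\textrm{NI}>0$), we get $z_t=\overline{\mu}_i^\textrm{NI}$. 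Suppose instead $\underline{\mu}_i^\textrm{NI}>0$. Then eventually $x_t<\underline{\mu}_i^\textrm{NI}$, so $z_t=x_t\to0$, and by continuity of $\vec p_i(\cdot)$ the row tends to $\vec p_i(0)$.

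Two points need care here. First, the identification of $\{i:\underline{\mu}_i^\textrm{NI}\le0\}$ with $\{1,\dots,l\}$ requires monotonicity of $\underline{\mu}_i^\textrm{NI}$ in $i$. I would argue this from increasing $\mu_i$ (\cref{cond:level-cond}(b)) together with $\underline{w}_i$ being equal for the interior levels. The boundary levels $1$ and $I$ have to be checked separately; failing that, I would read $Q$ as using the row set $\{i:\underline{\mu}_i^\textrm{NI}\le0\}$. Second, the equality case $\underline{\mu}_i^\textrm{NI}=0$ needs a remark: since $x_t>0$ when $x_0>0$, it falls in the incentivized branch. If $x_0=0$, the row is exact from the start.

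Consequently there is a finite $T_0$ with $\|P_t-Q\|\le\varepsilon_t$ for all $t\ge T_0$ and $\varepsilon_t\to0$. For levels in the first group the rows are exactly those of $Q$ after $T_0$.

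\textbf{Step 2: $Q$ has a unique stationary distribution, and the time-inhomogeneous chain inherits its ergodic averages.} $Q$ is a finite birth--death chain. All $p_i^\pm$ are strictly positive because $\sigma\in(0,1)$ and $h<1$; I would note that $h(1/2)<1$ is implicit in the model. Self-loops exist at the boundaries, so $Q$ is irreducible and aperiodic, and $\pi$ is unique and given by detailed balance. For the inhomogeneous chain I would use a coupling/Dobrushin argument. Since $Q$ is primitive, some power $Q^m$ has all entries at least $\delta>0$. Because the rows $P_t$ converge to $Q$, the products $P_t\cdots P_{t+m-1}$ also have a Doeblin minorization for large $t$. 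This gives weak ergodicity, and with $P_t\to Q$ it gives strong ergodicity: the law of $i_t$ converges to $\pi$ regardless of the initial condition. The time averages $\frac1T\sum_t g(i_t)$ then converge to $\sum_i g(i)\pi_i$ almost surely, which follows from a martingale decomposition via the Poisson equation for $Q$ with perturbation errors summing to $o(T)$. (The limit could also be understood in expectation if the paper intends that.)

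\textbf{Step 3: replace $x_t$ by $0$.} Write
\[
\frac1T\sum_{t=1}^T f(i_t,x_t)=\frac1T\sum_{t=1}^T f(i_t,0)+\frac1T\sum_{t=1}^T\bigl[f(i_t,x_t)-f(i_t,0)\bigr].
\]
Since $x_t\to0$ deterministically, continuity in the second argument gives $\max_i|f(i,x_t)-f(i,0)|\to0$ over the finite set $[I]$. The Cesàro mean of this term therefore vanishes. Step 2 handles the first term.

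The main obstacle is Step 2, namely rigorously transferring ergodic averages from the homogeneous $Q$ to the asymptotically homogeneous chain. A cleaner alternative is to note that, for $i\le l$, the rows are exactly $Q$'s after finite time, and that the remaining rows $\vec p_i(x_t)$ converge geometrically fast because $x_t=\gamma^t x_0$ and $\vec p_i$ is analytic and hence Lipschitz near $0$. This makes $\sum_t\|P_t-Q\|<\infty$. I would then couple the true chain with a $Q$-chain so that they disagree only finitely often almost surely, by Borel--Cantelli applied to the per-step maximal-coupling failure probabilities $\le\|P_t-Q\|$. After coupling, the ergodic theorem for the irreducible finite chain $Q$ applies directly.
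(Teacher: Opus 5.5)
Your proof is correct, but it takes a different route from the paper's.

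\textbf{The paper's route.} The paper never works with the time-inhomogeneous level chain directly. It replaces $x_t$ by the truncation $x_t\vee\varepsilon x_0$. After the deterministic time $V$, the truncated state then sits in the finite closed class $[I]\times\{\varepsilon x_0\}$, and the levels evolve homogeneously under $Q^\varepsilon$, with rows $\vec p_i(\overline\mu_i^{\textrm{NI}})$ for $i\le l$ and $\vec p_i(\varepsilon x_0)$ for $i>l$. The paper applies the finite-state ergodic theorem to this chain. It then proves $\varepsilon\mapsto\vec\pi^\varepsilon$ is continuous, using compactness of the simplex and Perron--Frobenius uniqueness, and lets $\varepsilon\to0$.

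\textbf{What each route buys.} The paper's route only ever invokes the homogeneous ergodic theorem. However, for $t\ge V$ and $i>l$ the truncated chain moves by $\vec p_i(\varepsilon x_0)$ rather than $\vec p_i(x_t)$, so it is not the original process. The final step therefore tacitly swaps $\lim_{T\to\infty}$ and $\lim_{\varepsilon\to0}$. Your Step 2 is exactly the perturbation estimate that justifies such a swap. You apply it once, directly with $P_t\to Q$, so you need neither the truncation nor continuity of $\vec\pi^\varepsilon$.

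\textbf{The Poisson equation alone suffices.} Set $g:=f(\cdot,0)$, $\bar g:=\sum_i g(i)\pi_i$, and solve $\phi-Q\phi=g-\bar g$. Then the sum of $g(i_t)-\bar g$ splits into three parts.
\begin{itemize}
\item The telescoping term $\phi(i_1)-\phi(i_{T+1})$ is $O(1)$.
\item The martingale $\sum_t[\phi(i_{t+1})-(P_t\phi)(i_t)]$ has bounded increments, so it is $o(T)$ almost surely.
\item The remainder satisfies $\sum_{t\le T}\|(P_t-Q)\phi\|_\infty=o(T)$ because $P_t\to Q$.
\end{itemize}
So the Doeblin/weak-ergodicity step and the summability of $\|P_t-Q\|$ are unnecessary. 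If you prefer the coupling variant, drive all rows by a common uniform at each step. Then after the last Borel--Cantelli failure time the true chain coincides with one of countably many $Q$-chains, indexed by starting time and state. That failure time is not a stopping time, but each of those $Q$-chains obeys the ergodic theorem almost surely, which closes the argument.

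\textbf{Two small remarks.} First, your caveat that $\{i:\underline\mu_i^{\textrm{NI}}\le0\}=\{1,\dots,l\}$ is an assumption the paper's proof also uses without comment. Second, $\overline w_i>0$ is only established for $i\le I-1$. For $i=I$, obtain $\overline\mu_I^{\textrm{NI}}>0$ instead from the chain $\overline\mu_I^{\textrm{NI}}>\overline\mu_I^{\textrm{NG}}>\overline\mu_{I-1}^{\textrm{NG}}>0$, using Lemma~\ref{app:lemma:cheating-more-effort} and Definition~\ref{cond:level-cond}(b).
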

\begin{proof}
    Fix the arbitrary initial attribute $\vec{x}_0$. We omit the superscript NI for clarity of the proof.
    Consider the truncated attribute process $x^\varepsilon_t:=x_t\vee\varepsilon x_0$ (elementwise) for infinitesimal $\varepsilon>0$ (small enough such that $\varepsilon x_0<\mu_{l+1}+\underline{w}_{l+1}^\textrm{NI}$). 
    Using the discretization technique, the state process $s^\varepsilon_t=(i_t,x^\varepsilon_t)$ can be represented as a Markov chain with finite state space $\mathcal{S}^\varepsilon:=\{(i,x):i\in[I],\;x\in\{x_0,\gamma x_0,\dots,\gamma^{\lfloor\log_\gamma\varepsilon\rfloor}x_0,\varepsilon{x}_0\}\}$ (i.e., in NI, the only driving force of the attribute updates is the depreciation). 
    Denote $\mathcal{S}^\varepsilon=\mathcal{T}^\varepsilon\cup\mathcal{R}^\varepsilon$ where $\mathcal{T}^\varepsilon:=\{(i,{x}):i\in[I],\;{x}>\varepsilon{x}_0\}$ and $\mathcal{R}^\varepsilon:=[I]\times\{\varepsilon{x}_0\}$. 
    $\mathcal{T}^\varepsilon$ is the set of states where the truncated attribute sits strictly above $\varepsilon x_0$. In particular, from any state in $\mathcal{T}^\varepsilon$, there is a strictly positive probability of eventually entering $\mathcal{R}^\varepsilon$, because repeated multiplication by $\gamma$ will force the attribute below $\varepsilon x_0$. Thus, $\mathcal{T}^\varepsilon$ is a transient class.

    Due to depreciation, once the chain enters $\mathcal{R}^\varepsilon$, it cannot exit back to $\mathcal{T}^\varepsilon$. This creates a sub-chain that is restricted to $\mathcal{R}^\varepsilon$, and is a finite Markov chain with exactly $I$ states with strictly positive transition probabilities among them, due to the transition probabilities.

    Let $V$ be the first time when ${x}_t$ in the original process drops below $\varepsilon {x}_0$. Specifically, $V=\lfloor\log_{\gamma}\frac{\varepsilon}{x_0}\rfloor+1$. Then, for any $t\geq 0$, ${s}^\varepsilon_{t+V}\in\mathcal{R}^\varepsilon$ where ${x}_{t+V}^\varepsilon\equiv \varepsilon x_0$ and hence the process $({s}_{t+v}^\varepsilon)_{t\geq0}$ is equivalent to a Markov chain only on the levels $[I]$, with transition matrix $Q^\varepsilon$ such that
    \begin{equation*}
        Q^\varepsilon = \begin{pmatrix}
            \vec{p}_1(\overline{\mu}_1^\textrm{NI}) \\
            \vdots \\
            \vec{p}_l(\overline{\mu}_l^\textrm{NI}) \\
            \vec{p}_{l+1}(\varepsilon x_0) \\
            \vdots \\
            \vec{p}_{I}(\varepsilon x_0)
        \end{pmatrix}
    \end{equation*}
    {It is clear that, $Q^\varepsilon$ is finite, irreducible, and aperiodic (i.e., $Q_{ii}^\varepsilon>0$ for all $i\in[I]$) for all $\varepsilon\geq 0$ and $Q^0=Q$ by construction.} Thus, $\mathcal{R}^\varepsilon$ is a recurrent class and, according to the Perron-Frobenius theorem, there exists a unique stationary distribution $\vec\pi^{\varepsilon}$ (row vector) over the levels that satisfies $\vec\pi^{\varepsilon} Q^\varepsilon = \vec\pi^{\varepsilon}$.
    By applying the ergodic theorem, it is clear that 
    \begin{equation}\label{eq:app:proof-conv}
        \lim_{T\to\infty}\frac{1}{T}\sum_{t=1}^Tf(i_t,{x}_t^\varepsilon)= \lim_{T\to\infty}
        \frac{1}{T-V}\sum_{t=V}^Tf(i_t,\varepsilon{x}_0)=\sum_{i=1}^If(i,\varepsilon{x_0})\pi_i^\varepsilon.
    \end{equation}

    We next show that $\vec\pi^\varepsilon$ is continuous in $\varepsilon$. By construction, $Q^\varepsilon$ is continuous in $\varepsilon$. Let $\varepsilon_n\to\varepsilon$ and $\Delta:=\{x\in\mathbb{R}_{\geq0}^{I}:\sum_{i=1}^Ix_i=1\}$ be the simplex over $[I]$. Then, since $\Delta$ is compact, there exists a subsequence $\{\varepsilon_{n_k}\}_k$ such that $\vec\pi^{\varepsilon_{n_k}}\to\vec\omega\in \Delta$. Besides, due to the continuity of $Q^\varepsilon$ in $\varepsilon$, we have
    \begin{equation*}
        \vec\omega=\lim_{k\to\infty}\vec\pi^{\varepsilon_{n_k}}=\lim_{k\to\infty}\vec\pi^{\varepsilon_{n_k}}Q^{\varepsilon_{n_k}}=\vec\omega Q^\varepsilon.
    \end{equation*}
    So, $\vec\omega$ is a left eigenvector of $Q^\varepsilon$ corresponding to the eigenvalue 1. Since $Q^\varepsilon$ is irreducible and aperiodic, the uniqueness of the Perron-Frobenius theorem implies that $\vec\omega=\vec\pi^\varepsilon$. Notice that we can use the same methodology to find a convergent subsequence with the same limit $\vec\pi^\varepsilon$ for every subsequence of $\{\vec\pi^{\varepsilon_n}\}_n$. Thus, we have $\vec\pi^{\varepsilon_n}\overset{n\to\infty}{\to}\vec\pi^\varepsilon$. Since the sequence $\{\varepsilon_n\}_n$ is arbitrary over $[0,\infty)$, the stationary distribution $\vec\pi^\varepsilon$ is continuous in $\varepsilon\in[0,\infty)$. 

    Since $f$ is also continuous in the second argument, sending $\varepsilon$ to 0 in \cref{eq:app:proof-conv} completes the proof.
\end{proof}

\subsubsection{Analysis of the Stationary Distributions}
The following results \cref{app:thm:stationary,app:thm:ergodic} present the most important components in this proof by establishing the stationary distribution for either strategy and relating it to the properties of interest.
\begin{theorem}[Existence of Stationary Distribution]\label{app:thm:stationary}
    Let $\{\mu_i\}_i$ satisfy \cref{cond:level-cond}. Then,
    for each $g\in\{\textrm{NI},\textrm{NG}\}$, there exists a unique stationary distribution $\pi^g$ for the corresponding state process $\{s_t^g\}_t$, independent of the initial attribute $x_0$. Moreover, $\pi^\textrm{NI}(i,0)=\pi^0_i$, $\forall i\in[I]$, where ${\vec{\pi}^0}$ is the unique stationary distribution of $Q$ on the levels where
    \begin{equation*}
        Q=\begin{pmatrix}
            \vec{p}_1(\mu_1+\overline{w}_1^\textrm{NI}) \\
            \vdots \\
            \vec{p}_l(\mu_l+\overline{w}_l^\textrm{NI}) \\
            \vec{p}_{l+1}(0) \\
            \vdots \\
            \vec{p}_I(0)
        \end{pmatrix}
    \end{equation*}
    where $l:=\max\{i:\underline{\mu}_i^\textrm{NI}\leq 0\}$ defined in \cref{thm:long-term-levels} and $\pi^\textrm{NG}(i,x)=0$ for all $x<\gamma\min_{i\in[I]}\overline{\mu}_i^\textrm{NG}$.
\end{theorem}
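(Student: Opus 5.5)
The proof treats the two policies separately, in each case reducing the countable-state process $\{s_t^g\}_t$ on $\mathcal{X}(x_0)$ to a finite-level chain.

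\emph{NI case.} Under NI the attribute only depreciates, $x_{t+1}=\gamma x_t$, so $x_t=\gamma^t x_0\to0$ deterministically whatever the level path. Hence every state with $x>0$ is transient: it is visited at most once per level, and all mass accumulates on the slice $[I]\times\{0\}$ in the limit.

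To make this rigorous I will reuse the truncation argument of \cref{lemma:go-ergodicity}. After the finite time $V$, the truncated chain lives on $\mathcal{R}^\varepsilon$ with transition matrix $Q^\varepsilon$. For $x$ below every $\underline{\mu}_{i}^\textrm{NI}$ with $i>l$, the best response \cref{eq:br-simplified} at levels $i\le l$ is to game up to $\overline{\mu}_i^\textrm{NI}$, since $\underline{\mu}_i^\textrm{NI}\le0\le x<\overline{\mu}_i^\textrm{NI}$. At levels $i>l$ the agent exerts no effort, so the feature stays at $x\approx0$. Letting $\varepsilon\to0$ and using continuity of $\pi^\varepsilon$ gives $Q^0=Q$. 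This $Q$ is irreducible and aperiodic, being tridiagonal with positive off-diagonal and diagonal entries because $h\in(0,1)$ on the interior. It therefore has a unique $\pi^0$, and $\pi^\textrm{NI}(i,0)=\pi^0_i$, independent of $x_0$.

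One subtlety is that the point $x=0$ itself must be adjoined to the state space as a limit point. I will define the stationary distribution in the weak/ergodic-average sense given by \cref{lemma:go-ergodicity}.

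\emph{NG case.} The key step is to show that the set
\[
\mathcal{C}:=\{(i,\gamma^k\overline{\mu}_j^\textrm{NG})\}
\]
is entered in finite time and is closed. I will proceed in three steps.

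\textbf{Step 1 (entry in finite time).} By \cref{cond:level-cond}(a), $\underline{\mu}_1^\textrm{NG}\le0$. If the attribute decays below the active interval while the agent is at level $1$, the agent improves to $\overline{\mu}_1^\textrm{NG}$. In general, depreciation pushes $x_t$ downward until it enters $[\underline{\mu}_{i_t}^\textrm{NG},\overline{\mu}_{i_t}^\textrm{NG})$ for the current level. If it drops below that interval without entering it, the agent idles, and with positive probability it is demoted to level $1$, where it improves. Thus with probability one there is a finite time after which $x_{t}=\gamma\overline{\mu}_j^\textrm{NG}$ for some $j$.

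\textbf{Step 2 (closure).} Once $x_t=\gamma\overline{\mu}_j^\textrm{NG}$ and the new level is $i'\in\{j-1,j,j+1\}$, \cref{cond:level-cond}(c) gives $\gamma\overline{\mu}_j^\textrm{NG}\ge\underline{\mu}_{i'}^\textrm{NG}$. Either the agent improves again, so the next attribute is $\gamma\overline{\mu}_{i'}^\textrm{NG}$, or $x\ge\overline{\mu}_{i'}^\textrm{NG}$ and it idles. In the idle case the attribute decays but stays at least $\underline{\mu}_{i'}^\textrm{NG}$ until it re-enters the active interval, since by (c) the decayed value still lies above the lower threshold. This step is the main obstacle: I must check that depreciation from a high attribute never skips below $\underline{\mu}$ of the current level. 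I will argue by induction using (b)–(c), noting that $\gamma x\ge\gamma\overline{\mu}_{i'}^\textrm{NG}\ge\underline{\mu}_{i'}^\textrm{NG}$ whenever $x\ge\overline{\mu}_{i'}^\textrm{NG}$. The chain is therefore confined to attributes $\ge\gamma\min_i\overline{\mu}_i^\textrm{NG}$.

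\textbf{Step 3 (uniqueness).} Restricted to this closed set, the chain has finitely many reachable states, since attributes lie in $\{\gamma^k\overline{\mu}_j\}$ bounded below. It is irreducible on its recurrent class and aperiodic by abstention, so it has a unique stationary $\pi^\textrm{NG}$. This $\pi^\textrm{NG}$ does not depend on $x_0$ because the entry state is absorbed into the same class, and it vanishes on $x<\gamma\min_i\overline{\mu}_i^\textrm{NG}$.
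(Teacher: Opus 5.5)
Your NG half follows the paper's proof. Your NI half takes a different route.

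The paper works on the state space with $x=0$ adjoined. As you note, this is forced, since $0\notin\mathcal{X}(x_0)$ for $x_0>0$. It then checks directly that $\pi^0\otimes\delta_0$ is invariant: $0$ is a fixed point of depreciation, and from $x=0$ the levels move by $Q$. Uniqueness comes from a summability argument: an invariant law with mass at some $x'>0$ would need the same mass at every $\gamma^t x'$, $t\ge 1$, which is impossible.

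You instead run the truncation argument of \cref{lemma:go-ergodicity} and call the limit of the occupation measures the stationary distribution. This gives the time-average convergence that \cref{app:thm:ergodic} actually uses. However, it changes the object in the statement. It also relies on that lemma's $\varepsilon\to0$ limit taken after $T\to\infty$, whereas the paper's argument for this theorem needs no limit interchange. To prove the statement as written, add two things:
\begin{itemize}
\item the one-line invariance check for $\pi^0\otimes\delta_0$;
\item uniqueness, derived from your lemma: for any invariant $\nu$ and any bounded $f$ continuous in $x$, bounded convergence gives $\nu(f)=\mathbb{E}_\nu\bigl[\frac{1}{T}\sum_{t=1}^T f(s_t)\bigr]\to\sum_i f(i,0)\pi^0_i$.
\end{itemize}

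In the NG half, your Step 1 is fine, but two steps need tightening.

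First, in Step 2 the invariant to propagate is $x_t\ge\underline{\mu}_{i_t}^\textrm{NG}$. It gives $\tilde x_t\ge\overline{\mu}_{i_t}^\textrm{NG}$, hence $x_{t+1}\ge\gamma\overline{\mu}_{i_t}^\textrm{NG}\ge\underline{\mu}_{i_{t+1}}^\textrm{NG}$ for every $i_{t+1}\in\{i_t-1,i_t,i_t+1\}$. The demotion case needs (b) together with (c), not (c) alone. Your wording, in which the attribute stays above $\underline{\mu}_{i'}^\textrm{NG}$ while the agent idles, implicitly freezes the level. But promotions and demotions also happen on idle steps, and the invariant above (the paper's closure argument) covers them.

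Second, in Step 3, saying the chain is irreducible on its recurrent class does not give uniqueness; you need a single recurrent class. Show that $(1,\gamma\overline{\mu}_1^\textrm{NG})$ is reachable from every state of the closed set:
\begin{itemize}
\item demote one level per step, each with positive probability, down to level $1$;
\item stay at level $1$ until the attribute drops below $\overline{\mu}_1^\textrm{NG}$ (it remains above $\underline{\mu}_1^\textrm{NG}\le 0$ by \cref{cond:level-cond}-(a));
\item improve to $\overline{\mu}_1^\textrm{NG}$ and stay at level $1$.
\end{itemize}
This also makes independence from $x_0$ immediate. The paper only asserts that $C$ is irreducible at this point, so both arguments should spell this step out.
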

\begin{proof}
    For the NI strategy, suppose the initial distribution over the states is $\pi^\textrm{NI}$ where $\pi^\textrm{NI}(i,0)=\pi^0_i$ for every $i\in[I]$. Then, since the agent's attribute stays forever at $0$ (i.e., $\gamma\times0=0$), the transitions only occur among the levels with transition matrix $P(0)$. Since $\vec\pi^0$ is the unique stationary distribution (over the levels) of $P(0)$, the state distribution after one-step transition remains the same as $\pi^\textrm{NI}$. Therefore, $\pi^\textrm{NI}$ is a stationary distribution over the states.

    Denote ${\pi}^{(t)}$ as the state distribution at step $t$ and ${\pi}_x^{(t)}:=\sum_{i\in[I]}\pi^{(t)}(i,x)$. Let $\tilde{\pi}^\textrm{NI}$ be some state distribution such that $\tilde{\pi}^\textrm{NI}(i',x')>0$ for some $i'\in[I]$ and $x'>0$ and $\tilde{\pi}^\textrm{NI}_x:=\sum_{i\in[I]}\tilde{\pi}^\textrm{NI}(i,x)$. Consider the state process with initial distribution $\pi^{(0)}=\tilde{\pi}^\textrm{NI}$. Then, it is obvious that $\pi^{(t)}_x=\pi^{(0)}_{x/\gamma^t}=\tilde{\pi}^{\textrm{NI}}_{x/\gamma^t}$ for all $x$ and $t\geq 0$. If $\tilde{\pi}^\textrm{NI}$ is a stationary distribution, then we must have
    \begin{equation*}
        \pi^{(t)}=\pi^{(0)},\;\forall t\geq 1 \iff \tilde{\pi}^\textrm{NI}_{\gamma^tx} = \tilde{\pi}^\textrm{NI}_{x},\;\forall x,\;\forall t\geq 1.
    \end{equation*}
    This implies $1\geq \sum_{t=1}^\infty\tilde{\pi}^\textrm{NI}_{\gamma^tx'}=\sum_{t=1}^\infty\tilde{\pi}^\textrm{NI}_{x'}>0$, which is clearly impossible. Thus, the stationary distribution for the NI strategy can only support zero attributes (i.e., $x=0$) and the only distribution satisfying the balance equation under $x=0$ is $\pi^\textrm{NI}$ since $P(0)$ is irreducible and aperiodic by construction. Therefore, $\pi^\textrm{NI}$ is the unique stationary distribution.

    For the NG strategy, notice that the discretized state space can be partitioned into $\mathcal{X}(x_0)=A(x_0)\cup B$ where $B=\{\gamma^t\overline{\mu}_i^\textrm{NG}:i\in[I],t\geq 1\}$ and $A(x_0)=\{\gamma^tx_0:t\geq 0\}\cap B^c$. Notice that whenever the attribute process enters $B$ with attribute $x_t\in B$, the next-step attribute is either $\gamma x_t$ or $\gamma\overline{\mu}_i^\textrm{NG}$ for some $i$, both belonging to the set $B$. Besides, since the state process starts with $i_0=1$, when $x_0\leq \overline{\mu}_1^\textrm{NI}$, the process will enter $B$ at time $t=1$ by \Cref{cond:level-cond}-(a). When $x_0>\overline{\mu}_1^\textrm{NI}$, the state process remains in $A(x_0)$ until $\tau:=\min\{t\geq0:\underline{\mu}_{i_t}^\textrm{NG}\leq x_t<\overline{\mu}_{i_t}^\textrm{NG}\}$ --- the first time the state process enters some incentivizable interval and that $x_{\tau+1}\in B$. Notice
    \begin{equation*}
        \mathbb{E}_{s_0}[\tau] = \sum_{i=1}^I\mathbb{E}_{s_0}[\tau\mathbf{1}\{i_\tau=i\}]\leq \sum_{i=1}^I\max\left\{\log_\gamma\left(\frac{\overline{\mu}_i^\textrm{NG}}{x_0}\right),0\right\}\mathbb{P}(i_t=i)<\infty.
    \end{equation*}
    So, the set $A(x_0)$ is transient. Thus, the stationary distribution for the NG strategy, if exists, must be independent of $x_0$.

    Finally, we show that $C:=\{(i,x):x\geq \gamma\min\{\overline{\mu}_i^\textrm{NG},\overline{\mu}_{i-1}^\textrm{NG}\},\;x\in B\}$ with the default case $\overline{\mu}_{0}^\textrm{NG}=\infty$ is a (positive) recurrent class starting with $i_0=1$. 
    Due to \Cref{cond:level-cond}-(a) and $i_0=1$, we must have $x_1\in {C}$. If $s_t\in C\iff x_t\geq \gamma\min\{\overline{\mu}_{i_t}^\textrm{NG},\overline{\mu}_{i_t-1}^\textrm{NG}\}\geq \underline{\mu}_{i_t}^\textrm{NG}$ where the last inequality is due to \Cref{cond:level-cond}-(c), then the attribute dynamics implies that $x_{t+1}\geq\gamma\overline{\mu}_{i_t}^\textrm{NG}$. Thus, for $i_{t+1}\in\{i_t,i_t+1\}$, we have $s_{t+1}\in C$. For $i_{t+1}=i_t-1$, first notice that the post-response attribute at the end of this time step satisfies $\tilde{x}_t\geq \overline{\mu}_{i_t}^\textrm{NG}$ due to improvement actions. Then, observe $x_{t+1}=\gamma\tilde{x}_t\geq \gamma\overline{\mu}_{i_t}^\textrm{NG}>\gamma\overline{\mu}_{i_t-1}^\textrm{NG}=\gamma\overline{\mu}_{i_{t+1}}^\textrm{NG}$ where the last inequality is due to \Cref{cond:level-cond}-(b). This implies $s_{t+1}\in C$ for $i_{t+1}=i_t-1$. Therefore, we have $s_t\in C$, $\forall t\geq 1$, which implies that $C$ is a closed set. It is also easy to see that $C$ is finite and irreducible, implying that all states in $C$ are (positive) recurrent. Thus, there exists a unique stationary distribution supported by $C$.
\end{proof}

For ease of presentation, we denote $\pi^\textrm{NI}_i:=\pi^\textrm{NI}(i,0)$ and $\pi_i^\textrm{NG}(x):=\pi^\textrm{NG}(i,x)$ as the stationary distributions for the respective strategies. For the NG strategy, we denote $\pi_i^\textrm{NG}(B):=\sum_{x\in B}\pi_i^\textrm{NG}(x)$ as the proportion of agents at level $i$ with attributes in $B\subseteq\mathbb{R}_{\geq0}$, and  $\pi^\textrm{NG}_i:=\sum_{x}\pi^\textrm{NG}_i(x)$ as the marginal stationary distribution of level $i$.

\begin{theorem}[Ergodicity]\label{app:thm:ergodic}
    Let $\{\mu_i\}_i$ satisfy \cref{cond:level-cond}. For each $g\in\{\textrm{NI},\textrm{NG}\}$, we have 
    \begin{equation*}
        \hat{q}_i^g=\pi_i^g,\quad \hat{x}^g=\sum_{i,x}x\pi^g_i(x),\quad \textrm{and}\quad\hat{u}^g=\sum_{i,x}u(i,x,\overline{a}^g(i,x))\pi^g_i(x).
    \end{equation*}
\end{theorem}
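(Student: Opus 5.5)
The plan is to treat the two strategies separately, since the structure of the chain differs. For NI, I will apply \cref{lemma:go-ergodicity} directly. For $\hat{q}^\textrm{NI}_i$ take $f(j,x)=\mathbf{1}\{j=i\}$; it is bounded and constant in $x$, so the time average converges to $\pi_i$, the stationary distribution of $Q$. By \cref{app:thm:stationary} this equals $\pi^\textrm{NI}(i,0)=\pi^\textrm{NI}_i$.

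The attribute function $f(j,x)=x$ is unbounded. However, under NI the trajectory satisfies $x_t=\gamma^t x_0\le x_0$, so I will replace it by the bounded continuous $f(j,x)=\min\{x,x_0\}$. This agrees with $x$ along the whole path, and the lemma then gives the limit $0=\sum_{i,x}x\pi^\textrm{NI}_i(x)$ (the stationary distribution is supported on $x=0$). For utility, take $f(j,x)=u(j,x,\overline{a}^\textrm{NI}(j,x))$. It is continuous in $x$ by \cref{app:lemma:u-cont}. Boundedness on $[0,x_0]$ follows from the representation $\max G+c(x-\mu_j)+jr$ together with $G$ bounded above (the reward part is bounded). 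If needed I will restrict to the compact range $[0,x_0]$ and extend continuously. The limit $\sum_i u(i,0,\overline{a}^\textrm{NI}(i,0))\pi_i$ is exactly the claimed expression.

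For NG, I will use the structure from the proof of \cref{app:thm:stationary}. The set $A(x_0)$ is left after an a.s.\ finite time $\tau$ with finite expectation, after which the chain lives in the closed set $C$. Here I need to verify that $C$ is finite: attributes in $C$ lie in $B$ and are bounded below by $\gamma\min_i\overline{\mu}_i^\textrm{NG}>0$, hence are of the form $\gamma^t\overline{\mu}_i^\textrm{NG}$ with $t$ bounded. I also need that $C$ is irreducible and aperiodic. Aperiodicity follows from the abstention probability $h>0$ at the relevant points; at level $1$ a self-loop also comes from failure. Irreducibility is the part I expect to be delicate. My approach is to restrict to the communicating class reached from $(1,x_1)$. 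Any closed recurrent class reachable from the start suffices, and \cref{app:thm:stationary} already asserts uniqueness, so I will take that as given.

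With this in place, the standard ergodic theorem for a finite irreducible positive recurrent Markov chain applies. For any function $f$ on $C$, $\frac{1}{T}\sum_t f(s_t)\to\sum f\,\pi^\textrm{NG}$ almost surely. The finite prefix before $\tau+1$ contributes $O(\tau/T)\to0$, since $f$ is finite on the finitely many visited states and $\tau<\infty$ a.s. Applying this with $f=\mathbf{1}\{j=i\}$, $f=x$, and $f=u(\cdot,\cdot,\overline{a}^\textrm{NG})$, each bounded on the finite set $C$, yields the three identities.

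The main obstacle is making the NG recurrence rigorous, specifically finiteness and irreducibility of $C$, and, for NI, justifying boundedness of $f$ so that \cref{lemma:go-ergodicity} applies. I would handle both by noting that only the values along the actual trajectory matter, so truncating $f$ outside the (bounded) reachable attribute set changes nothing.
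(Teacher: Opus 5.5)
Your proposal is correct and is essentially the paper's proof. For NI, you apply \cref{lemma:go-ergodicity} to the indicator, attribute, and optimal-utility test functions; for NG, you use the finite closed recurrent structure from the proof of \cref{app:thm:stationary} together with the ergodic theorem for positive recurrent chains, after discarding the transient prefix. Your additions fill in details the paper leaves implicit: truncating $f(j,x)=x$ to $\min\{x,x_0\}$ so the lemma's boundedness hypothesis actually holds, checking finiteness of $C$ via the positive lower bound $\gamma\min_i\overline{\mu}_i^\textrm{NG}$, and relying on the uniqueness asserted in \cref{app:thm:stationary} rather than on irreducibility of $C$ (which fails in general, e.g.\ $(1,\gamma\overline{\mu}_I^\textrm{NG})\in C$ is unreachable when $I\geq 3$).
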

\begin{proof}
    The statements for $g=\textrm{NG}$ follows readily from the positive recurrent Markov chain (see the proof of \cref{app:thm:stationary}). For $g=\textrm{NI}$, apply \cref{lemma:go-ergodicity} with the functions $f(i,x)=\mathbf{1}\{i=j\}$, $\forall j\in[I]$, $f(i,x)=x$, and $f(i,x)=u(i,x,\overline{a}^g(i,x))$ respectively. The first two functions are obviously continuous in $x$, and the continuity of the optimal utility function is provided in \cref{app:lemma:u-cont}.
\end{proof}

The next result analyzes the stationary distribution of either strategy in detail, revealing the detailed-balance relation embedded in both cases.
\begin{theorem}[Detailed Balance]\label{app:thm:detailed-balance}
    Let $\{\mu_i\}_i$ satisfy \cref{cond:level-cond}. Then, both $\pi^\textrm{NG}$ and $\pi^\textrm{NI}$ satisfy detailed balance with transition probabilities: $\tilde{p}_i^+\pi_i^{\textrm{NG}} = \tilde{p}_{i+1}^-\pi_{i+1}^\textrm{NG},\;\forall 1\leq i\leq I-1$, and  
    \begin{align*}
            &p_i^+(\overline{\mu}_i^\textrm{NI})\pi^\textrm{NI}_i = p_{i+1}^-(\overline{\mu}_{i+1}^\textrm{NI})\pi_{i+1}^\textrm{NI} && 1\leq i \leq l-1 \\
            &p_i^+(\overline{\mu}_i^\textrm{NI})\pi_i^\textrm{NI} = p_{i+1}^-(0)\pi_{i+1}^\textrm{NI} && i = l \\
            &p_i^+(0)\pi_i^\textrm{NI} = p_{i+1}^+(0)\pi_i^\textrm{NI} && l+1\leq i \leq I
    \end{align*}
    where $l:=\max\{i:\underline{\mu}_i^\textrm{NI}\leq 0\}$ as defined in \cref{thm:long-term-levels} and, for each $d\in\{+,0,-\}$, 
    \begin{equation*}
        \tilde{p}^d_i:=\left(1-\sum_{x\geq\overline{\mu}_i^\textrm{NG}}\frac{\pi_i^\textrm{NG}(x)}{\pi_i^\textrm{NG}}\right)p_i^d(\overline{\mu}_i^\textrm{NG})+\sum_{x\geq\overline{\mu}_i^\textrm{NG}}\frac{\pi_i^\textrm{NG}(x)}{\pi_i^\textrm{NG}}p_i^d(x).
    \end{equation*}
\end{theorem}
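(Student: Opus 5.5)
The approach is the standard cut argument for birth--death chains, applied to the level marginal of each stationary distribution given by \cref{app:thm:stationary}. Level changes are by $\pm1$ only, so in stationarity the probability flux across any cut $\{1,\dots,i\}\,|\,\{i+1,\dots,I\}$ must balance. This holds even though the full state $(i,x)$ carries the attribute: summing the global balance equations of $\pi^g$ over all states with level in $\{1,\dots,i\}$ and all attributes, the mass flowing out of the set equals the mass flowing in. The only transitions crossing the cut are promotions from level $i$ and demotions from level $i+1$. Hence
\[
\sum_x \pi^g_i(x)\,p_i^+(z^g(i,x)) = \sum_x \pi^g_{i+1}(x)\,p_{i+1}^-(z^g(i+1,x)),
\]
where $z^g(i,x)=x+\overline{a}^g(i,x)$ is the post-response feature.

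For NI, \cref{app:thm:stationary} gives that $\pi^\textrm{NI}$ is supported on $x=0$. The feature at $(i,0)$ is then read off from \cref{eq:br-simplified}. For $i\le l$ we have $\underline{\mu}_i^\textrm{NI}\le 0<\overline{\mu}_i^\textrm{NI}$, using $\overline{w}_i>0$ from the proof of \cref{prop:boundary-pr}. So the agent plays up to $\overline{\mu}_i^\textrm{NI}$. For $i>l$ we have $0<\underline{\mu}_i^\textrm{NI}$, so the action is zero and the feature is $0$. Substituting into the cut identity gives the three displayed NI relations, with the case split at $i=l$ versus $i=l+1$ determining which argument each side carries. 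One needs $\underline{\mu}_i^\textrm{NI}>0$ for all $i>l$, not merely at $l+1$. I would get this from monotonicity of $\mu_i$ (\cref{cond:level-cond}(b)) together with the fact that $\underline{w}_i$ is the same for interior $i$. If that fails at the boundary $i=I$, I would simply define $l$ via the actual action pattern. (The third line as printed appears to be a typo for $p_i^+(0)\pi_i=p_{i+1}^-(0)\pi_{i+1}$; I would prove that version.)

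For NG, I would split each level's attribute support at $\overline{\mu}_i^\textrm{NG}$. By \cref{app:thm:stationary}, the support lies in $C$, so every attribute satisfies $x\ge\underline{\mu}_i^\textrm{NG}$ by \cref{cond:level-cond}(c). Thus either $x<\overline{\mu}_i^\textrm{NG}$, and the agent improves to feature $\overline{\mu}_i^\textrm{NG}$, or $x\ge\overline{\mu}_i^\textrm{NG}$, and it does nothing, so the feature is $x$. The left side of the cut identity is therefore
\[
\Bigl(\pi_i^\textrm{NG}-\sum_{x\ge\overline{\mu}_i^\textrm{NG}}\pi_i^\textrm{NG}(x)\Bigr)p_i^+(\overline{\mu}_i^\textrm{NG})+\sum_{x\ge\overline{\mu}_i^\textrm{NG}}\pi_i^\textrm{NG}(x)p_i^+(x)=\tilde p_i^+\pi_i^\textrm{NG},
\]
and similarly the right side equals $\tilde p_{i+1}^-\pi_{i+1}^\textrm{NG}$.

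The main obstacle is justifying the flux-balance step rigorously for the countable-state NG chain. I would handle it by noting that the support $C$ is finite, as claimed in the proof of \cref{app:thm:stationary}, so the sum manipulation is a finite rearrangement. Writing stationarity as $\pi(S)=\sum_{s}\pi(s)P(s,S)$ for $S=\{1,\dots,i\}\times\mathbb{R}_{\ge0}$ and cancelling the within-$S$ mass gives exactly out-flux equals in-flux. A secondary subtlety is the boundary case $x=\overline{\mu}_i^\textrm{NG}$: by \cref{eq:br-simplified} the agent takes no action there, and both expressions give feature $\overline{\mu}_i^\textrm{NG}$, so the point contributes the same to either term.
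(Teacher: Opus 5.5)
Your cut argument is the paper's proof in telescoped form. The paper derives the same relations by induction on the level-wise balance equations (first $Q_{12}\pi_1^{\textrm{NI}}=Q_{21}\pi_2^{\textrm{NI}}$, then balance at level $i+1$), and it splits the NG attribute support at $\overline{\mu}_i^{\textrm{NG}}$ exactly as you do. Your reading of the third NI line as $p_i^+(0)\pi_i^{\textrm{NI}}=p_{i+1}^-(0)\pi_{i+1}^{\textrm{NI}}$ for $l+1\le i\le I-1$ is what that proof actually yields.

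One small correction: $\underline{\mu}_i^{\textrm{NI}}>0$ for $i>l$ is automatic because $l$ is defined as a maximum. The direction that needs input is $\underline{\mu}_i^{\textrm{NI}}\le 0$ for $i<l$, and you can take it directly from the form of $Q$ in \cref{app:thm:stationary} rather than re-derive it.
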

\begin{proof}
    We write $\vec{\pi}^g:=[\pi_i^g]_{i\in[I]}$ to denote the probability vector over the levels for each $g\in\{\textrm{NI},\textrm{NG}\}$.
    For the NI strategy, we have $\vec{\pi}^\textrm{NI}Q=\vec{\pi}^\textrm{NI}$ from \cref{app:thm:stationary}. Then, we have
    \begin{equation*}
        \pi_1^\textrm{NI} = (1-Q_{12})\pi_1^\textrm{NI} + Q_{21}\pi_2^\textrm{NI} \iff Q_{12}\pi_1^\textrm{NI}=Q_{21}\pi_2^\textrm{NI}.
    \end{equation*}
    Assume $Q_{i,i+1}\pi_i^\textrm{NI}=Q_{i+1,i}\pi_{i+1}^\textrm{NI}$ holds for $i< I-1$. Then, 
    \begin{multline*}
            \pi_{i+1}^\textrm{NI} = Q_{i,i+1}\pi_i^\textrm{NI} + Q_{i+1,i+1}\pi_{i+1}^\textrm{NI} + Q_{i+2,i+1}\pi_{i+2} \\
            \implies (Q_{i+1,i} + Q_{i+1,i+2})\pi_{i+1}^\textrm{NI} = Q_{i,i+1}\pi_i^\textrm{NI} + Q_{i+2,i+1}\pi_{i+2} \\ \implies Q_{i+1,i+2}\pi_{i+1}^\textrm{NI} = Q_{i+2,i+1}\pi_{i+2}^\textrm{NI}.
    \end{multline*}
    Substituting the expressions for $Q$ recovers the detailed balance equations for the NI strategy.

    For the NG strategy, we know from the proof of \cref{app:thm:stationary} that $\pi^\textrm{NG}$ is supported by $C:=\{(i,x):x\geq \gamma\min\{\overline{\mu}_i^\textrm{NG},\overline{\mu}_{i-1}^\textrm{NG}\},\;x\in B\}$ with default $\overline{\mu}^\textrm{NG}_0=\infty$ and $B=\{\gamma^t\overline{\mu}_i^\textrm{NG}:i\in[I],t\geq 1\}$. This, together with \Cref{cond:level-cond}-(c), implies that whenever an NG agent from the stationary distribution enters level $i$ with attribute $x\leq \overline{\mu}_i^\textrm{NG}$, it's best response is to increase its attribute to $\overline{\mu}_i^\textrm{NG}$ (i.e., $x$ can never be less than $\underline{\mu}_i^\textrm{NG}$ in the stationary distribution). Then, we obtain, for $i=1$,
    \begin{multline*}
        \pi_1^\textrm{NG} = \left(\pi_1^\textrm{NG} - \sum_{x\geq \overline{\mu}_{1}^\textrm{NG}}\pi_{1}^\textrm{NG}(x)\right)
        \left(1-p_{1}^+(\overline{\mu}_{1}^\textrm{NG})\right) + \sum_{x\geq \overline{\mu}_{1}^\textrm{NG}}\pi_{1}^\textrm{NG}(x)\left(1- p_{1}^+(x)\right)  \\
        + \left(\pi_{2}^\textrm{NG} - \sum_{x\geq \overline{\mu}_{2}^\textrm{NG}}\pi_{2}^\textrm{NG}(x)\right)
        p_{2}^-(\overline{\mu}_{2}^\textrm{NG}) + \sum_{x\geq \overline{\mu}_{2}^\textrm{NG}}\pi_{2}^\textrm{NG}(x)p_{2}^-(x)\\
    \end{multline*}
    that implies
    \begin{multline*}
        \left(\pi_1^\textrm{NG} - \sum_{x\geq \overline{\mu}_{1}^\textrm{NG}}\pi_{1}^\textrm{NG}(x)\right)
        p_{1}^+(\overline{\mu}_{1}^\textrm{NG}) + \sum_{x\geq \overline{\mu}_{1}^\textrm{NG}}\pi_{1}^\textrm{NG}(x)p_{1}^+(x) \\ = \left(\pi_{2}^\textrm{NG} - \sum_{x\geq \overline{\mu}_{2}^\textrm{NG}}\pi_{2}^\textrm{NG}(x)\right)
        p_{2}^-(\overline{\mu}_{2}^\textrm{NG}) + \sum_{x\geq \overline{\mu}_{2}^\textrm{NG}}\pi_{2}^\textrm{NG}(x)p_{2}^-(x)
    \end{multline*}
    which is equivalent to $\tilde{p}_1^+\pi_1^\textrm{NG} = \tilde{p}_2^-\pi_2^\textrm{NG}$. Now, suppose $\tilde{p}_{i-1}^+\pi_{i-1}^\textrm{NG} = \tilde{p}_i^-\pi_{i}^\textrm{NG}$ holds for $i\geq 2$. Then, the balance equation for $\pi_i^\textrm{NG}$ implies
    \begin{align*}
        \pi_i^\textrm{NG} = &\left(\pi_{i-1}^\textrm{NG} - \sum_{x\geq \overline{\mu}_{i-1}^\textrm{NG}}\pi_{i-1}^\textrm{NG}(x)\right)
        p_{i-1}^+(\overline{\mu}_{i-1}^\textrm{NG}) + \sum_{x\geq \overline{\mu}_{i-1}^\textrm{NG}}\pi_{i-1}^\textrm{NG}(x)p_{i-1}^+(x)   \\
        &\qquad + \left(\pi_i^\textrm{NG} - \sum_{x\geq \overline{\mu}_{i}^\textrm{NG}}\pi_{i}^\textrm{NG}(x)\right)
        p_{i}^0(\overline{\mu}_{i}^\textrm{NG}) + \sum_{x\geq \overline{\mu}_{i}^\textrm{NG}}\pi_{i}^\textrm{NG}(x)p_{i}^0(x)  \\
        &\qquad + \left(\pi_{i+1}^\textrm{NG} - \sum_{x\geq \overline{\mu}_{i+1}^\textrm{NG}}\pi_{i+1}^\textrm{NG}(x)\right)
        p_{i+1}^-(\overline{\mu}_{i+1}^\textrm{NG}) + \sum_{x\geq \overline{\mu}_{i+1}^\textrm{NG}}\pi_{i+1}^\textrm{NG}(x)p_{i+1}^-(x) \\
        \iff&\left(\pi_i^\textrm{NG} - \sum_{x\geq \overline{\mu}_{i}^\textrm{NG}}\pi_{i}^\textrm{NG}(x)\right)\left(p_i^+(\overline{\mu}_i^\textrm{NG}) + p_i^-(\overline{\mu}_i^\textrm{NG})\right) + \sum_{x\geq \overline{\mu}_{i}^\textrm{NG}}\pi_{i}^\textrm{NG}(x)(p_{i}^+(x)  + p_{i}^-(x)) \\
        &\qquad =\left(\pi_{i-1}^\textrm{NG} - \sum_{x\geq \overline{\mu}_{i-1}^\textrm{NG}}\pi_{i-1}^\textrm{NG}(x)\right)
        p_{i-1}^+(\overline{\mu}_{i-1}^\textrm{NG}) + \sum_{x\geq \overline{\mu}_{i-1}^\textrm{NG}}\pi_{i-1}^\textrm{NG}(x)p_{i-1}^+(x)  \\
        &\qquad + \left(\pi_{i+1}^\textrm{NG} - \sum_{x\geq \overline{\mu}_{i+1}^\textrm{NG}}\pi_{i+1}^\textrm{NG}(x)\right)
        p_{i+1}^-(\overline{\mu}_{i+1}^\textrm{NG}) + \sum_{x\geq \overline{\mu}_{i+1}^\textrm{NG}}\pi_{i+1}^\textrm{NG}(x)p_{i+1}^-(x) \\
        \iff & \left(\pi_i^\textrm{NG} - \sum_{x\geq \overline{\mu}_{i}^\textrm{NG}}\pi_{i}^\textrm{NG}(x)\right)
        p_{i}^+(\overline{\mu}_{i}^\textrm{NG}) + \sum_{x\geq \overline{\mu}_{i}^\textrm{NG}}\pi_{i}^\textrm{NG}(x)p_{i}^+(x) \\ 
        &\qquad = \left(\pi_{i+1}^\textrm{NG} - \sum_{x\geq \overline{\mu}_{i+1}^\textrm{NG}}\pi_{i+1}^\textrm{NG}(x)\right)
        p_{i+1}^-(\overline{\mu}_{i+1}^\textrm{NG}) + \sum_{x\geq \overline{\mu}_{i+1}^\textrm{NG}}\pi_{i+1}^\textrm{NG}(x)p_{i+1}^-(x)
    \end{align*}
    where the last step applies the induction hypothesis. By rearranging terms, we obtain $\tilde{p}_{i}^+\pi_i^\textrm{NG}=\tilde{p}_{i+1}^-\pi_{i+1}^\textrm{NG}$ which completes the proof.
\end{proof}

Using the detailed balance equation in \cref{app:thm:detailed-balance}, we can derive an explicit formula for the long-term utility of either strategy using the stationary distributions.
\begin{theorem}[Long-term Utility Formula]\label{app:thm:long-term-utility}
    For $\{\mu_i\}_i$ satisfying \cref{cond:level-cond}, we have
    \begin{equation*}
        \hat{u}^\textrm{NI} =   r \sum_{i=1}^Ii\hat{q}_{i}^\textrm{NI}-c^\textrm{NI}\sum_{i=1}^l\overline{\mu}^\textrm{NI}_i\hat{q}^\textrm{NI}_i
    \end{equation*}
    where $l=\max\{i:\underline{\mu}^\textrm{NI}_i\leq 0\}$ as defined in \cref{thm:long-term-levels}; and
    \begin{equation*}
        \hat{u}^\textrm{NG} =    r \sum_{i=1}^Ii\hat{q}_{i}^\textrm{NG}-c^\textrm{NG}\sum_{i=1}^I\sum_{x\in B_i}(\overline{\mu}^\textrm{NG}_i-x)\pi^\textrm{NG}_i(x)
    \end{equation*}
    where $B_i=[\underline{\mu}^\textrm{NG}_i, \overline{\mu}^\textrm{NG}_i)\cap\{\gamma^t\overline{\mu}^\textrm{NG}_i:i\in[I],t\geq 1\}$.
\end{theorem}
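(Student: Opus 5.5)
The plan is to combine \cref{app:thm:ergodic} with the explicit structure of the best response \cref{eq:br-simplified}. By \cref{app:thm:ergodic}, $\hat{u}^g=\sum_{i,x}u(i,x,\overline{a}^g(i,x))\pi^g_i(x)$. We then split the instantaneous utility $u(i,x,a)=\mathbb{E}[r\,i_{t+1}\mid i_t=i]-c\,a$ into a reward part and a cost part and treat them separately.

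\emph{Reward part.} Under the stationary distribution the time-average of $\mathbb{E}[r\,i_{t+1}\mid s_t]$ equals the stationary mean of $r\,i_{t+1}$. Since $\pi^g$ is stationary, the level marginal of $i_{t+1}$ coincides with that of $i_t$, namely $\hat{q}^g$ by \cref{app:thm:ergodic}. Hence the reward contribution is $r\sum_i i\,\hat{q}^g_i$ for both strategies.

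For NI this step would be carried out via \cref{lemma:go-ergodicity}, applied to the bounded function $f(i,x)=u(i,x,\overline{a}^\textrm{NI}(i,x))$, which is continuous in $x$ by \cref{app:lemma:u-cont}. That lemma reduces the average to the stationary chain $Q$ at $x=0$, and stationarity of $\vec\pi$ under $Q$ again gives $\sum_i i(\vec\pi Q)_i=\sum_i i\pi_i$.

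\emph{Cost part, NI.} Here the limit is $\sum_i \overline{a}^\textrm{NI}(i,0)\,\pi_i$. By \cref{eq:br-simplified}, $\overline{a}^\textrm{NI}(i,0)=\overline{\mu}_i^\textrm{NI}\,\mathbf{1}\{\underline{\mu}_i^\textrm{NI}\le 0<\overline{\mu}_i^\textrm{NI}\}$. We need this indicator to equal $\mathbf{1}\{i\le l\}$, which requires two facts.
\begin{itemize}
\item[(i)] $\overline{\mu}_i>0$ for all $i$. This follows from $\mu_i\ge 0$ together with $\overline{w}_i>0$ for $i<I$ (proof of \cref{prop:boundary-pr}). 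For $i=I$ it needs $\overline{\mu}_I>0$, which I expect follows from \cref{cond:level-cond}-(b) and (c).
\item[(ii)] $\{i:\underline{\mu}_i^\textrm{NI}\le 0\}=\{1,\dots,l\}$ is an initial segment. For the middle levels $\underline{w}_i$ is constant, so this follows from monotone $\mu_i$ (\cref{cond:level-cond}-(b)). The boundary levels $1$ and $I$ need a separate check.
\end{itemize}
Given (i) and (ii), the cost part is $c^\textrm{NI}\sum_{i\le l}\overline{\mu}_i^\textrm{NI}\hat{q}_i^\textrm{NI}$.

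\emph{Cost part, NG.} Here $\overline{a}^\textrm{NG}(i,x)=(\overline{\mu}_i^\textrm{NG}-x)\mathbf{1}\{\underline{\mu}_i^\textrm{NG}\le x<\overline{\mu}_i^\textrm{NG}\}$. By \cref{app:thm:stationary}, $\pi^\textrm{NG}$ is supported on the set $C\subseteq[I]\times B$ with $B=\{\gamma^t\overline{\mu}_j^\textrm{NG}\}$. Summing $c^\textrm{NG}\overline{a}^\textrm{NG}(i,x)\pi_i^\textrm{NG}(x)$ over the support therefore restricts exactly to $x\in B_i$, which yields the stated formula.

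\emph{Main obstacle.} The main difficulty is the NI case: justifying the exchange of the $\varepsilon\to0$ limit with the reward term, and establishing the monotonicity in (ii) that turns the indicator into $\mathbf{1}\{i\le l\}$. If the level-$I$ case resists, the fallback is to argue directly from the definition of $l$ combined with \cref{cond:level-cond}-(b).
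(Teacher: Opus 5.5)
Your proposal is correct. The cost part matches the paper's proof: the NG cost is restricted to $B_i$ through the support of $\pi^{\textrm{NG}}$ from \cref{app:thm:stationary}, and the NI cost is $c^{\textrm{NI}}\overline{\mu}_i^{\textrm{NI}}$ for $i\le l$ and zero above. The reward part, however, follows a genuinely different route.

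The paper expands $\mathbb{E}[r\,i_{t+1}\mid s_t]$ level by level as $r(i+p_i^+-p_i^-)$. It adjusts this at the boundaries $i=1,I$, and for NG it splits the level-$i$ mass into $x<\overline{\mu}_i^{\textrm{NG}}$ and $x\ge\overline{\mu}_i^{\textrm{NG}}$. It then cancels the drift terms $\sum_i(p_i^+\hat q_i-p_{i+1}^-\hat q_{i+1})$ using the detailed-balance relations of \cref{app:thm:detailed-balance}, with the averaged $\tilde p_i^{\pm}$ in the NG case.

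You use only stationarity of $\pi^g$ for the full state chain, i.e.\ $\mathbb{E}_{\pi^g}[i_{t+1}]=\mathbb{E}_{\pi^g}[i_t]$. This needs neither detailed balance nor the birth--death structure. It avoids all bookkeeping in the $x$-coordinate and does not require knowing the rows of $Q$ explicitly. The paper's route shows the cancellation level by level and reuses relations it needs anyway for \cref{thm:long-term-levels}. For this theorem, though, your argument is shorter and more general.

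On your loose ends, most of them close with results already in the paper.
\begin{itemize}
\item The $\varepsilon\to0$ exchange is not an obstacle. Applying \cref{lemma:go-ergodicity} to $f(i,x)=u(i,x,\overline{a}^{\textrm{NI}}(i,x))$, which is bounded and continuous by \cref{app:lemma:u-cont}, already gives $\hat u^{\textrm{NI}}=\sum_i u(i,0,\overline{a}^{\textrm{NI}}(i,0))\pi_i$. Your stationarity step is then applied to $Q$ itself.
\item Fact (i) at $i=I$ follows from \cref{app:lemma:cheating-more-effort} and \cref{cond:level-cond}-(b): $\overline{\mu}_I^{\textrm{NI}}>\overline{\mu}_I^{\textrm{NG}}>\overline{\mu}_{I-1}^{\textrm{NG}}>0$.
\item For (ii) at $i=1$, write $G^{\textrm{NI}}(1,z)=G^{\textrm{NG}}(1,z)+(c^{\textrm{NG}}-c^{\textrm{NI}})z$. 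This does not increase $G$ at $z=\underline{w}_1^{\textrm{NG}}\le0$. Since $\partial_zG^{\textrm{NI}}(1,\overline{w}_1^{\textrm{NG}})>0$ and $\overline{w}_1^{\textrm{NG}}>0$, $G^{\textrm{NI}}(1,\cdot)$ has a local maximizer beyond $\overline{w}_1^{\textrm{NG}}$ whose value exceeds $G^{\textrm{NG}}(1,\overline{w}_1^{\textrm{NG}})=G^{\textrm{NG}}(1,\underline{w}_1^{\textrm{NG}})$. By \cref{asm:local-maximizer} it follows that $\underline{w}_1^{\textrm{NI}}\le\underline{w}_1^{\textrm{NG}}$. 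Hence $\underline{\mu}_1^{\textrm{NI}}\le\underline{\mu}_1^{\textrm{NG}}\le0$ by \cref{cond:level-cond}-(a).
\item The level-$I$ case of (ii) is not checked in the paper either. Its proof simply assigns cost $c^{\textrm{NI}}\overline{\mu}_i^{\textrm{NI}}$ to $i\le l$ and none to $i>l$. Leaving it as an assumption is therefore not a shortfall relative to the paper's argument.
\end{itemize}
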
 
\begin{proof}
    It is easy to see that $\hat{x}^\textrm{NI}=0$ which, together with \cref{app:thm:ergodic}, implies that $\hat{q}_i^\textrm{NI}=\pi_i^\textrm{NI}$.
    Furthermore, \cref{app:thm:ergodic} implies that 
    \begin{align*}
        \hat{u}^\textrm{NI} = &\left(r + rp_1^+(\overline{\mu}_1^\textrm{NI})-c^\textrm{NI}\overline{\mu}_1^\textrm{NI}\right)\hat{q}_1^\textrm{NI} + \sum_{i=2}^l\left(ri + rp_i^+(\overline{\mu}_i^\textrm{NI})-r p_i^-(\overline{\mu}_i^\textrm{NI})-c^\textrm{NI}\overline{\mu}_i^\textrm{NI}\right)\hat{q}_i^\textrm{NI} \\ 
        &\qquad + \sum_{i=l+1}^{I-1}\left(ri + rp_i^+(0) - rp_i^-(0)\right)\hat{q}_i^\textrm{NI} +(rI- rp_I^-(0))\hat{q}_{I}^\textrm{NI} \\
        = &r\sum_{i=1}^{l-1}\left(p_i^+(\overline{\mu}_i^\textrm{NI})\hat{q}_i^\textrm{NI} - p_{i+1}^-(\overline{\mu}_{i+1}^\textrm{NI})\hat{q}_{i+1}^\textrm{NI}\right) + rp_l^+(\overline{\mu}_l^\textrm{NI})\hat{q}_l^\textrm{NI} - rp_{l+1}^-(0)\hat{q}_{l+1}^\textrm{NI} \\
        &\qquad + r\sum_{i=l+1}^{I-1}\left(p_i^+(0)\hat{q}_i^\textrm{NI} - p_{i+1}^-(0)\hat{q}_{i+1}^\textrm{NI}\right) +r\sum_{i=1}^Ii\hat{q}_i^\textrm{NI} - c^\textrm{NI}\sum_{i=1}^l\overline{\mu}_i^\textrm{NI}\hat{q}_i^\textrm{NI} \\
        = &r\sum_{i=1}^Ii\hat{q}_i^\textrm{NI}- c^\textrm{NI}\sum_{i=1}^l\overline{\mu}_i^\textrm{NI}\hat{q}_i^\textrm{NI}
    \end{align*} 
    where the last equality applies \cref{app:thm:detailed-balance}. For the NG strategy, 
    \begin{align}
        \hat{u}^\textrm{NG} = &\sum_{x\in B_1}\left(r + rp_1^+(\overline{\mu}_1^\textrm{NG})-c^\textrm{NG}(\overline{\mu}_1^\textrm{NG}-x)\right)\pi_1^\textrm{NG}(x) + \sum_{x> \overline{\mu}_1^\textrm{NG}}(r+rp_1^+(x))\pi_1^\textrm{NG}(x)\nonumber \\
        &\qquad + \sum_{i=2}^{I-1}\sum_{x\in B_i}\left(ri + rp_i^+(\overline{\mu}_i^\textrm{NG})-rp_i^-(\overline{\mu}_i^\textrm{NG})-c^\textrm{NG}(\overline{\mu}_i^\textrm{NG}-x)\right)\pi_i^\textrm{NG}(x)\nonumber \\
        &\qquad + \sum_{i=2}^{I-1}\sum_{x> \overline{\mu}_i^\textrm{NG}}(ri + rp_i^+(x)-rp_i^-(x))\pi_i^\textrm{NG}(x)\nonumber \\ 
        &\qquad + \sum_{x\in B_I}\left(rI - rp_I^-(\overline{\mu}_I^\textrm{NG})-c^\textrm{NG}(\overline{\mu}_I^\textrm{NG}-x)\right)\pi_I^\textrm{NG}(x) + \sum_{x>\overline{\mu}_I^\textrm{NG}}(rI-rp_I^-(x))\pi_I^{\textrm{NG}}\nonumber \\
        &=rp_1^+(\overline{\mu}_1^\textrm{NG})\pi_1^\textrm{NG}(B_1) + \sum_{x>\overline{\mu}_1^\textrm{NG}}rp_1^+(x)\pi_1^\textrm{NG}(x) + r\sum_{i=2}^{I-1}\left(p_i^+(\overline{\mu}_i^\textrm{NG}) - p_i^-(\overline{\mu}_i^\textrm{NG})\right)\pi_i^\textrm{NG}(B_i) \nonumber \\
        &\qquad +r \sum_{i=2}^{I-1}\sum_{x>\overline{\mu}_i^\textrm{NG}}\left(p_i^+(x) - p_i^-(x)\right)\pi_i^\textrm{NG}(x) - rp_I^-(\overline{\mu}_I^\textrm{NG})\pi_I^\textrm{NG}(B_I)- r\sum_{x>\overline{\mu}_I^\textrm{NG}}p_I^-(x)\pi_I^{\textrm{NG}} \nonumber \\
        & \qquad + \sum_{i=1}^Iri(\pi_i^\textrm{NG}(B_i)+\sum_{x>\overline{\mu}_i^\textrm{NG}}\pi_i^\textrm{NG}(x)) - c^\textrm{NG}\sum_{i=1}^I\sum_{x\in B_i}(\overline{\mu}_i^\textrm{NG}-x)\pi_i^\textrm{NG}(x). \nonumber
    \end{align}
    According to the proof of \cref{app:thm:stationary}, in the support of the stationary distribution, agent attribute at level $i$ must satisfy $x\geq \gamma\min\{\overline{\mu}_i^\textrm{NG},\overline{\mu}_{i-1}^\textrm{NG}\}$. Then, by \cref{cond:level-cond}-(c), we have $x>\overline{\mu}_i^\textrm{NG}$.  This means 
    \begin{equation*}
        \pi_i^\textrm{NG}(B_i) = \sum_{x\leq \overline{\mu}_i^\textrm{NG}}\pi_i^\textrm{NG}(x) = \pi_i^\textrm{NG} - \sum_{x> \overline{\mu}_i^\textrm{NG}}\pi_i^\textrm{NG}(x) = \hat{q}_i^\textrm{NG} - \sum_{x> \overline{\mu}_i^\textrm{NG}}\pi_i^\textrm{NG}(x).
    \end{equation*}
    Substituting $\tilde{p}_i^d$ for $d\in\{+,0,-\}$ from \cref{app:thm:detailed-balance} into the above equation, we then obtain 
    \begin{align*}
        \hat{u}^\textrm{NG} &= r\sum_{i=1}^{I-1}(\tilde{p}_i^+\pi_i^\textrm{NG}-\tilde{p}_{i+1}^-\pi_{i+1}^\textrm{NG}) + r\sum_{i=1}^Ii\hat{q}_i^\textrm{NG} - c^\textrm{NG}\sum_{i=1}^I\sum_{x\in B_i}(\overline{\mu}_i^\textrm{NG}-x)\pi_i^\textrm{NG}(x) \\
        &=r\sum_{i=1}^Ii\hat{q}_i^\textrm{NG}- c^\textrm{NG}\sum_{i=1}^I\sum_{x\in B_i}(\overline{\mu}_i^\textrm{NG}-x)\pi_i^\textrm{NG}(x)
    \end{align*}
    and thus complete the proof.
\end{proof}

\subsubsection{Proof of Theorem~\ref{thm:long-term-levels}}
We present and show a detailed version of Theorem~\ref{thm:long-term-levels} as follows.
\begin{reptheorem}{thm:long-term-levels}
    Let $\sigma^\textrm{NI}:=\sigma(\alpha\overline{w}_2^\textrm{NI})$ and $\sigma^\textrm{NG}:=\sigma(\alpha\overline{w}_2^\textrm{NG})$. Then,
    for $\{\mu_i\}_i$ satisfying \cref{cond:level-cond}, $\hat{q}_i^\textrm{NI}= \Omega((\frac{1-\sigma^\textrm{NI}}{\sigma^\textrm{NI}})^{l-i})$ for $i\leq l$ and $\hat{q}_i^\textrm{NI}= o((\frac{1-\sigma^\textrm{NI}}{\sigma^\textrm{NI}})^{i-l})$ for $i> l$ where $l:=\max\{j\in[I]:\underline{\mu}_j^\textrm{NI}\leq0\}$; and $\hat{q}_i^\textrm{NG}= O((\frac{1-\sigma^\textrm{NG}}{\sigma^\textrm{NG}})^{I-i})$ for all $i\in[I]$.
\end{reptheorem}
\begin{proof}
For the NI strategy, by incorporating the classifier expression into \cref{app:thm:detailed-balance}, we see that $\hat{q}_i^\textrm{NI}= \left(\frac{1-\sigma^\textrm{NI}}{\sigma^\textrm{NI}}\right)^{l-i}\hat{q}_l^\textrm{NI}$ for $2\leq i\leq l$, $\hat{q}_1^\textrm{NI}= \frac{p_2^-(\overline{\mu}_i^\textrm{NI})}{p_1^+(\overline{\mu_i^\textrm{NI}})}\left(\frac{1-\sigma^\textrm{NI}}{\sigma^\textrm{NI}}\right)^{l-2}\hat{q}_l^\textrm{NI}$, and 
\begin{equation*}
    \hat{q}_i^\textrm{NI}=\frac{p_{l+1}^+(0)}{p_{l+1}^-(0)}\dots\frac{p_{i-1}^+(0)}{p_{i-1}^-(0)}\frac{p_l^+(\overline{\mu}_l^\textrm{NI})}{p_i^-(0)}\hat{q}_l^\textrm{NI}\leq \frac{p_l^+(\overline{\mu}_l^\textrm{NI})}{p_{l+1}^-(0)}\left(\frac{\sigma(-\alpha\mu_{l+1})}{1-\sigma(-\alpha\mu_{l+1})}\right)^{i-l-1}\hat{q}_l^\textrm{NI},\; \forall i\geq l+1. 
\end{equation*}
For the last case, we consider $l<I$ since otherwise the bound is vacuous. 
By the definition of $l$, we have $\mu_{l+1}+\underline{w}_i^\textrm{NI}>0\implies \mu_{l+1}>-\underline{w}_i^\textrm{NG}$. By \cref{app:lemma:bar-larger}, we further have $\mu_{l+1}>-\underline{w}_i^\textrm{NG}>\overline{w}_i^\textrm{NG}$.
This implies $\frac{\sigma(-\alpha\mu_{l+1})}{1-\sigma(-\alpha\mu_{l+1})}=\frac{1-\sigma(\alpha\mu_{l+1})}{\sigma(\alpha\mu_{l+1})}<\frac{1-\sigma^\textrm{NG}}{\sigma^\textrm{NG}}$.
Thus, $\hat{q}_i^\textrm{NI}= o((\frac{\sigma^\textrm{NI}}{1-\sigma^\textrm{NI}})^{i-l})$ for $i\geq l+1$.  

For the NG strategy, observe that $\tilde{p}_i^+\geq p_i^+(\overline{\mu}_i^{\textrm{NG}})$ and $\tilde{p}_i^-\leq p_i^-(\overline{\mu}_i^{\textrm{NG}})$. Substituting these into \cref{app:thm:detailed-balance} reveals that 
\begin{equation*}
p_i^+(\overline{\mu}_i^{\textrm{NG}})\hat{q}_i^\textrm{NG}\leq p_{i+1}^-(\overline{\mu}_{i+1}^{\textrm{NG}})\hat{q}_{i+1}^\textrm{NG}\;\; \textrm{for } i\leq I-1. 
\end{equation*}
Recursively expanding the right-hand side yields 
\begin{equation*}
    \hat{q}_i^\textrm{NG}\leq\left(\frac{1-\sigma^\textrm{NG}}{\sigma^\textrm{NG}}\right)^{I-i}\hat{q}_I^\textrm{NG},\;\forall 2\leq i\leq I,\;\textrm{ and }\hat{q}_1^\textrm{NG}\leq \frac{p_2^-(\overline{\mu}_2^\textrm{NG})}{p_1^+(\overline{\mu}_1^\textrm{NG})}\left(\frac{1-\sigma^\textrm{NG}}{\sigma^\textrm{NG}}\right)^{I-i}\hat{q}_I^\textrm{NG}
\end{equation*}
which completes the proof.
\end{proof}

\subsubsection{Proof of Theorem~\ref{thm:long-term-attribute}}
\begin{reptheorem}{thm:long-term-attribute}
    Using classifiers given by  \cref{cond:level-cond}, $\hat{x}^\textrm{NI}=0$ and $\hat{x}^\textrm{NG}\geq \gamma\min_{i\in[I]}\overline{\mu}_i^\textrm{NG}$.
\end{reptheorem}
\begin{proof}
    This is an immediate implication of \cref{app:thm:stationary}.
\end{proof}

\subsubsection{Proof of Theorem~\cref{thm:long-term-utility}}
\begin{reptheorem}{thm:long-term-utility}
    {When $\underline{w}_1^\textrm{NG}\leq 0$}, for $\{\mu_i\}_{i}$ generated by \cref{alg:find-level}, we have $\hat{u}^\textrm{NG}\geq\hat{u}^\textrm{NI}$ if 
    \begin{equation*}
        \Delta\mu\leq \frac{2\sigma^\textrm{NI}-1}{\sigma^\textrm{NI}}\left(\frac{2\sigma^\textrm{NI}-1}{(\sigma^\textrm{NI})^2}\overline{w}_l^\textrm{NI}-\underline{w}_2^\textrm{NI}-\frac{c^\textrm{NG}}{c^\textrm{NI}\sigma^\textrm{NI}}\Delta w^\textrm{NG}\right) -\frac{r}{c^\textrm{NI}\sigma^\textrm{NI}}\left(\frac{\sigma^\textrm{NG}(2\sigma^\textrm{NI}-1)}{\sigma^\textrm{NI}(2\sigma^\textrm{NG}-1)}-1\right),
    \end{equation*}
    where $\sigma^\textrm{NI}$ and $\sigma^\textrm{NG}$ are defined in \cref{thm:long-term-levels} and $\Delta w^\textrm{NG} =\max_{i\in[I]}(\overline{w}_i^\textrm{NG}-\underline{w}_i^\textrm{NG})$.
\end{reptheorem}
\begin{proof}
    From \cref{app:thm:long-term-utility}, we obtain 
    \begin{equation*}
        \begin{aligned}
            \hat{u}^\textrm{NI}&=  r \sum_{i=1}^Ii\hat{q}_i^\textrm{NI}-c^\textrm{NI}\sum_{i=1}^l\overline{\mu}_i^\textrm{NI}\hat{q}_i^\textrm{NI} \\
            &=  r \sum_{i=1}^Ii\hat{q}_i^\textrm{NI}-c^\textrm{NI}\sum_{i=1}^l\overline{w}_i^\textrm{NI}\hat{q}_i^\textrm{NI} - c^\textrm{NI}\sum_{i=1}^l\mu_i\hat{q}_i^\textrm{NI} \\
            &\leq  r \sum_{i=1}^Ii\hat{q}_i^\textrm{NI} - c^\textrm{NI}\overline{w}_l^\textrm{NG}\hat{q}_l^\textrm{NI} - c^\textrm{NI}\sum_{i=1}^l\mu_i\hat{q}_i^\textrm{NI} \\
            &\overset{(a)}{\leq}  r \sum_{i=1}^Ii\hat{q}_i^\textrm{NI} - c^\textrm{NI}\overline{w}_l^\textrm{NG}\hat{\pi}_l - c^\textrm{NI}\sum_{i=1}^l\mu_i\hat{q}_i^\textrm{NI}\\
            &\overset{(b)}{<}  r \sum_{i=1}^Ii\hat{q}_i^\textrm{NI} - c^\textrm{NI}\overline{w}_l^\textrm{NG}\frac{2\sigma^\textrm{NI}-1}{\sigma^\textrm{NI}} - c^\textrm{NI}\sum_{i=1}^l\mu_i\hat{q}_i^\textrm{NI} \\
        \end{aligned}
    \end{equation*}
    where in (a), we define the auxiliary distribution $\hat{\vec\pi}$ by $\hat{\pi}_i=(\frac{1-\sigma^\textrm{NI}}{\sigma^\textrm{NI}})^{|l-i|}\hat{\pi}_l$ for every $i\in[I]$ and observe $\hat{q}^\textrm{NI}_l\geq \hat{\pi}_l$ since $\hat{\vec\pi}$ allocates more probability mass on levels $i>l$; and in (b), we observe $\hat{\pi}_l> 1/(\sum_{i=-\infty}^\infty(\frac{1-\sigma^\textrm{NI}}{\sigma^\textrm{NI}})^{|l-i|})= 1/(2\cdot\frac{\sigma^\textrm{NI}}{2\sigma^\textrm{NI}-1}-1)=2\sigma^\textrm{NI}-1$ and simplify. When the levels are constructed following \cref{alg:find-level}, it is clear that $l=\lfloor\frac{-\underline{w}_2}{\Delta\mu}\rfloor+1$ and $\mu_i=(i-1)\Delta\mu$. Substituting this into the last term yields
    \begin{equation*}
        \begin{aligned}
            c^\textrm{NI}\Delta\mu\sum_{i=1}^l(i-1)\hat{q}_i^\textrm{NI} &\geq c^\textrm{NI}\Delta\mu\hat{\pi}_l\sum_{i=1}^l(i-1)\left(\frac{1-\sigma^\textrm{NI}}{\sigma^\textrm{NI}}\right)^{l-i} \\
            &\overset{(c)}{=}c^\textrm{NI}\Delta\mu\hat{\pi}_l 
            \frac{(l-1)(1-\frac{1-\sigma^\textrm{NI}}{\sigma^\textrm{NI}})+(\frac{1-\sigma^\textrm{NI}}{\sigma^\textrm{NI}})^l-\frac{1-\sigma^\textrm{NI}}{\sigma^\textrm{NI}}}{(1-\frac{1-\sigma^\textrm{NI}}{\sigma^\textrm{NI}})^2}
            \\
            &= c^\textrm{NI}\Delta\mu\hat{\pi}_l\left\{\frac{\sigma^\textrm{NI}}{2\sigma^\textrm{NI}-1}(l-1)+\left(\frac{\sigma^\textrm{NI}}{2\sigma^\textrm{NI}-1}\right)^2\left[\left(\frac{1-\sigma^\textrm{NI}}{\sigma^\textrm{NI}}\right)^l-\frac{1-\sigma^\textrm{NI}}{\sigma^\textrm{NI}}\right]\right\} \\
            &\geq c^\textrm{NI}\Delta\mu\left\{\sigma^\textrm{NI}(l-1)-\frac{\sigma^\textrm{NI}(1-\sigma^\textrm{NI})}{2\sigma^\textrm{NI}-1}\right\} \\
            &=c^\textrm{NI}\sigma^\textrm{NI}\left\{(l-1)\Delta\mu - \frac{1-\sigma^\textrm{NI}}{2\sigma^\textrm{NI}-1}\Delta \mu\right\} \\
            &\overset{(d)}{\geq} c^\textrm{NI}\sigma^\textrm{NI}\left\{-\underline{w}_2^\textrm{NI} - \left(1+\frac{1-\sigma^\textrm{NI}}{2\sigma^\textrm{NI}-1}\right)\Delta\mu\right\} \\
            &= c^\textrm{NI}\sigma^\textrm{NI}\left\{-\underline{w}_2^\textrm{NI} - \frac{\sigma^\textrm{NI}}{2\sigma^\textrm{NI}-1}\Delta\mu\right\}
        \end{aligned}
    \end{equation*}
    where (c) applies the finite-sum equation of arithmetic-geometric sequences and (d) uses the fact that $x\cdot\lfloor\frac{m}{x}\rfloor\geq m-x$ for $m>0$. In a nutshell, the long-term utility for $\hat{u}^\textrm{NI}$ can be bounded above by
    \begin{equation}\label{app:eq:simplified-ni}
        \hat{u}^\textrm{NI}\leq   r \sum_{i=1}^Ii\hat{q}_i^\textrm{NI} - c^\textrm{NI}\sigma^\textrm{NI}\left\{\frac{2\sigma^\textrm{NI}-1}{(\sigma^\textrm{NI})^2}\overline{w}_l^\textrm{NG}-\underline{w}_2^\textrm{NI}-\frac{\sigma^\textrm{NI}}{2\sigma^\textrm{NI}-1}\Delta\mu\right\}.
    \end{equation}
        
    For the long-term NG strategy, observe
    \begin{align}\label{app:eq:simplified-ng}
        \hat{u}^\textrm{NG} &\geq   r \sum_{i=1}^Ii\hat{q}_i^\textrm{NG} - c^\textrm{NG}\sum_{i=1}^I(\overline{\mu}_i^\textrm{NG}-\underline{\mu}_i^\textrm{NG})\sum_{\underline{\mu}_i^\textrm{NG}x\leq \overline{\mu_i}^\textrm{NG}}\pi_i^\textrm{NG}(x) && \nonumber \\
        &\geq   r \sum_{i=1}^Ii\hat{q}_i^\textrm{NG} - c^\textrm{NG}\sum_{i=1}^I(\overline{\mu}_i^\textrm{NG}-\underline{\mu}_i^\textrm{NG})\hat{q}_i^\textrm{NG} \nonumber \\
        &\geq   r \sum_{i=1}^Ii\hat{q}_i^\textrm{NG} - c^\textrm{NG}\Delta w^\textrm{NG}.
    \end{align}

    The last step requires bounding the benefit (first) term of both $\hat{u}^\textrm{NI}$ and $\hat{u}^\textrm{NG}$. We define the auxiliary level distributions $\tilde{\vec{q}}^\textrm{NI}$ and $\tilde{\vec{q}}^\textrm{NG}$ that satisfy $\tilde{q}_i^\textrm{NI}=(\frac{1-\sigma^\textrm{NI}}{\sigma^\textrm{NI}})^{I-i}\tilde{q}_I^\textrm{NI}$ and $\tilde{q}_i^\textrm{NG}=(\frac{1-\sigma^\textrm{NG}}{\sigma^\textrm{NG}})^{I-i}\tilde{q}_I^\textrm{NG}$ respectively. By \cref{thm:long-term-levels}, it is obvious that $\sum_{i=1}^Ii\tilde{q}_i^\textrm{NI}\geq \sum_{i=1}^Ii\hat{q}_i^\textrm{NI}$ as the former concentrates more probability mass on higher levels. On the contrary, $\sum_{i=1}^Ii\tilde{q}_i^\textrm{NG}\leq \sum_{i=1}^Ii\hat{q}_i^\textrm{NG}$ because while both distributions concentrate around level $I$, $\tilde{\vec{q}}^\textrm{NG}$ distributes less probability mass on levels less than $I$ (see \cref{app:thm:ergodic} and the proof of \cref{thm:long-term-levels}). Since $\sigma^\textrm{NI}>\sigma^\textrm{NG}$ under assumption of cheaper cheating (see \cref{app:lemma:cheating-more-effort}), we have $\frac{1-\sigma^\textrm{NI}}{\sigma^\textrm{NI}}<\frac{1-\sigma^\textrm{NG}}{\sigma^\textrm{NG}}$. Then, according to \cref{app:lemma:benefit-gain}, 
    \begin{equation}\label{app:eq:benefit}
        \sum_{i=1}^Ii\tilde{q}_i^\textrm{NI}\leq \sum_{i=1}^Ii\tilde{q}_i^\textrm{NG} + \frac{\sigma^\textrm{NG}}{2\sigma^\textrm{NG}-1}-\frac{\sigma^\textrm{NI}}{2\sigma^\textrm{NI}-1}.
    \end{equation}
    
    Therefore, combining \cref{app:eq:simplified-ni,app:eq:simplified-ng,app:eq:benefit}, we obtain a sufficient condition under which $\hat{u}^\textrm{NG}\geq \hat{u}^\textrm{NI}$:
    \begin{equation*}
        \Delta\mu\leq \frac{2\sigma^\textrm{NI}-1}{\sigma^\textrm{NI}}\left(\frac{2\sigma^\textrm{NI}-1}{(\sigma^\textrm{NI})^2}\overline{w}_l^\textrm{NI}-\underline{w}_2^\textrm{NI}-\frac{c^\textrm{NG}}{c^\textrm{NI}\sigma^\textrm{NI}}\Delta w^\textrm{NG}\right) -\frac{  r }{c^\textrm{NI}\sigma^\textrm{NI}}\left(\frac{\sigma^\textrm{NG}(2\sigma^\textrm{NI}-1)}{\sigma^\textrm{NI}(2\sigma^\textrm{NG}-1)}-1\right)
    \end{equation*}
    and thereby complete the proof.
\end{proof}

\subsection{Proof of the algorithm}\label{app:alg-proof}
\begin{theorem}
    When $\underline{w}_1^\textrm{NG}\leq 0$, \cref{alg:find-level} returns the longest sequence $\{\mu_i\}_i$ that satisfies \cref{cond:level-cond} not exceeding $M$.
\end{theorem}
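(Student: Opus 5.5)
The plan is to translate each item of \cref{cond:level-cond} into an explicit inequality on the thresholds $\mu_i$, using evenly spaced thresholds $\mu_i=(i-1)\Delta\mu$. We then check that the loop of \cref{alg:find-level} enforces exactly these inequalities and stops at the first violation. One structural fact drives the whole translation. The function $G(i,\cdot)$ from \cref{eq:G} has only three shapes: one for $i=1$, one common shape for all interior levels $2\le i\le I-1$, and one for $i=I$. Hence $\overline{w}_i^\textrm{NG}=\overline{w}_2^\textrm{NG}$ and $\underline{w}_i^\textrm{NG}=\underline{w}_2^\textrm{NG}$ for every interior $i$. This is why the algorithm only needs the six inputs $\overline{w}_{1},\overline{w}_2,\overline{w}_I,\underline{w}_1,\underline{w}_2,\underline{w}_I$, and why the index map $k_x$ appears.

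First we dispose of the easy conditions. Condition (a), $\underline{\mu}_1^\textrm{NG}=\mu_1+\underline{w}_1^\textrm{NG}\le 0$, holds because $\mu_1=0$ and $\underline{w}_1^\textrm{NG}\le0$ by hypothesis. For interior pairs, condition (b) reduces to $\Delta\mu>0$. For interior pairs, condition (c) reads $\gamma(\mu_i+\overline{w}_2)\ge\max\{\mu_{i+1}+\underline{w}_2,\ \mu_i+\underline{w}_2\}$. Since $\Delta\mu>0$, this is equivalent to $\Delta\mu\le(\gamma-1)\mu_i+\gamma\overline{w}_2-\underline{w}_2$, which is exactly the while-loop guard.

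The remaining conditions concern the two boundary transitions, and each matches one guard in the algorithm. The transition from level 1 to level 2 requires $\mu_1+\overline{w}_1<\mu_2+\overline{w}_2$ and $\gamma\overline{w}_1\ge\Delta\mu+\underline{w}_2$. These match the $\textbf{if}$ guard, $\overline{w}_1-\overline{w}_2<\Delta\mu\le\gamma\overline{w}_1-\underline{w}_2$. The term $\mu_1+\underline{w}_1$ inside the max is handled by (a) together with $\gamma\overline{\mu}_1\ge0$. The transition into the terminal level $j$ requires $\overline{\mu}_{j-1}<\mu_j+\overline{w}_I$ and $\gamma\overline{\mu}_{j-1}\ge\mu_j+\underline{w}_I$, where $\overline{\mu}_{j-1}=\mu_{j-1}+\overline{w}_{k_{j-1}}$. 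These are the two sides of the for-loop test. We also need $\gamma\overline{\mu}_{j-1}\ge\underline{\mu}_{j-1}$ and the terminal condition $\gamma\overline{\mu}_I\ge\underline{\mu}_I$. The first is inherited from the while-loop or the if-guard. For the second, we use \cref{app:lemma:compare-wbar} ($\overline{w}_I\le\overline{w}_i$) together with the monotone-decreasing slack $(\gamma-1)\mu_i+\dots$. We would like to show that the second is implied by the for-loop test. If that fails, we fall back to checking it directly, since a guard on it may be implicit.

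Correctness then follows: the returned sequence satisfies every inequality. For maximality, observe that the interior constraint slack $(\gamma-1)\mu_i+\gamma\overline{w}_2-\underline{w}_2$ is strictly decreasing in $i$ because $\gamma<1$. So once the while-guard fails, no longer evenly spaced sequence can have valid interior levels beyond that index. The cap $\mu_i\le M$ is likewise monotone. The for-loop scans candidate terminal indices downward from the largest admissible one and returns the first feasible one, so no feasible longer sequence is skipped. The null return is correct because every $j\ge2$ has been tried. The main obstacle is the terminal-level bookkeeping. The terminal level $j$ uses $\overline{w}_I,\underline{w}_I$ rather than $\overline{w}_2,\underline{w}_2$, so truncating at a smaller $j$ changes which constraints apply. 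We must therefore argue that feasibility of the truncated prefix $\{\mu_1,\dots,\mu_{j-1}\}$ as non-terminal levels is preserved, and that the extra unconditional increment after the loop only adds a candidate terminal level rather than an interior one. We would handle this by explicit case analysis on whether the if-branch was entered, that is, whether $k_{j-1}=1$ or $2$.
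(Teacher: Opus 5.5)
Your skeleton is the paper's: read (a)--(c) off the guards, then get maximality because a longer valid sequence passes every guard and is caught by the downward scan. The genuine gap is the terminal condition $\gamma\overline{\mu}_I\ge\underline{\mu}_I$. You leave it open, and the route you suggest cannot close it. \cref{app:lemma:compare-wbar} gives $\overline{w}_I\le\overline{w}_{k}$, which is an \emph{upper} bound on $\overline{\mu}_I$, whereas you need a lower bound on $\gamma\overline{\mu}_I$. The slack $(\gamma-1)\mu_I+\gamma\overline{w}_I-\underline{w}_I$ decreases in $\mu_I$, so monotonicity makes the top level the hardest case, not an inherited one. ``Checking it directly'' is not available either, since the algorithm has no separate guard for this inequality. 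The missing step is to use both halves of the line-10 test together. The right half says $\underline{\mu}_I\le\gamma\overline{\mu}_{I-1}$, and the left half says exactly $\overline{\mu}_{I-1}<\overline{\mu}_I$. Hence $\underline{\mu}_I\le\gamma\overline{\mu}_{I-1}<\gamma\overline{\mu}_I$. This is how the paper closes it. The same chaining (previous guard plus strict increase of $\overline{\mu}$) is also what actually justifies your ``inherited'' claim $\gamma\overline{\mu}_{j-1}\ge\underline{\mu}_{j-1}$.

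Your planned case analysis also conflates two things: whether the if-branch was entered is not the same as whether $k_{j-1}=1$. Suppose the branch is skipped. Then $i$ stays $2$, line 7 still creates $\mu_3=2\Delta\mu$, and the for-loop first tests $j=3$ with $k_2=2$, treating level $2$ as interior. But the only certificate for $\overline{\mu}_1<\overline{\mu}_2$ and $\gamma\overline{\mu}_1\ge\underline{\mu}_2$ is the if-guard that just failed. So in that branch prefix feasibility is inherited from nothing. You would need either to show that passing the $j=3$ test forces the if-guard, or to read the algorithm as trying only $j=2$ there. The first option requires comparing $\underline{w}_2$ with $\underline{w}_I$ and $\overline{w}_1$ with $\overline{w}_2$, which none of the paper's lemmas provide. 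Likewise, the increment in line 7 is never checked against $M$. If the while loop stopped on its $M$ test, the returned terminal threshold can exceed $M$. The paper's proof passes over both points: it invokes line 2 whenever $I\ge 3$ and asserts $\mu_i\le M$ ``clearly''. So state them as assumptions rather than claim to derive them. The maximality half is fine. Your decreasing-slack observation is precisely what licenses the paper's normalization $\mu'_1=0$: shifting a valid sequence down keeps (a), leaves (b) unchanged, and enlarges every slack in (c) by $(1-\gamma)$ times the shift.
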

\begin{proof}
    \cref{alg:find-level} consists of a forward and a backward pass. The forward pass tries to identify the maximum number of intermediate levels, while the backward pass finds the first level from the back that satisfies the requirements of being the last level. Let $I$ be the length of the sequence $\{\mu_i\}_i$ returned by \cref{alg:find-level}. Then, clearly $\mu_i\leq M$, $\forall i$, and $I<\frac{M}{\Delta\mu}<\infty$.
    Notice that \cref{cond:level-cond}-(a) is readily satisfied because $\underline{\mu}_1^\textrm{NG}=\underline{w}_1^\textrm{NG}\leq0$. 

    If $I\geq 3$, then from line 2 of \cref{alg:find-level}, we have $\overline\mu_2^\textrm{NG}=\Delta\mu+\overline{w}_2^\textrm{NG}>\overline{w}_1^\textrm{NG}=\overline{\mu}_1^\textrm{NG}$. For each $3\leq i\leq I-1$, we have $\overline{\mu}_{i}^\textrm{NG}=\overline{\mu}_{i-1}^\textrm{NG}+\Delta\mu>\overline\mu_{i-1}^\textrm{NG}$ since $\Delta\mu>0lt$. For the last level, line 10 implies $\overline{\mu}_I^\textrm{NG}=\mu_I+\overline{w}_I^\textrm{NG}>\mu_{I-1}+\overline{w}_2^\textrm{NG}=\overline{\mu}_{I-1}^\textrm{NG}$. Thus, \cref{cond:level-cond}-(b) is satisfied. Similarly, observe $\underline\mu_{i}^\textrm{NG}\leq \gamma\overline{\mu}_{i-1}^\textrm{NG}$ for $2\leq i\leq I-1$ by line 2 and 3. Applying \cref{cond:level-cond}-(b) yields $\gamma\overline{\mu}_i^\textrm{NG}\geq\min\{\underline{\mu}_{i+1}^\textrm{NG},\underline{\mu}_i^\textrm{NG}\}$ for $1\leq i \leq I-2$. The second half of line 10 implies $\underline{\mu}_I^\textrm{NG}\leq \gamma\overline{\mu}_{I-1}^\textrm{NG}$, which further implies, by \cref{cond:level-cond}-(b), that $\underline{\mu}_{I-1}^\textrm{NG}<\underline{\mu}_I^\textrm{NG}\leq \gamma\overline{\mu}_{I-1}^\textrm{NG}<\gamma\overline{\mu}_I^\textrm{NG}$. Thus, \cref{cond:level-cond}-(c) is also satisfied.
    If $I=2$, $\mu_2$ is the terminal level. It is easy to check by line 10 that \cref{cond:level-cond} is satisfied.

    To see why it is the longest, suppose there exists another sequence $\{\mu'_j\}_{j=1}^J$ that also satisfies \cref{cond:level-cond} but with $J>I$. We let $\mu'_1=0$ without loss of generality, since we can always set $\mu_j'\gets\mu_j'-\mu_1$ and obtain a new valid sequence with the same length. Notice that in this case $\mu_i=\mu_i'$ for $i\leq I$. However, $\{\mu'_j\}_{j=I+1}^{J-1}$ are valid intermediate levels, and thus will be included in the forward pass (lines 2 to 7). Lines 8 and 9 ensure $\mu_J'$ will also be included before the backward pass (lines 11 to 15). Since $\{\mu_j\}_{j=1}^J$ satisfies \cref{cond:level-cond}, the backward pass will break at $i=J$ and return exactly $\{\mu'_j\}_{j=1}^J$. Therefore, there is no valid sequence longer than the output of \cref{alg:find-level}.
\end{proof}

\section{Simulation Setup and Additional Results}\label{app:experiment}

For the following simulations, the following parameters were used unless stated otherwise: $\alpha = 4$, $\beta = 0.604$, $\gamma = 0.9$, $c^{NI} = 0.75$, $c^{NG} = 0.80$, and $T = 10,000$ time steps. Unless stated otherwise, we assume abstention function $h(\sigma) = \tilde{\beta}(4\sigma(1-\sigma))^t$.

\subsection{Compute resources}
\label{app:resources}
All simulations were executed on clustered servers. 
Each job requested 64 CPU cores and completed in approximately 3 hours. This covered 30 sample paths for each experimental condition across a range of parameter settings. The total compute time of all experiments was approximately 80 hours. 

\subsection{Simulation error characterization}
\label{app:error}
For each experiment, we ran 30 independent simulation trials. For each trial, we computed the time-averaged values of agent metrics over the final 2,000 steps. We then computed the mean and standard deviation of these 30 trial averages. Error bars/regions in the figures are plotted as $\pm2$ standard deviations from the mean. We do not assume normality of the underlying distribution and do not report formal confidence intervals.

\subsection{On the impact of threshold increment}\label{app:dmu}

We fix the highest threshold level $M=3$ and vary the threshold increment $\Delta\mu$. We plot various properties of interest as functions of the level counts $I=M/\Delta\mu$ for different abstention functions.

\begin{figure}[htbp]
    \centering

    \begin{subfigure}[t]{0.32\textwidth}
        \centering
        \includegraphics[width=\textwidth]{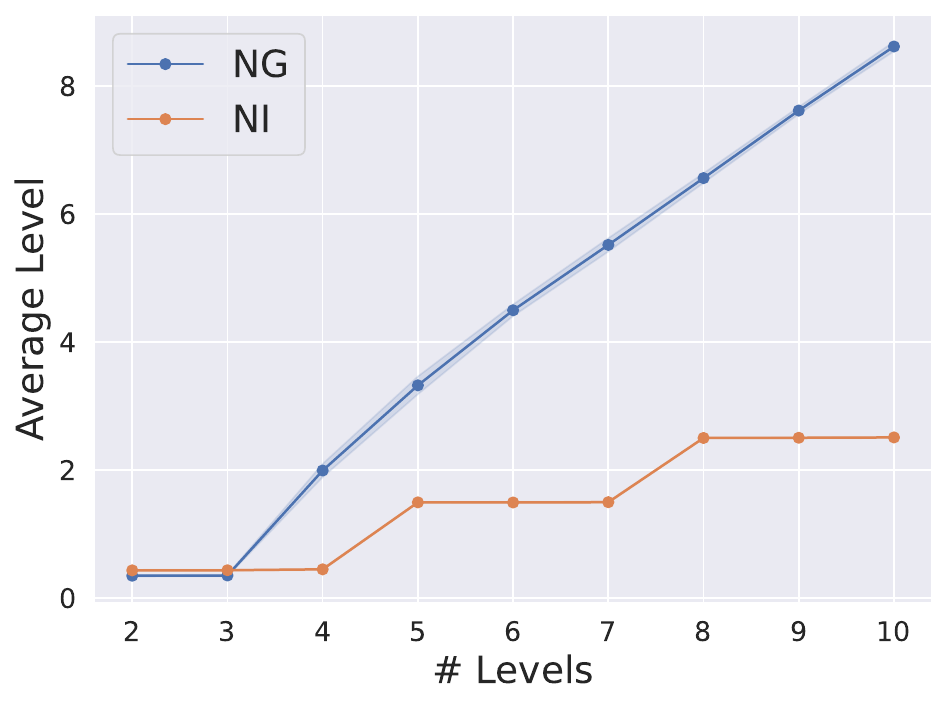}
        \caption{Normalized entropy $h(\sigma)$}
        \label{fig:ex1_level}
    \end{subfigure}%
    \hfill
    \begin{subfigure}[t]{0.32\textwidth}
        \centering
        \includegraphics[width=\textwidth]{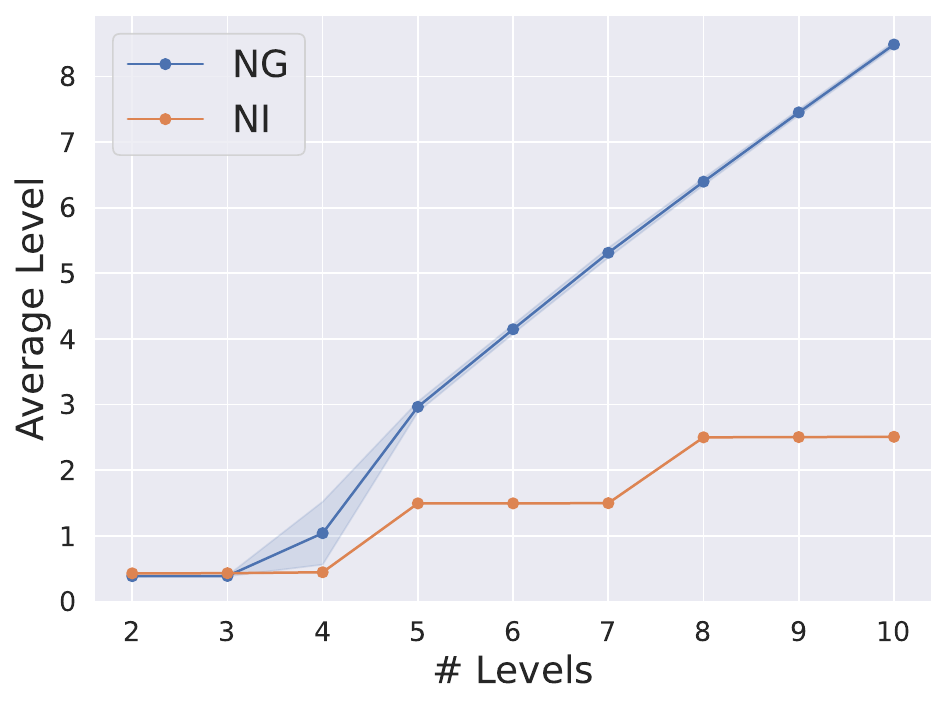}
        \caption{Polynomial $h(\sigma)$ with $t=25.5$}
        \label{fig:ex3_level}
    \end{subfigure}%
    \hfill
    \begin{subfigure}[t]{0.32\textwidth}
        \centering
        \includegraphics[width=\textwidth]{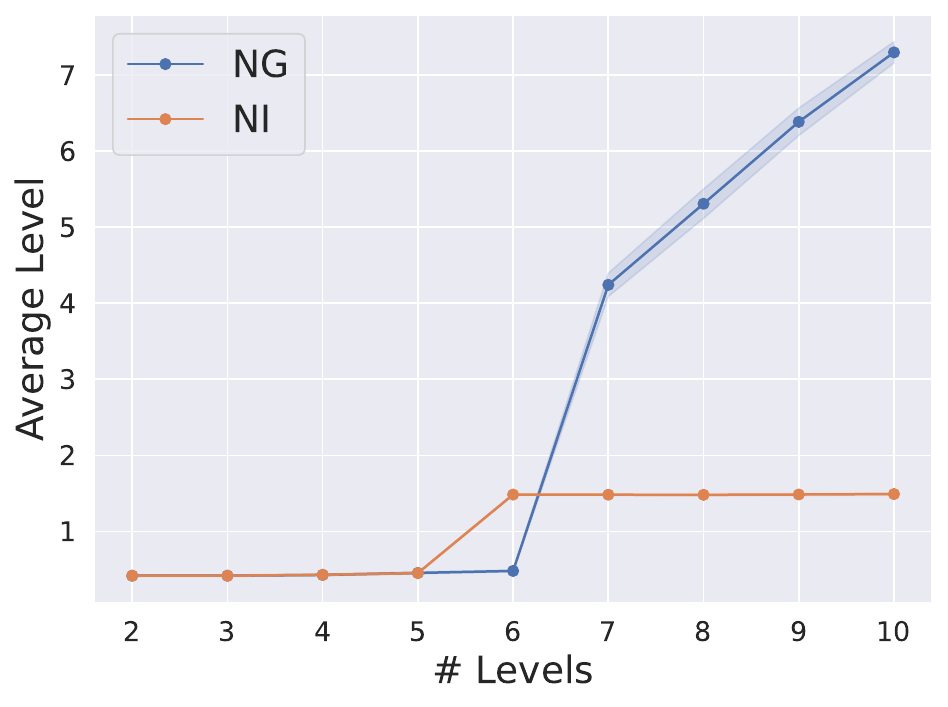}
        \caption{Absolute value $h(\sigma)$}
        \label{fig:ex2_level}
    \end{subfigure}

    \caption{\textbf{Average level} across different level counts with fixed $M=3$ under No Gaming (NG) and No Improvement (NI) strategies $\pm$2 STD.}
    \label{fig:level_summary_panel}
\end{figure}

\cref{fig:level_summary_panel} compares the average level attained by agents under the three different abstention functions: normalized entropy, absolute value, and parabolic. In all cases, we observe a trend that the NG agent reaches significantly higher levels than the NI agent, when there are a certain number of levels between $\mu_0$ and $\mu_I$. For the normalized entropy and parabolic abstention functions, when there are at least 4 levels between $\mu_0$ and $\mu_I$, the NG agent reaches higher levels than the NI agent. For the absolute value abstention function, the NG agent reaches higher average level when there are at least 7 levels between $\mu_0$ and $\mu_I$.

\begin{figure}[htbp]
    \centering
    \begin{subfigure}[t]{0.32\textwidth}
        \centering
        \includegraphics[width=\textwidth]{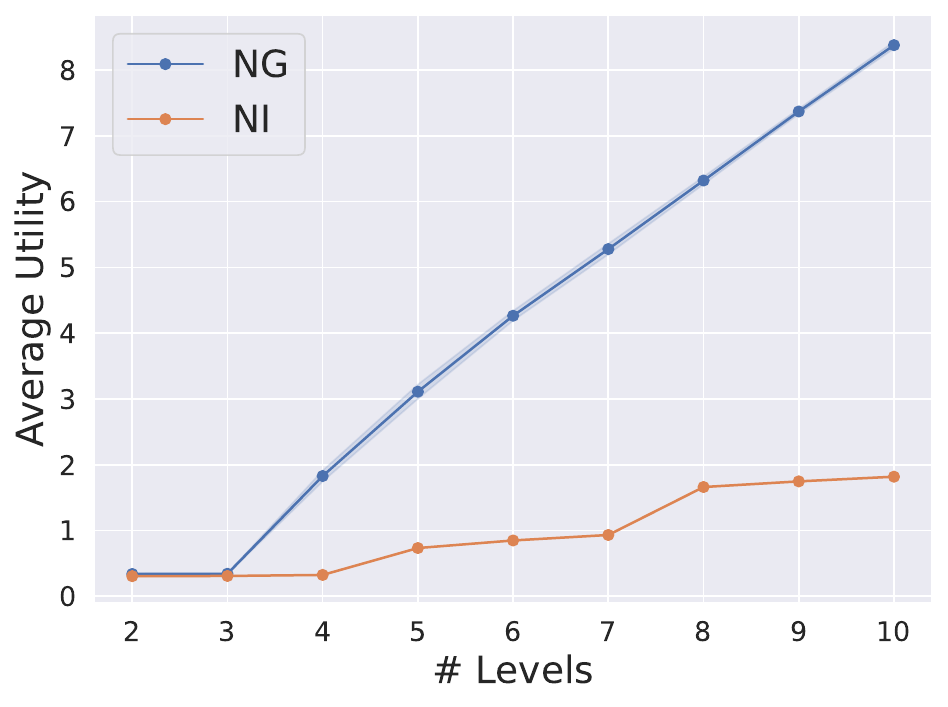}
        \caption{Normalized entropy $h(\sigma)$}
        \label{fig:ex1_util}
    \end{subfigure}%
    \hfill
    \begin{subfigure}[t]{0.32\textwidth}
        \centering
        \includegraphics[width=\textwidth]{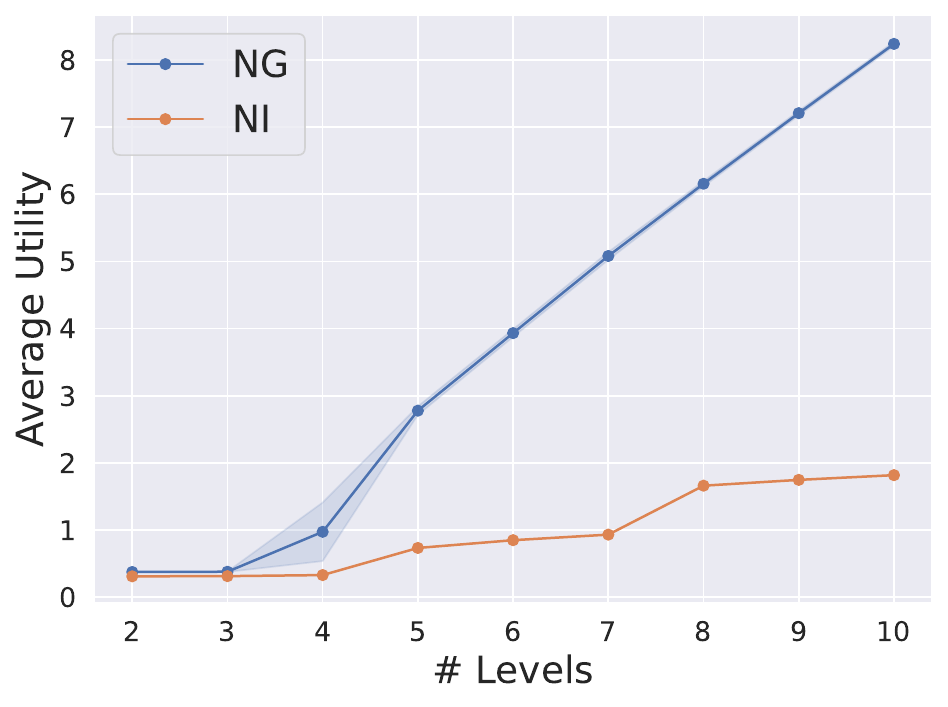}
        \caption{Polynomial $h(\sigma)$ with $t=25.5$}
        \label{fig:ex3_util}
    \end{subfigure}%
    \hfill
    \begin{subfigure}[t]{0.32\textwidth}
        \centering
        \includegraphics[width=\textwidth]{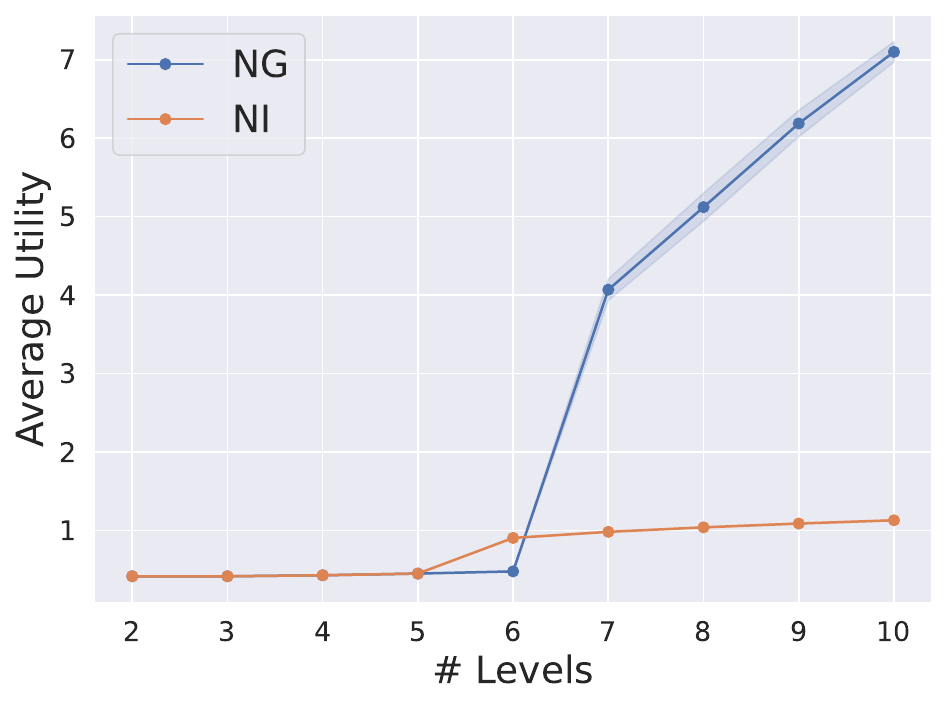}
        \caption{Absolute value $h(\sigma)$}
        \label{fig:ex2_util}
    \end{subfigure}
    \caption{\textbf{Average utility} across different level counts with fixed $M=3$ under No Gaming (NG) and No Improvement (NI) strategies $\pm$2 STD.}
    \label{fig:util_summary_panel}
\end{figure}

\cref{fig:util_summary_panel} compares the average utility attained by agents with the same three abstention functions. Again we see the same trend that the NG agent reaches significantly higher utility than the NI agent, when there are a certain number of levels between $\mu_0$ and $\mu_I$. For the normalized entropy and polynomial abstention functions, 3 levels are required for the NG agent to achieve higher average utility than the NI agent. For the absolute value abstention function, 7 levels are required to see the same pattern. 

\begin{figure}[htbp]
    \centering

    \begin{subfigure}[t]{0.32\textwidth}
        \centering
        \includegraphics[width=\textwidth]{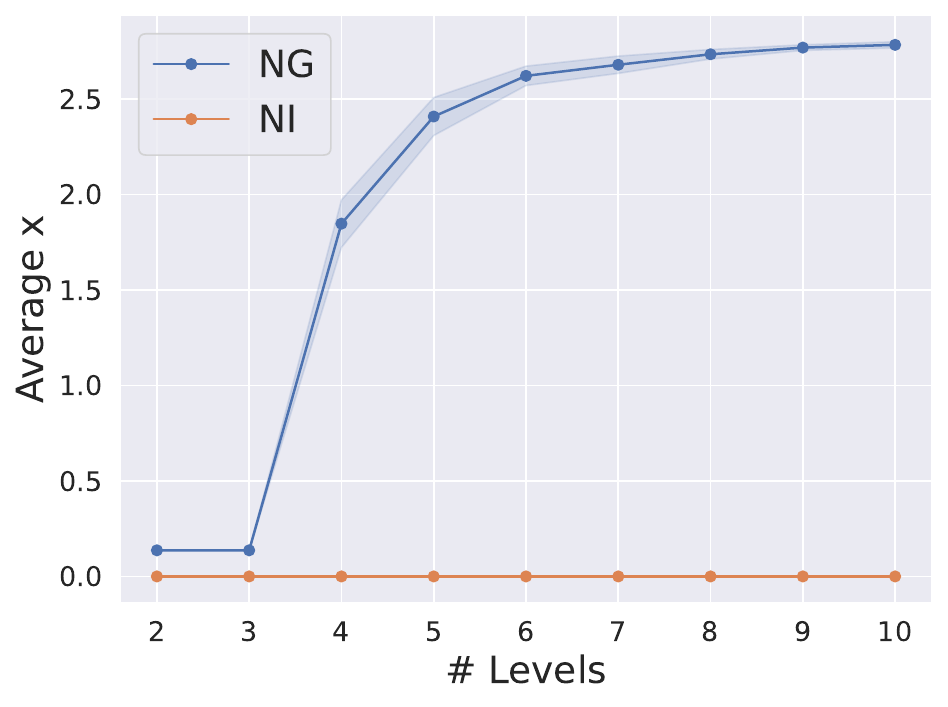}
        \caption{Normalized entropy $h(\sigma)$}
        \label{fig:ex1_qual}
    \end{subfigure}%
    \hfill
    \begin{subfigure}[t]{0.32\textwidth}
        \centering
        \includegraphics[width=\textwidth]{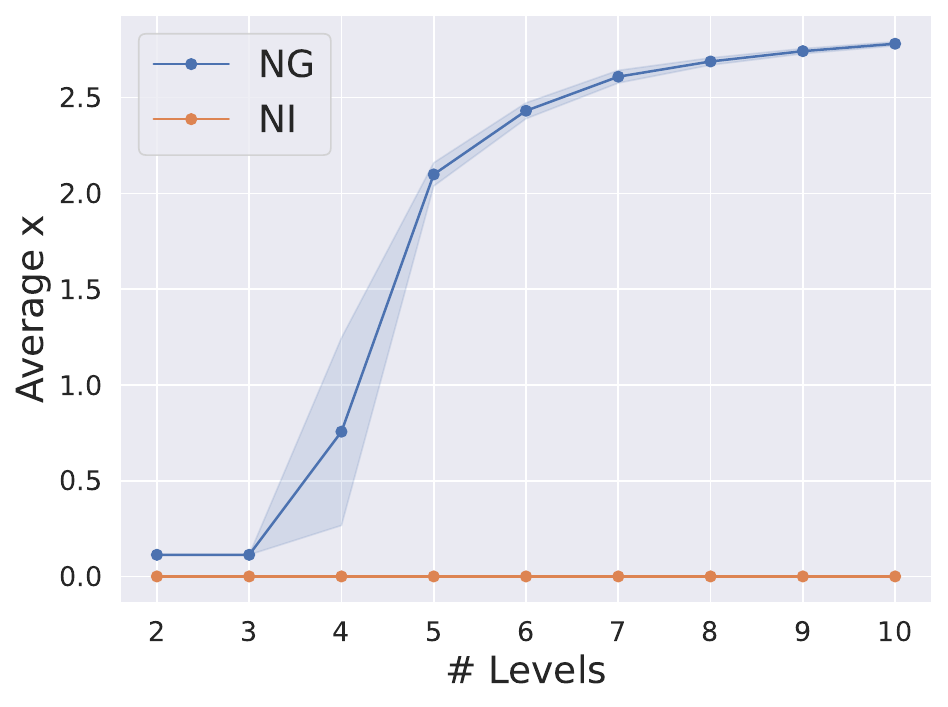}
        \caption{Polynomial $h(\sigma)$  with $t=25.5$.}
        \label{fig:ex3_qual}
    \end{subfigure}%
    \hfill
    \begin{subfigure}[t]{0.32\textwidth}
        \centering
        \includegraphics[width=\textwidth]{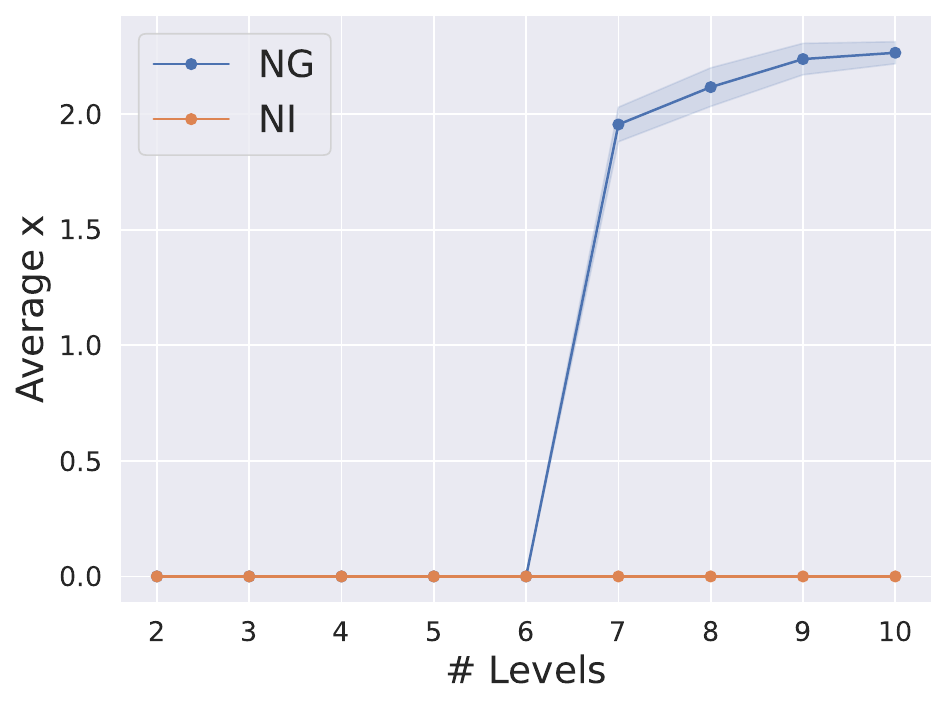}
        \caption{Absolute value $h(\sigma)$}
        \label{fig:ex2_qual}
    \end{subfigure}

    \caption{\textbf{Average qualification} across different level counts with fixed $M=3$ under No Gaming (NG) and No Improvement (NI) strategies $\pm$2 STD.}
    \label{fig:qual_summary_panel}
\end{figure}

\cref{fig:qual_summary_panel} compares the average qualification (true feature x) attained by agents with the same three abstention functions. Here we see that the NI agent will always have a true feature $x = 0$, while the true feature of the NG agent increases. 

\subsection{On the truly optimal agent strategy}\label{app:exp10}
In this experiment, we approximate truly optimal and non-restricted (where both gaming and improvement can be freely selected) policies instead of myopic and restricted (either no-improvement or no-gaming) policies. For ease of simulation, we aim to solve for the agent a discounted-reward MDP over a large finite horizon. The reward at each step is the (non-random) instantaneous utility. 
We first discretize the attribute and action spaces by $\mathcal{X}=\{0,dx,\cdots,n\cdot dx\}$ with $dx=0.2$ and $n=100$, and $\mathcal{A}=\{0,da,\cdots,n\cdot da\}^2$ with $da=0.2$ and $n=15$. Particularly, we consider one action dimension for either improvement or gaming without loss of generality. The parameters $n$, $dx$, and $da$ are chosen primarily to ensure reasonable computational complexity (the size of the state-action space is approximately $n^3$), while keeping $dx$ and $da$ as small as possible. 

Then, we implement an on-policy SARSA algorithm with the UCB exploration and train, for each parameter setup, over 1999 episodes, each with $T=2500$ steps. However, since SARSA only approximates the optimal policy, we repeat the training process (including 1999 episodes and 2500 steps) 10 times with different random seeds and take the resulting policy with the largest average reward over the last 100 episodes (about 5\% of total episodes, for more accurate estimates) as the ``truly optimal policy''. We follow this procedure to find truly optimal policies under different values of $I$ but fixed $\mu_1=0$ and $\mu_I=M=10$. \cref{fig:rl} shows the average difference in improvement and gaming efforts, and the average decision maker's utility for each truly optimal policy evaluated in a new episode with $T=2500$ steps, with averages taken over the $T=2500$ steps. 
It can be seen that, under truly optimal policies, the average improvement effort is higher than the average gaming effort at larger numbers of levels (i.e., smaller threshold gaps) and that the decision maker's utility generally decreases with the number of levels, as smaller threshold gaps make the classifier less confident in the next level. 

\begin{figure}[!htbp]
    \centering
    \includegraphics[width=0.5\linewidth]{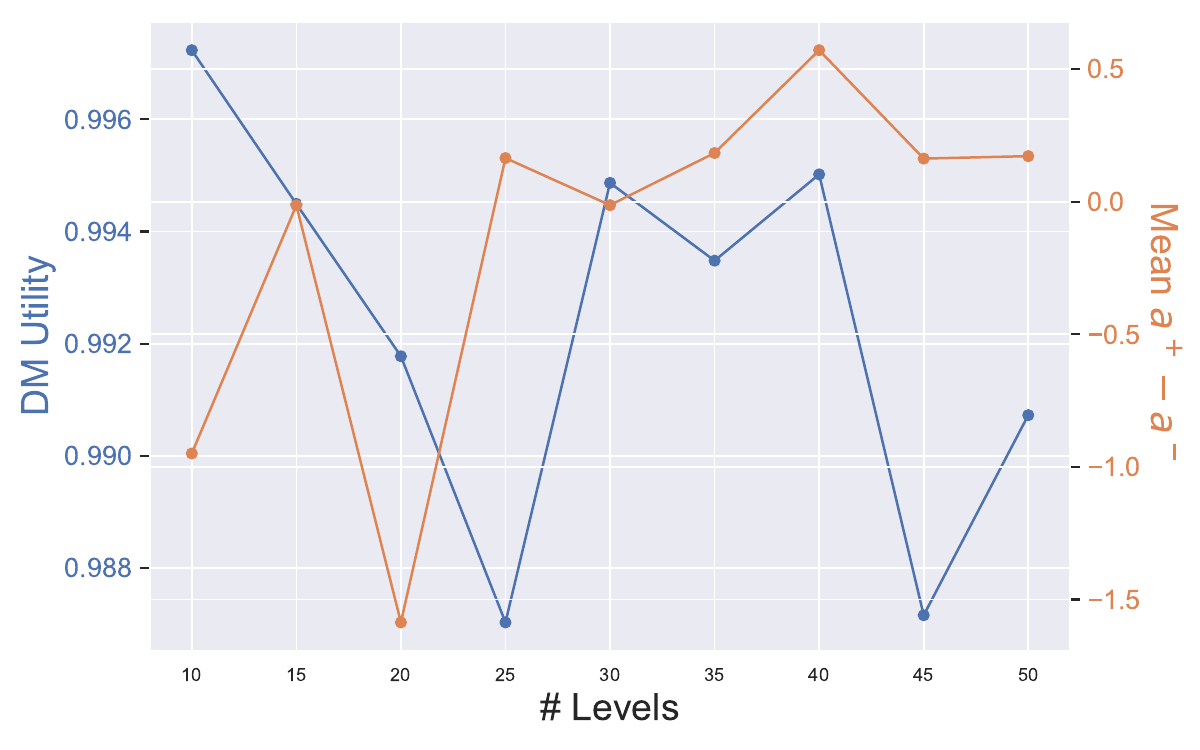}
    \caption{Truly optimal policies across different numbers of levels.}
    \label{fig:rl}
\end{figure}

\subsection{On the stationary level distribution}

\begin{figure}[htbp]
    \centering
    \begin{subfigure}[t]{0.48\textwidth}
        \centering
        \includegraphics[width=.8\textwidth]{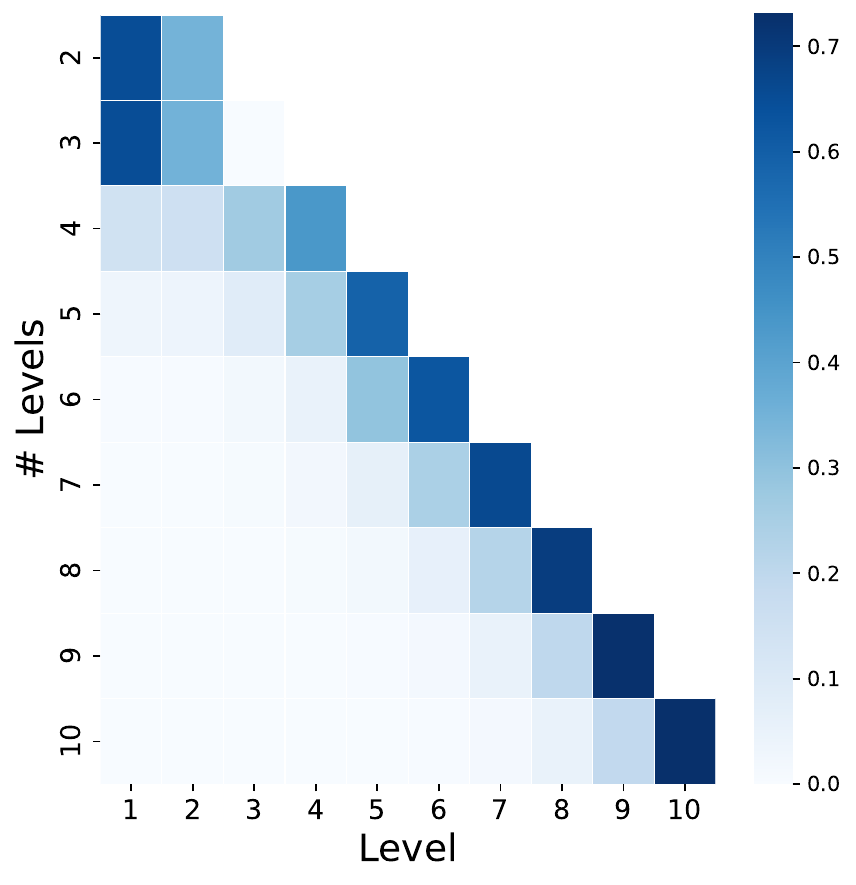}
        \caption{NG agents}
        \label{fig:ex1_statdistn_ng}
    \end{subfigure}%
    \hfill
    \begin{subfigure}[t]{0.48\textwidth}
        \centering
        \includegraphics[width=.8\textwidth]{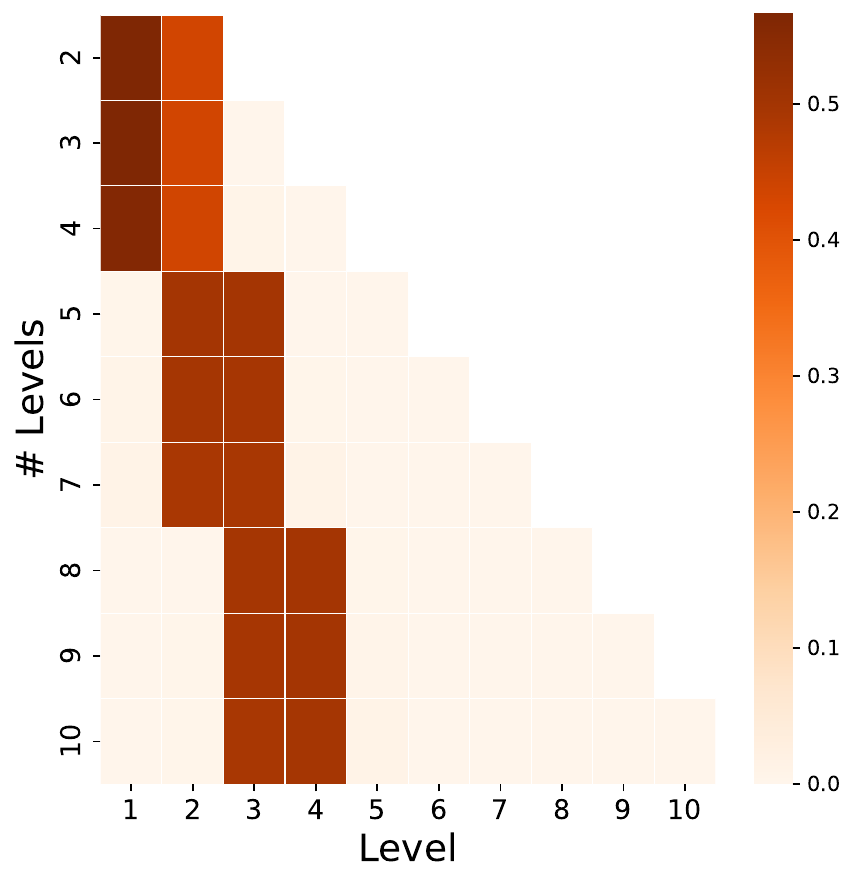}
        \caption{NI agents}
        \label{fig:ex1_statdistn_ni}
    \end{subfigure}
    \caption{Stationary distributions over levels under \textbf{normalized entropy abstention $h(\sigma)$}.}
    \label{fig:ex1_statdistn}
\end{figure}

\begin{figure}[htbp]
    \centering
    \begin{subfigure}[t]{0.48\textwidth}
        \centering
        \includegraphics[width=.8\textwidth]{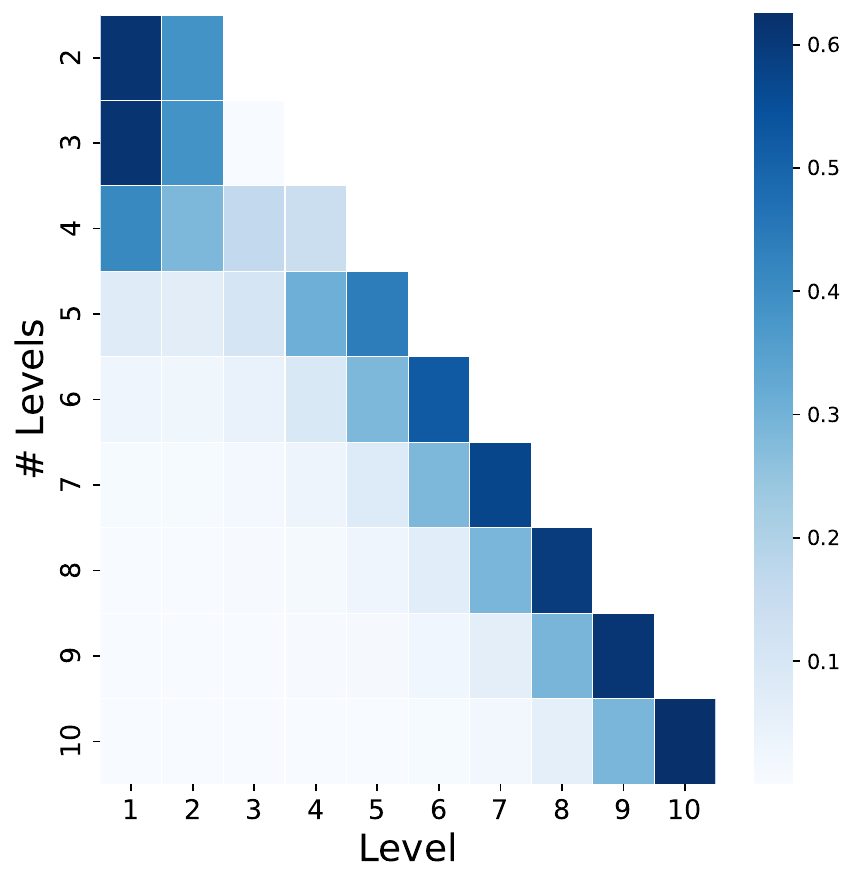}
        \caption{NG agents}
        \label{fig:ex3_statdistn_ng}
    \end{subfigure}%
    \hfill
    \begin{subfigure}[t]{0.48\textwidth}
        \centering
        \includegraphics[width=.8\textwidth]{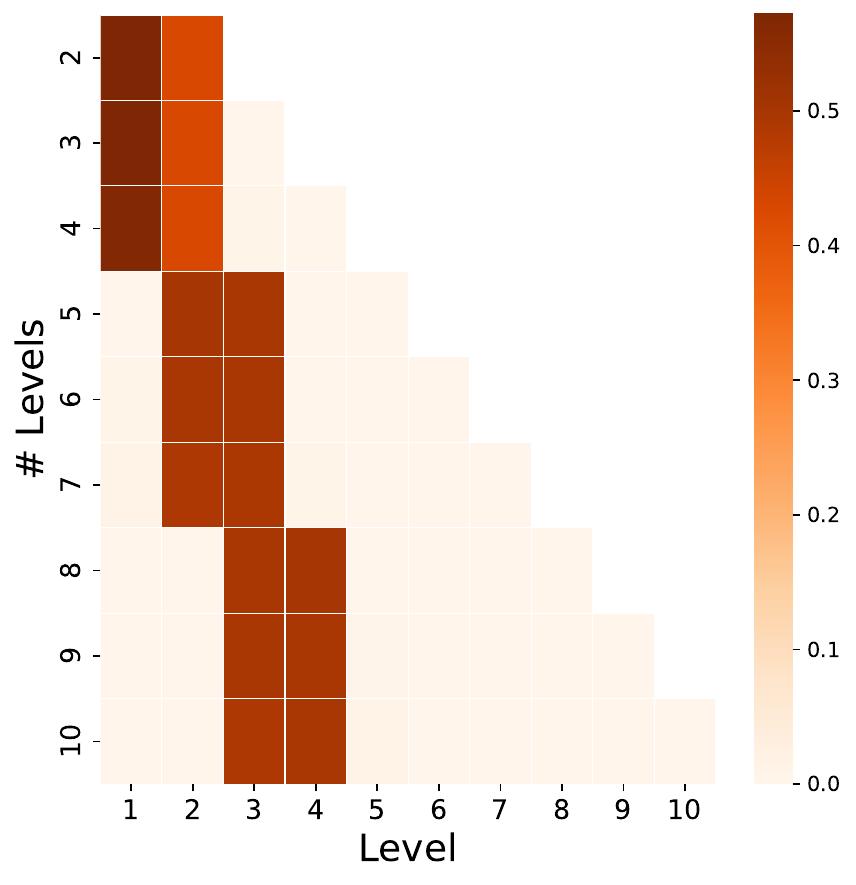}
        \caption{NI agents}
        \label{fig:ex3_statdistn_ni}
    \end{subfigure}
    \caption{Stationary distributions over levels under \textbf{polynomial abstention $h(\sigma)$} with $t=25.5$.}
    \label{fig:ex3_statdistn}
\end{figure}

\begin{figure}[htbp]
    \centering
    \begin{subfigure}[t]{0.48\textwidth}
        \centering
        \includegraphics[width=.8\textwidth]{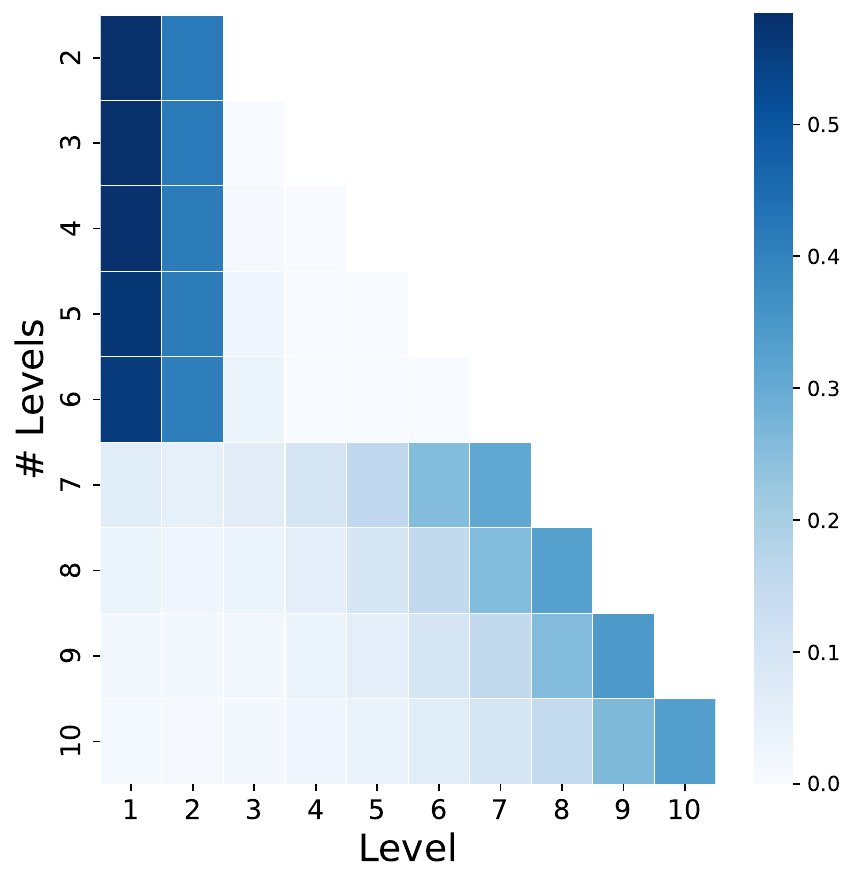}
        \caption{NG agents}
        \label{fig:ex2_statdistn_ng}
    \end{subfigure}%
    \hfill
    \begin{subfigure}[t]{0.48\textwidth}
        \centering
        \includegraphics[width=.8\textwidth]{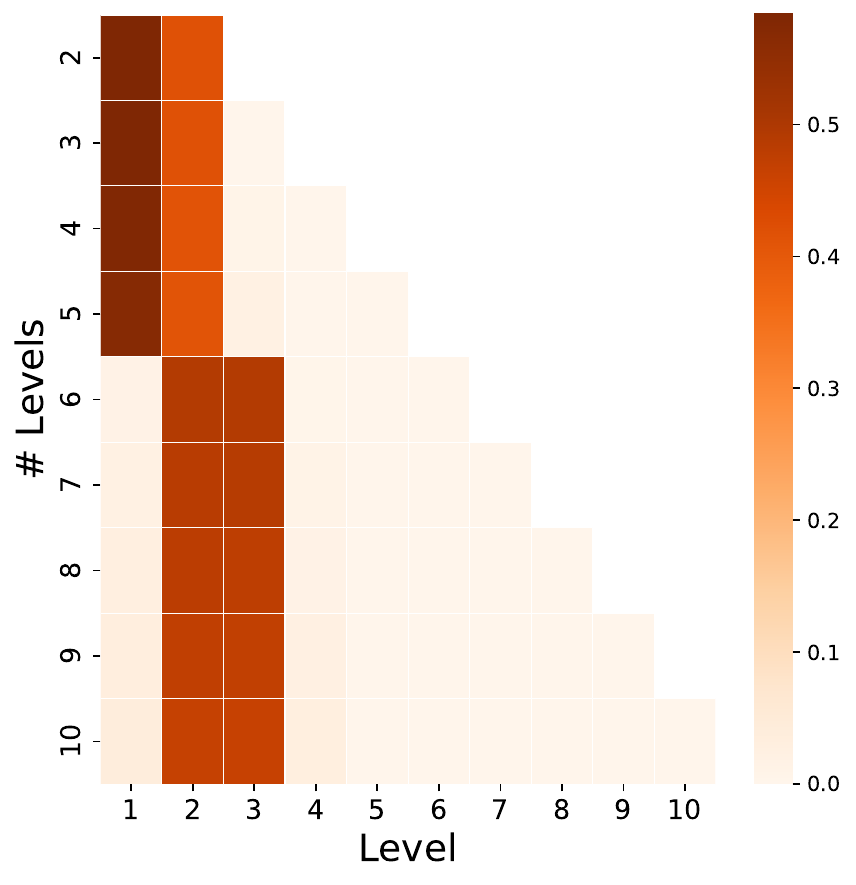}
        \caption{NI agents}
        \label{fig:ex2_statdistn_ni}
    \end{subfigure}
    \caption{Stationary distributions over levels under \textbf{absolute value abstention $h(\sigma)$}.}
    \label{fig:ex2_statdistn}
\end{figure}

\cref{fig:ex1_statdistn} compares the stationary distributions of agent levels across different total level counts for the NG and NI agents. Under the NG strategy, when the number of levels is high enough (above 3), we observe a clear diagonal pattern, where the agent tends to concentrate around the highest level attainable. In contrast, under the NI strategy, most of the density is distributed among two levels, which are towards the lower end of the level spectrum. The same trend is seen in \cref{fig:ex3_statdistn}, where we perform the same test but for the polynomial abstention function. For the absolute value abstention function, we see in \cref{fig:ex2_statdistn} that more levels are needed until the NG agent begins to concentrate at the higher levels.

\subsection{On the reward per level ($R_i$)}\label{app:r}
Here we explore how the size of the per level reward, denoted by $r$, influences agent average level and utility under NI and NG. We fix the number of levels to 10, use a polynomial abstention function, and sweep $r$ from 0 to 3. We use unit costs of effort 0.75 and 0.80 for NI and NG, respectively.

\begin{figure}[htbp]
    \centering
    \begin{subfigure}[t]{0.48\textwidth}
        \centering
        \includegraphics[width=\textwidth]{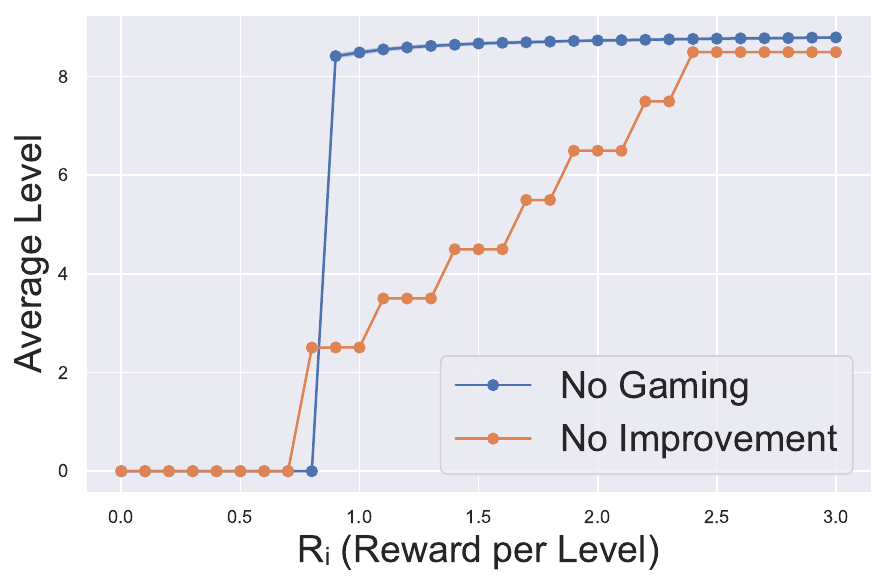}
        \caption{Average level.}
        \label{fig:ex6_level}
    \end{subfigure}
    \hfill
    \begin{subfigure}[t]{0.48\textwidth}
        \centering
        \includegraphics[width=\textwidth]{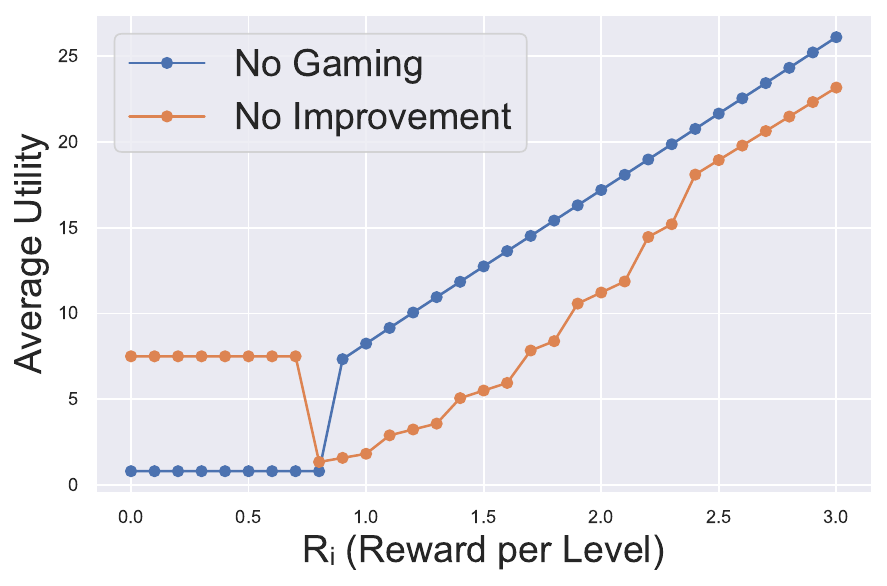}
        \caption{Average utility.}
        \label{fig:ex6_util}
    \end{subfigure}
    \caption{Agent's average level and utility versus $r$ $\pm$2 STD.}
    \label{fig:ex6_summary_panel}
\end{figure}

\cref{fig:ex6_level} shows how the agent's average level varies with reward per level under both NG and NI. When $r$ is below the unit cost of effort, both agent types remain at the lowest level since the marginal reward does not justify the cost of effort. When $r$ increases beyond the cost of effort, the NI agent gradually ascends through the levels, responding to the rise in incentives with more gaming effort. The NG agents show a steep increase in average level. Once $r$ increases beyond the cost of improvement, the average level of the NG agents rapidly jumps and saturates at around the highest level. This behavior appears because NG effort yields gains in true qualification, making it very rewarding when incentives are high enough.

\cref{fig:ex6_util} shows how the agent's average utility varies with reward per level. For the NI agent, utility starts off higher compared to the NG agent, because of the cheaper cost of gaming. For both the NG and NI agents, utility remains flat at lower $r$ values. This can be understood through the lens of the auxiliary function $G(i,z)$. When $r$ is small, the function may lack a clear minimizer or have multiple local peaks that do not satisfy \cref{asm:local-maximizer}, leading to preference for no action. As $r$ increases, the agent begins to exert effort, leading to an increase in utility. After a certain threshold, we see that our assumptions hold and the NG utility is higher than the NI utility.

\subsection{On the role of classifier sharpness $\alpha$}\label{app:alpha}
Here we explore how the sharpness of the classifier, controlled by $\alpha$, influences average level, utility, and qualification for NG and NI agents. We fix all other parameters and sweep $\alpha$ from 4 to 11. For each value of $\alpha$, we simulate both NI and NG agents using a parabolic, absolute value, and entropy-based abstention function. 

\begin{figure}[htbp]
    \centering
    \begin{subfigure}[t]{0.48\textwidth}
        \centering
        \includegraphics[width=\textwidth]{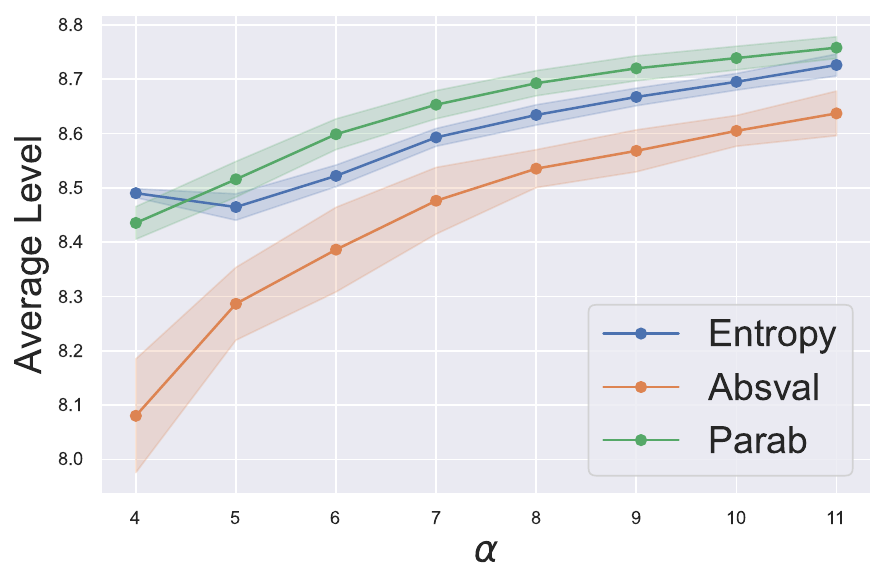}
        \caption{NG agents}
        \label{fig:ex8_level_ng}
    \end{subfigure}%
    \hfill
    \begin{subfigure}[t]{0.48\textwidth}
        \centering
        \includegraphics[width=\textwidth]{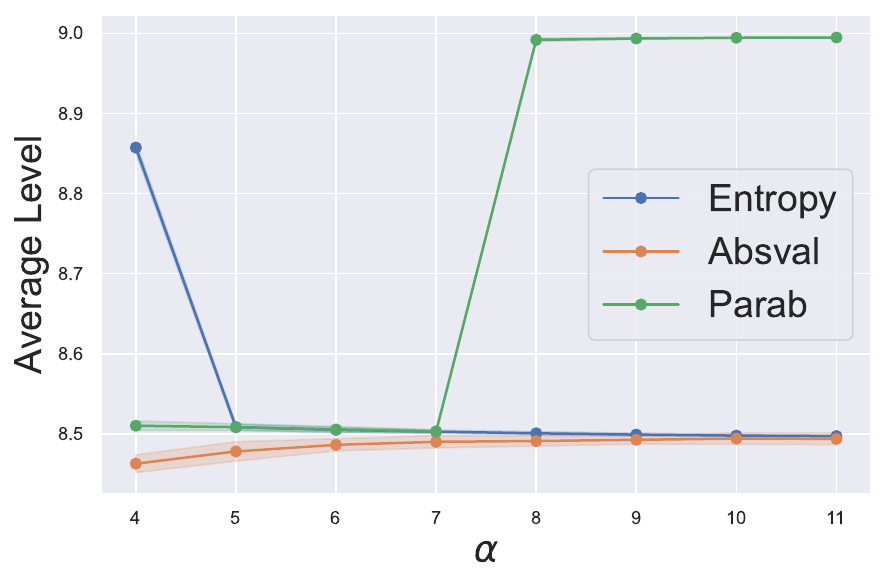}
        \caption{NI agents}
        \label{fig:ex8_level_ni}
    \end{subfigure}
    \caption{\textbf{Average level} versus $\alpha$ for NG and NI agents under different abstention functions $\pm$2 STD.}
    \label{fig:ex8_level_panel}
\end{figure}

\begin{figure}[htbp]
    \centering
    \begin{subfigure}[t]{0.48\textwidth}
        \centering
        \includegraphics[width=\textwidth]{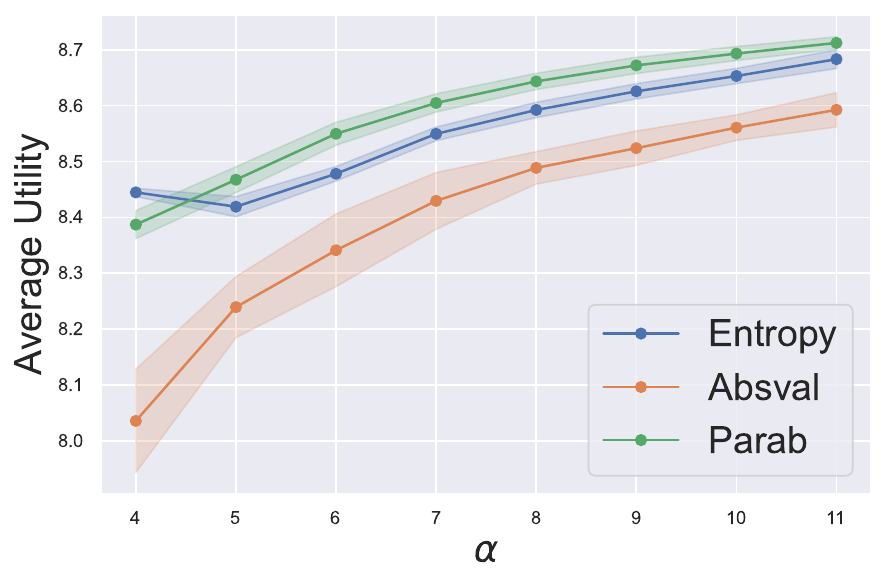}
        \caption{NG agents}
        \label{fig:ex8_util_ng}
    \end{subfigure}%
    \hfill
    \begin{subfigure}[t]{0.48\textwidth}
        \centering
        \includegraphics[width=\textwidth]{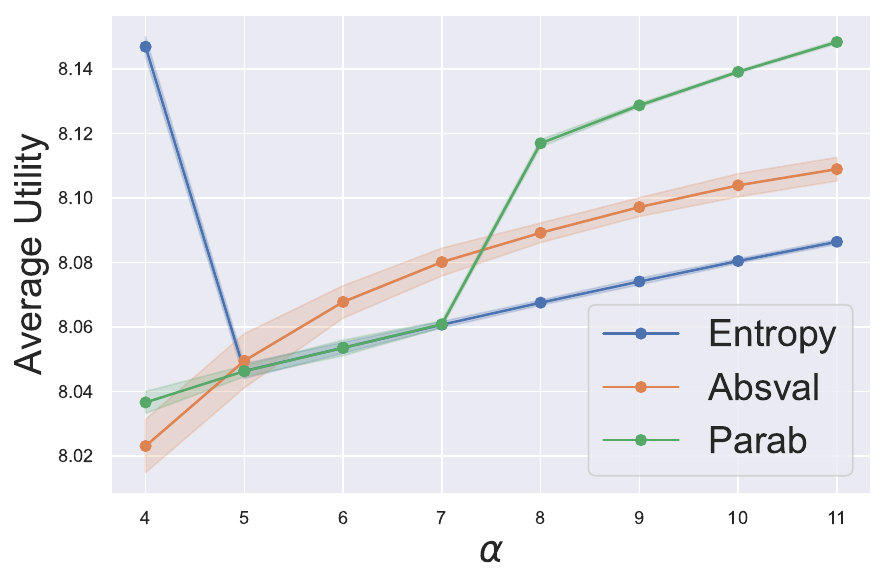}
        \caption{NI agents}
        \label{fig:util_ni}
    \end{subfigure}
    \caption{\textbf{Average utility} versus $\alpha$ under different abstention functions $\pm$2 STD.}
    \label{fig:ex8_util_panel}
\end{figure}

\begin{figure}[htbp]
    \centering
    \begin{subfigure}[t]{0.48\textwidth}
        \centering
        \includegraphics[width=\textwidth]{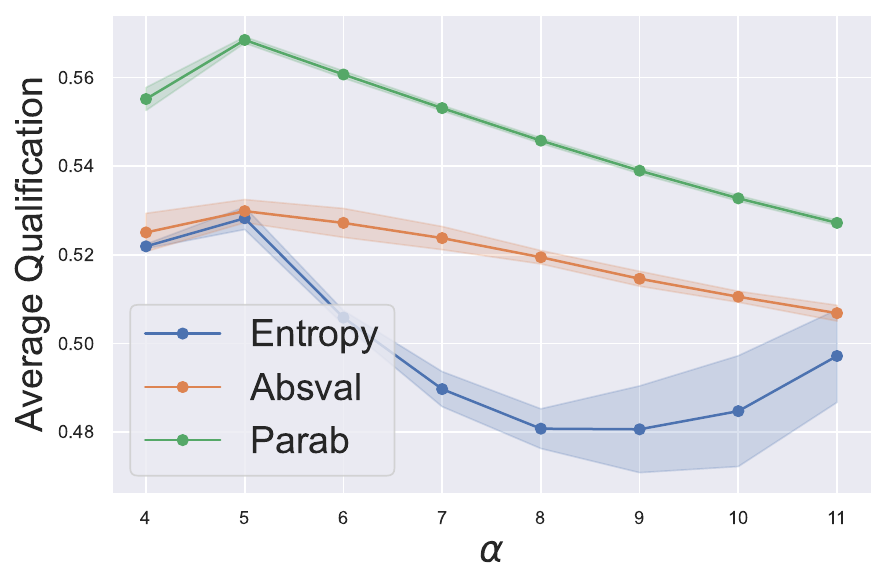}
        \caption{NG agents}
        \label{fig:qual_ng}
    \end{subfigure}%
    \hfill
    \begin{subfigure}[t]{0.48\textwidth}
        \centering
        \includegraphics[width=\textwidth]{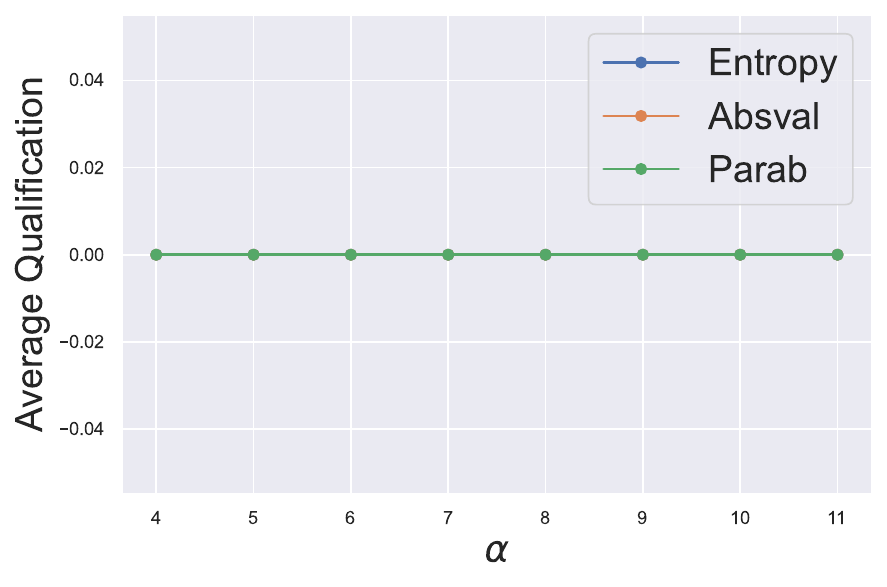}
        \caption{NI agents}
        \label{fig:qual_ni}
    \end{subfigure}
    \caption{\textbf{Average qualification} versus $\alpha$ under different abstention functions $\pm$2 STD.}
    \label{fig:ex8_qual_panel}
\end{figure}

\cref{fig:ex8_level_panel} shows that for the NG agents, increasing $\alpha$ generally leads to higher average levels. This trend suggests that as the classifier becomes sharper, it is more confident, enabling agents who improve their true attribute to ascend through the levels more effectively. For the NI agents, their average level is relatively flat for the absolute value abstention function, and shows a jump for the entropy and parabolic abstention functions. Overall, \cref{fig:ex8_level_panel} shows that increasing classifier sharpness $\alpha$ benefits improvement more consistently than gaming. A similar trend can be seen in \cref{fig:ex8_util_panel}, where in the NG setting, average utility generally increases as $\alpha$ increases, which is because that more confident classifications yield greater rewards. Finally, in \cref{fig:ex8_qual_panel} we see that the average qualification exhibits mild sensitivity to $\alpha$.

\subsection{On the role of abstention sensitivity $\tilde{\beta}$}\label{app:beta}
In this subsection, we test the influence of the abstention sensitivity parameter $\tilde{\beta}:=h(1/2)$, defined as the abstention probability when the classifier outcome is the most uncertain. 
As $\tilde{\beta}$ increases, the abstention function reacts more strongly to changes in classifier confidence. Here we fix all parameters except $\tilde{\beta}$ and sweep it from 0.1 to 0.6, and examine outcomes under the three different abstention functions mentioned in the previous experiments. 

\begin{figure}[htbp]
    \centering
    \begin{subfigure}[t]{0.48\textwidth}
        \centering
        \includegraphics[width=\textwidth]{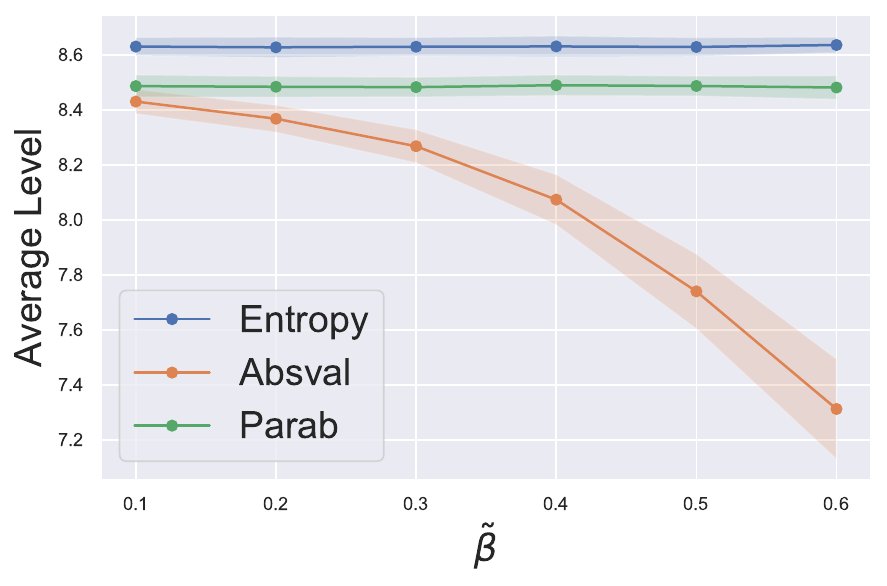}
        \caption{NG agents}
        \label{fig:ex9_level_ng}
    \end{subfigure}%
    \hfill
    \begin{subfigure}[t]{0.48\textwidth}
        \centering
        \includegraphics[width=\textwidth]{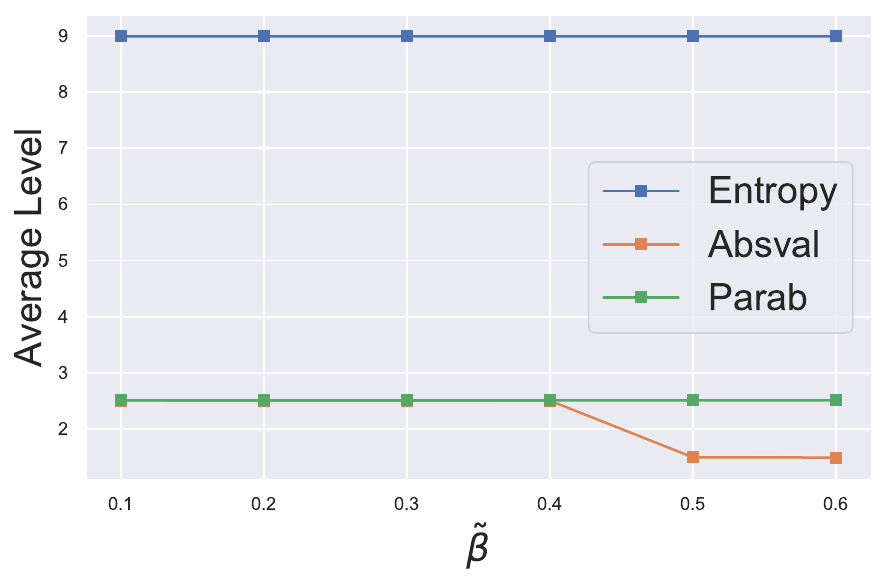}
        \caption{NI agents}
        \label{fig:ex9_level_ni}
    \end{subfigure}
    \caption{\textbf{Average level} versus $\tilde{\beta}$ for NG and NI agents under different abstention functions $\pm$2 STD.}
    \label{fig:ex9_level_panel}
\end{figure}

\begin{figure}[htbp]
    \centering
    \begin{subfigure}[t]{0.48\textwidth}
        \centering
        \includegraphics[width=\textwidth]{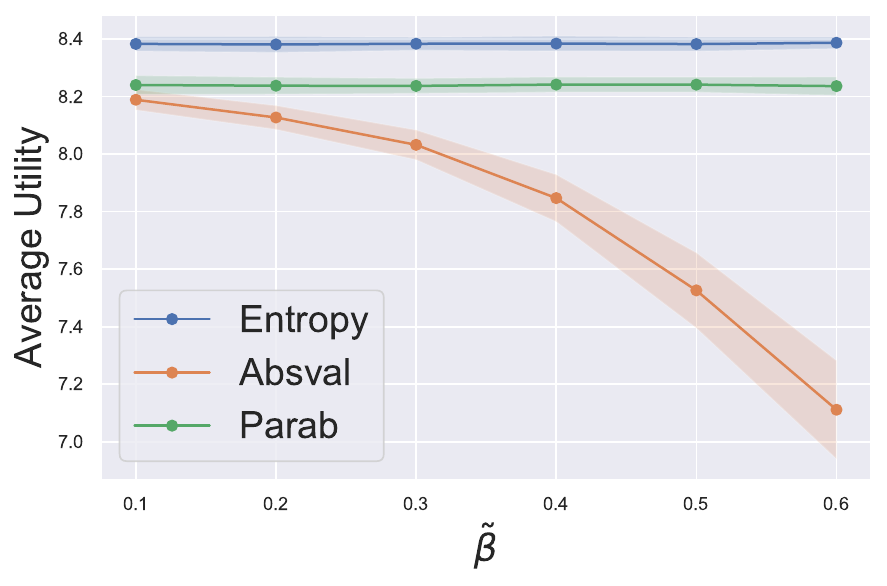}
        \caption{NG agents}
        \label{fig:ex9_util_ng}
    \end{subfigure}%
    \hfill
    \begin{subfigure}[t]{0.48\textwidth}
        \centering
        \includegraphics[width=\textwidth]{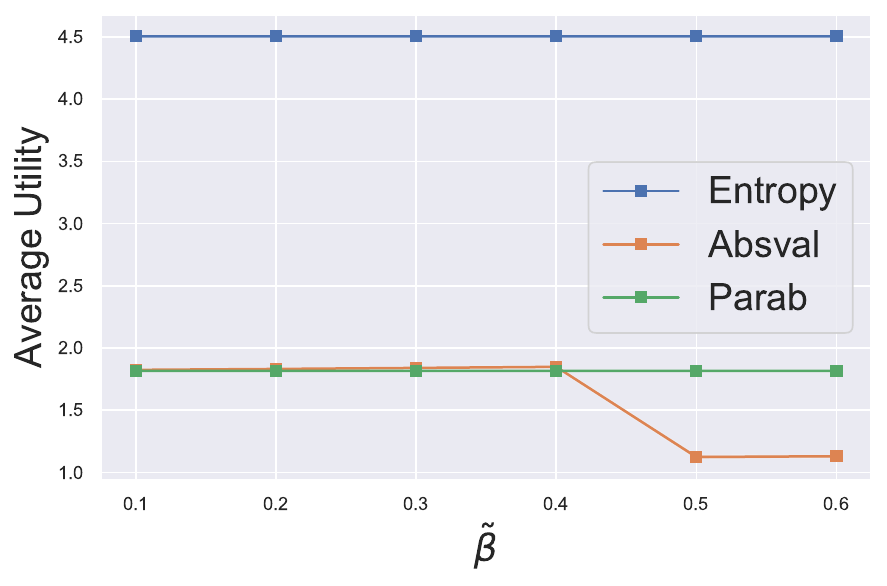}
        \caption{NI agents}
        \label{fig:ex9_util_ni}
    \end{subfigure}
    \caption{\textbf{Average utility} versus $\tilde{\beta}$ for NG and NI agents under different abstention functions $\pm$2 STD.}
    \label{fig:ex9_util_panel}
\end{figure}

\begin{figure}[htbp]
    \centering
    \begin{subfigure}[t]{0.48\textwidth}
        \centering
        \includegraphics[width=\textwidth]{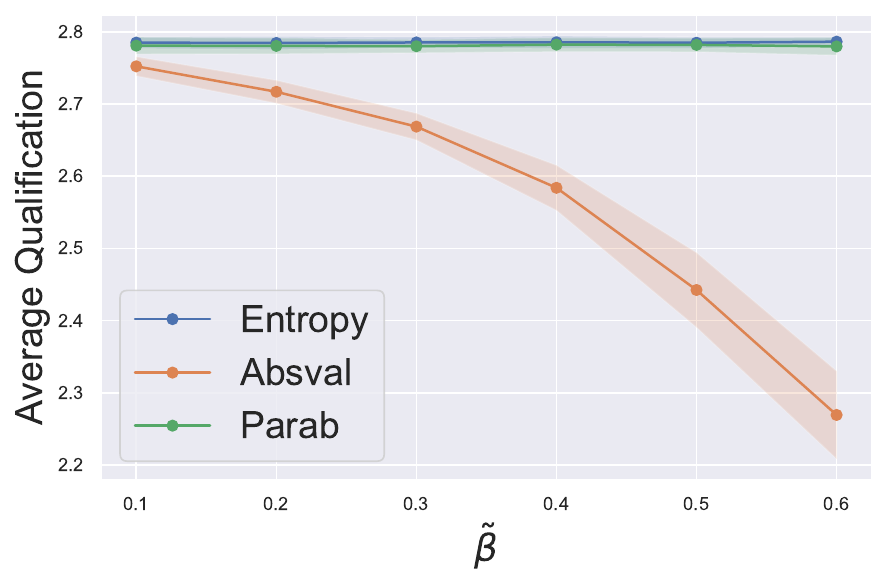}
        \caption{NG agents}
        \label{fig:ex9_qual_ng}
    \end{subfigure}%
    \hfill
    \begin{subfigure}[t]{0.48\textwidth}
        \centering
        \includegraphics[width=\textwidth]{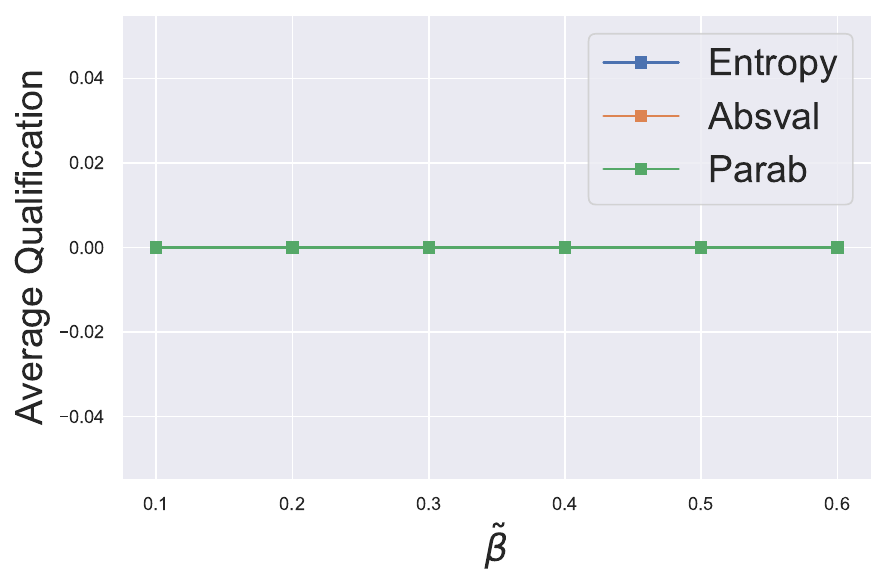}
        \caption{NI agents}
        \label{fig:ex9_qual_ni}
    \end{subfigure}
    \caption{\textbf{Average qualification} versus $\tilde{\beta}$ for NG and NI agents under different abstention functions $\pm$2 STD.}
    \label{fig:ex9_qual_panel}
\end{figure}

\cref{fig:ex9_level_panel} shows how varying the abstention sensitivity $\tilde{\beta}$ affects the average level of the NG and NI agents for all three abstention functions. Here we see that for the NG agent, the entropy and parabolic abstention functions are mildly sensitive to $\tilde{\beta}$. However, for the absolute value abstention function, we see a decrease in average level as $\tilde{\beta}$ increases. This is because the absolute value abstention function is more aggressive in penalizing uncertainty near the decision boundary. This suggests that having a very reactive abstention function can hurt NG agents. For the NI agents, we see that the entropy abstention function yields the highest average level. This is because the entropy abstention function is the most tolerant abstention function near the decision boundary. Since NI agents do not change their true qualification, a more tolerant abstention function can lead to more promotions and a higher level. For the average utility plot, \cref{fig:ex9_util_panel}, we observe the same trends. The same goes for the average qualification plot \cref{fig:ex9_qual_panel}.

\end{document}